
\documentclass{article}

\usepackage{microtype}
\usepackage{graphicx}
\usepackage{subfigure}
\usepackage{booktabs} 

\usepackage{hyperref}
\setlength{\textfloatsep}{1pt}


\usepackage[accepted]{icml2025}

\usepackage{amsmath}
\usepackage{amssymb}
\usepackage{mathtools}
\usepackage{amsthm}

\usepackage{titlesec} 

\titleformat{\paragraph}[runin] 
{\normalfont\normalsize\bfseries}{\theparagraph}{0.5em}{} 
\titlespacing{\paragraph}{0pt}{0.7ex}{0.5em} 

\usepackage[capitalize,noabbrev]{cleveref}

\theoremstyle{plain}
\newtheorem{theorem}{Theorem}[section]
\newtheorem{proposition}[theorem]{Proposition}
\newtheorem{lemma}[theorem]{Lemma}

\theoremstyle{definition}
\newtheorem{definition}[theorem]{Definition}

\theoremstyle{remark}
\newtheorem{example}{Example}
\newtheorem*{remark}{Remark}

\newcommand{\var}{\textit{Var}}
\renewcommand{\eqref}[1]{Eq.\,(\ref{#1})}
\allowdisplaybreaks[4] 

\usepackage{url}


\usepackage[utf8]{inputenc} 
\usepackage[T1]{fontenc}    
\usepackage{url}            
\usepackage{booktabs}       
\usepackage{amsfonts}       
\usepackage{nicefrac}       
\usepackage{microtype}      
\usepackage{xcolor}         

\usepackage{cleveref}

\newcommand{\kl}{\mathrm{D}_{\mathrm{KL}}}

\newcommand{\tv}{\mathrm{D}_{\mathrm{TV}}}

\renewcommand{\paragraph}[1]{
\noindent\textbf{#1}\hspace{0.3em}
}

\usepackage{titletoc}
\usepackage[titletoc]{appendix}

\usepackage[textsize=tiny]{todonotes}

\icmltitlerunning{Understanding Model Ensemble in Transferable Adversarial Attack}

\begin{document}

\twocolumn[
\icmltitle{Understanding Model Ensemble in Transferable Adversarial Attack}



\icmlsetsymbol{equal}{*}

\begin{icmlauthorlist}
\icmlauthor{Wei Yao}{equal,ruc}
\icmlauthor{Zeliang Zhang}{equal,hust}
\icmlauthor{Huayi Tang}{ruc}
\icmlauthor{Yong Liu}{ruc}
\end{icmlauthorlist}

\icmlaffiliation{ruc}{Gaoling School of Artificial Intelligence, Renmin University of China, Beijing, China}
\icmlaffiliation{hust}{Independent Researcher. Contributed ideas during the author's B.S. studies at Huazhong University of Science and Technology, Wuhan, China}

\icmlcorrespondingauthor{Yong Liu}{liuyonggsai@ruc.edu.cn}

\icmlkeywords{Machine Learning, ICML}

\vskip 0.3in
]



\printAffiliationsAndNotice{\icmlEqualContribution} 

\begin{abstract}
Model ensemble adversarial attack has become a powerful method for generating transferable adversarial examples that can target even unknown models, but its theoretical foundation remains underexplored. To address this gap, we provide early theoretical insights that serve as a roadmap for advancing model ensemble adversarial attack. We first define transferability error to measure the error in adversarial transferability, alongside concepts of diversity and empirical model ensemble Rademacher complexity. We then decompose the transferability error into vulnerability, diversity, and a constant, which rigidly explains the origin of transferability error in model ensemble attack: the vulnerability of an adversarial example to ensemble components, and the diversity of ensemble components. Furthermore, we apply the latest mathematical tools in information theory to bound the transferability error using complexity and generalization terms, validating three practical guidelines for reducing transferability error: (1) incorporating more surrogate models, (2) increasing their diversity, and (3) reducing their complexity in cases of overfitting. Finally, extensive experiments with 54 models validate our theoretical framework, representing a significant step forward in understanding transferable model ensemble adversarial attacks.
\end{abstract}

\section{Introduction}

Neural networks are highly vulnerable to adversarial examples~\citep{szegedy2013intriguing,goodfellow2014explaining}—perturbations that closely resemble the original data but can severely compromise safety-critical applications~\citep{zhang2019adversarial,kong2020physgan,bortsova2021adversarial}.
Even more concerning is the phenomenon of adversarial transferability~\citep{papernot2016transferability,liu2016delving}: adversarial examples crafted to deceive one model often succeed in attacking others. 
This property enables attacks without requiring any knowledge of the target model, significantly complicating efforts to ensure the robustness of neural networks~\citep{dong2019evading,silva2020opportunities}.

To enhance adversarial transferability, researchers have proposed a range of algorithms that fall into three main categories: input transformation~\citep{xie2019improving,wang2021admix}, gradient-based optimization~\citep{gao2020patch,xiong2022stochastic}, and model ensemble attacks~\citep{li2020learning,chen2023rethinking}. 
Among these, model ensemble attacks have proven especially powerful, as they leverage multiple models to simultaneously generate adversarial examples that exploit the strengths of each individual model~\citep{dong2018boosting}. Moreover, these attacks can be combined with input transformation and gradient-based optimization methods to further improve their effectiveness~\citep{tang2024ensemble}.
However, 
despite the success of such attacks, their theoretical foundation remains poorly understood.
This prompts an important question:
\textit{Can we establish a theoretical framework for transferable model ensemble adversarial attacks to shape the evolution of future algorithms?}


To conduct a preliminary exploration of this profound question, 
we propose three novel definitions as a prerequisite of our theoretical framework.
Firstly, we define \textit{transferability error} as the gap in expected loss between an adversarial example and the one with the highest loss within a feasible region of the input space.
It captures the ability of an adversarial example to generalize across unseen models, representing its transferability.
Secondly, we introduce \textit{prediction variance} across the ensemble classifiers.
It offers a novel perspective on quantifying diversity in model ensemble attacks, providing a fresh approach to guide the selection of ensemble components.
Finally, we also introduce the \textit{empirical model ensemble Rademacher complexity}, inspired by Rademacher complexity~\citep{bartlett2002rademacher}, as a measure of the flexibility of ensemble components.

\begin{figure*}[t]
\centering
\includegraphics[width=0.95\linewidth]{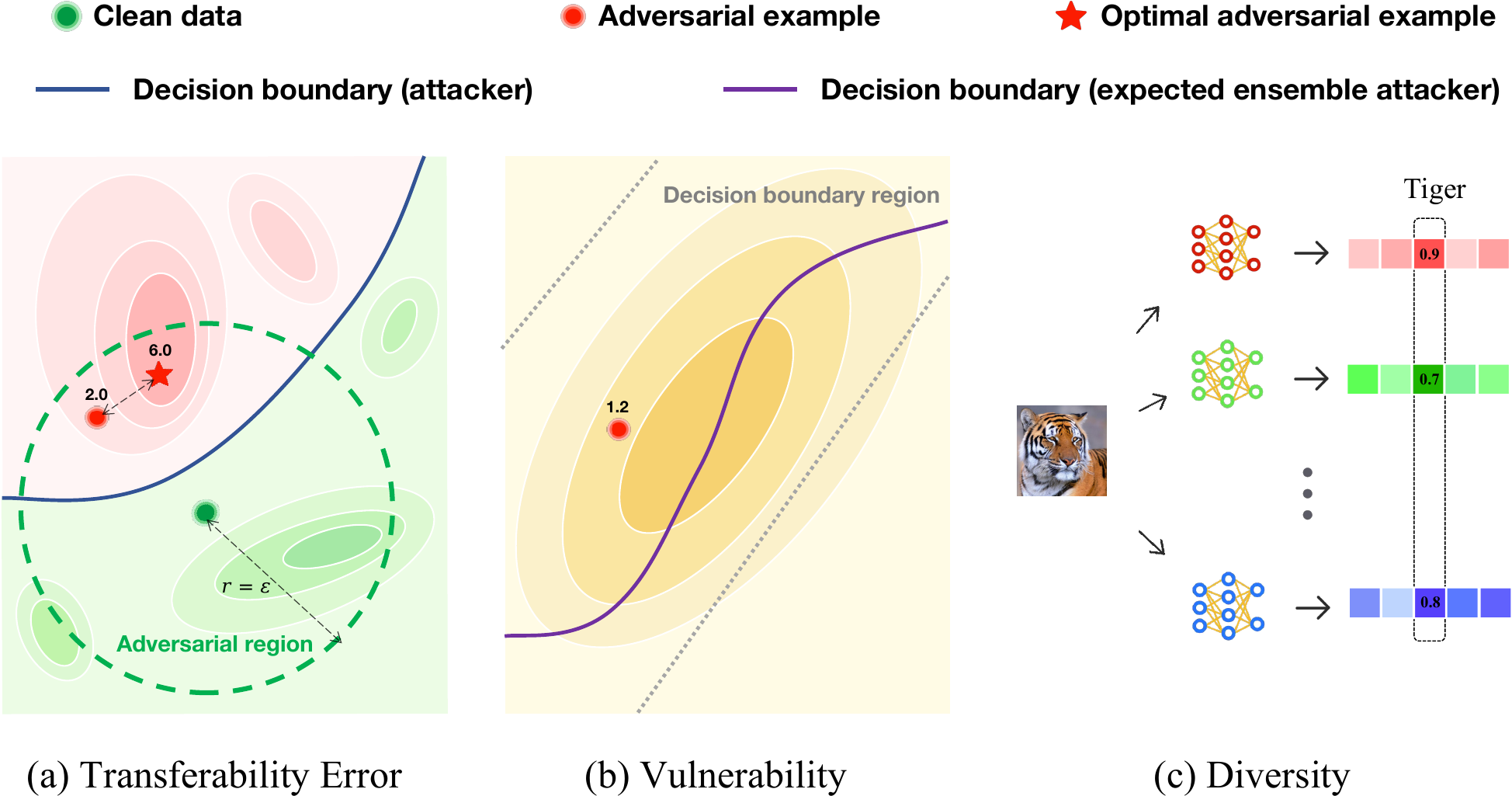}
\caption{Vulnerability-diversity decomposition of transferability error. 
(a) The transferability error is defined as the difference in expected loss value between a given adversarial example and the most transferable one.
(b) Vulnerability is the loss value of the expected ensemble classifier on the adversarial example.
(c) Diversity is the variance in model ensemble predictions that correspond to the correct class.
}
\vspace{-5pt}
\label{figure:intro}
\end{figure*}

With these three definitions, we offer two key theoretical insights. 
First, we show the \textbf{\textit{vulnerability-diversity decomposition}} of transferability error (Figure~\ref{figure:intro}), highlighting the preference for ensemble components that are powerful attackers and induce greater prediction variance among themselves. 
However, this also uncovers a \textit{fundamental trade-off between vulnerability and diversity}, making it challenging to maximize both simultaneously.
To mitigate this issue and provide more practical guidelines, we present an upper bound for transferability error, incorporating empirical model ensemble Rademacher complexity and a generalization term. 
The primary challenge in proof lies in the application of cutting-edge mathematical tools from information theory~\citep{esposito2024concentration}, which are crucial for addressing the complex issue of relaxing the independence assumption among surrogate classifiers.
Our theoretical analysis leads to a crucial takeaway for practitioners: Including \textbf{\textit{more and diverse surrogate models}} with \textbf{\textit{reduced model complexity in cases of overfitting}} helps tighten the transferability error bound, thereby improving adversarial transferability.
Finally, the experimental results support the soundness of our theoretical framework, highlighting a key step forward in the deeper understanding of transferable model ensemble adversarial attacks.




\section{Related Work}

\subsection{Transferable Adversarial Attack}

Researchers have developed various algorithms to enhance adversial transferability.
Most of them fall into three categories: input transformation, gradient-based optimization, and model ensemble attack.
\textbf{Input transformation} techniques apply data augmentation strategies to prevent overfitting to the surrogate model. 
For instance, random resizing and padding~\citep{xie2019improving}, downscaling~\citep{lin2019nesterov},
and mixing~\citep{wang2021admix}.
\textbf{Gradient-based optimization} optimizes the generation of adversarial examples to achieve better transferability.
Some popular ideas include applying momentum~\citep{dong2018boosting}, Nesterov accelerated gradient~\citep{lin2019nesterov}, scheduled step size~\citep{gao2020patch} and gradient variance reduction~\citep{xiong2022stochastic}. 
\textbf{Model ensemble attack} combine outputs from surrogate models to create an ensemble loss, increasing the likelihood to deceive various models simultaneously.
It can be applied collectively with both input transformation and gradient-based optimization algorithms~\citep{tang2024ensemble}.
Some popular ensemble paradigms include loss-based ensemble~\citep{dong2018boosting}, prediction-based~\citep{liu2016delving}, logit-based ensemble~\citep{dong2018boosting}, and longitudinal strategy~\citep{li2020learning}. 
Moreover, advanced ensemble algorithms have been created to ensure better adversarial transferability~\citep{li2023making,wu2024improving,chen2023rethinking}.
An extended and detailed summary of related work is in Appendix~\ref{appendix:related}.

Within the extensive body of research on model ensemble attacks, two notable and intriguing observations stand out. 
First, increasing the number of models in an ensemble improves adversarial transferability~\citep{liu2016delving,dong2018boosting,lin2019nesterov,gubri2022lgv,liu2024scaling}. 
Second, using more diverse surrogate models with varying architectures and back-propagated gradients~\citep{tang2024ensemble} further enhances transferability. 
However, to our best knowledge, these intriguing phenomena have yet to be fully understood from a theoretical perspective. 
In this paper, we present the first theoretical framework to explain these phenomena, providing actionable insights that pave the way for future algorithm design.


\subsection{Theoretical Understanding of Adversarial Transferability}
In contrast to the wealth of empirical and intuitive studies, research on the theoretical understanding of adversarial transferability remains limited. 
Recent efforts have primarily focused on aspects such as data~\citep{tramer2017space}, surrogate model~\citep{wang2023role}, optimization~\citep{yang2021trs,zhang2024does,chen2023rethinking,fan2024transferability} and target model~\citep{zhao2023minimizing}. 
\citet{tramer2017space} investigates the space of transferable adversarial examples and establishes conditions on the data distribution that suggest transferability for some basic models.
In terms of the surrogate model generalization, \citet{wang2023role} builds the generalization gap to show that a surrogate model with a smaller generalization error leads to more transferable adversarial examples. 
From an optimization perspective, \citet{yang2021trs,zhang2024does} establish upper and lower bounds on adversarial transferability, linking it to model smoothness and gradient similarity.
They suggest that increased surrogate model smoothness and less loss gradient similarity improve transferability.
\citet{chen2023rethinking} provide theoretical evidence connecting transferability to loss landscape flatness and closeness to local optima. 
\citet{fan2024transferability} decompose adversarial transferability into local effectiveness and transfer-related loss, suggesting that flatness alone is insufficient to determine the whole picture of adversarial transferability.
Regarding the target model, \citet{zhao2023minimizing} theoretically reveal that reducing the discrepancy between the surrogate and target models can limit adversarial transferability.

Despite these theoretical advances, to the best of our knowledge, \textit{transferable model ensemble adversarial attacks remain unexplored}.
To address this gap, we take a pioneering step by presenting the first theoretical analysis of such attacks. 
Our work not only offers theoretical insights into these attacks but also incorporates recent advancements in information theory, laying the groundwork for future theoretical investigations into adversarial transferability.




\section{Key Definitions: Transferability Error, Diversity, and Ensemble Complexity} \label{subsec::def_adv_transfer}

In this section, we first highlight the fundamental goal of model ensemble adversarial attack (Section~\ref{section:transfer_err}). 
Then we define the transferability error (Section~\ref{subsection:transfer_error_def}), diversity in transferable model ensemble attack (Section~\ref{subsec::def_diversity}) and empirical model ensemble Rademacher complexity (Section~\ref{sec:empirical_rad_complex}).

\subsection{Model Ensemble Adversarial Attack} \label{section:transfer_err}

Given the input space $\mathcal{X} \subset \mathbb{R}^d$ and the output space $\mathcal{Y} \subset \mathbb{R}$, we have a joint distribution $\mathcal{P}_\mathcal{Z}$ over the input space $\mathcal{Z} = \mathcal{X} \times \mathcal{Y}$. 
The training set $Z_{\text{train}} = \{ z_i| z_i=(x_i, y_i) \in \mathcal{Z}, y_i \in \{ -1,1 \}, i=1, \cdots, K \}$, which consists of $K$ examples drawn independently from $\mathcal{P}_\mathcal{Z}$.
We denote the hypothesis space by $\mathcal{H}: \mathcal{X} \mapsto \mathcal{Y}$ and the parameter space by $\Theta$. Let $f(\theta;\cdot) \in \mathcal{H}$ be a classifier parameterized by $\theta \in \Theta$, trained for a classification task using a loss function $\ell: \mathcal{Y} \times \mathcal{Y} \mapsto \mathbb{R}_0^+$.
Let $\mathcal{P}_\Theta$ represent the distribution over the parameter space $\Theta$. Define $\mathcal{P}_{\Theta^N}$ as the joint distribution over the product space $\Theta^N$, which denotes the space of $N$ such sets of parameters.
We use $Z_{\text{train}}$ to train $N$ surrogate models $f(\theta_1;\cdot), \cdots, f(\theta_N;\cdot)$ for model ensemble. 
The training process of these $N$ classifiers can be viewed as sampling the parameter sets $(\theta_1, \ldots, \theta_N)$ from the distribution $\mathcal{P}_{\Theta^N}$.
For a clean data $\hat{z}=(\hat{x},y) \in \mathcal{Z}$, an adversarial example $z=(x,y) \in \mathcal{Z}$, and $N$ classifiers for model ensemble attack, define the population risk $L_P(z)$ and the empirical risk $L_E(z)$ of the adversarial example $z$ as
\begin{align}
& L_P(z) =  \mathbb{E}_{\theta \sim \mathcal{P}_\Theta} [\ell(f(\theta;x), y)], \label{equation:pr} \\
\text{and} \quad & L_E(z) = \frac{1}{N} \sum_{i=1}^N \ell(f(\theta_i;x), y). \label{equation:er}
\end{align}

Intuitively, a transferable adversarial example leads to a large $L_P(z)$ because it can attack many classifiers with parameter $\theta \in \Theta$.
Therefore, the most transferable adversarial example $z^* = (x^*, y)$ around $z$ is defined as
\begin{align}
    x^* = \arg \max_{x \in \mathcal{B}_\epsilon(\hat{x})} L_P(z),  \label{equation:optim} 
\end{align}
where $\mathcal{B}_\epsilon(\hat{x})=\{ x: \| x-\hat{x} \|_2 \le \epsilon\}$ is an adversarial region centered at $\hat{x}$ with radius $\epsilon>0$.
However, the expectation in $L_P(z)$ cannot be computed directly. Thus, when generating adversarial examples, the empirical version \eqref{equation:er} is used in practice, such as loss-based ensemble attack~\citep{dong2018boosting}.
Therefore, the adversarial example $z=(x,y)$ is obtained from the following equation
\begin{align} \label{equation:transfer_adv}
    x = \arg \max_{x \in \mathcal{B}_\epsilon(x)} L_E(z).
\end{align}
There is a gap between the adversarial example $z$ we find and the most transferable one $z^*$. It is due to the fact that the ensemble classifiers cannot cover the whole parameter space of the classifier, i.e., there is a difference between $L_P(z)$ and $L_E(z)$.
Accordingly,  
the core objective of transferable model ensemble attack is to design approaches that approximate $L_E(z)$ to $L_P(z)$, thereby increasing the transferability of adversarial examples.

\subsection{Transferability Error} \label{subsection:transfer_error_def}

Considering the difference between $z$ and $z^*$, the transferability of $z$ can be characterized as the difference in population risk between it and the optimal one.

\begin{definition}[Transferability Error]
     The transferability error of $z$ with radius $\epsilon$ is defined as:
\begin{equation} \label{define:eq:te}
    TE(z,\epsilon) = L_P(z^*) - L_P(z). 
\end{equation}
\end{definition}

There always holds $TE(z,\epsilon) \ge 0$ as $L_P(z^*) \ge L_P(z)$. The closer $TE(z,\epsilon)$ is to $0$, the better the transferability of $z$. 
Therefore, in principle, the essential goal of various model ensemble attack algorithms is to make transferability error $TE(z,\epsilon)$ as small as possible. 
Moreover, if the distribution over the parameter space $\mathcal{P}_\Theta$, adversarial region $\mathcal{B}_\epsilon(x)$ and loss function $\ell$ are fixed, then $L_P(z^*)$ becomes a constant, which means that the goal of minimizing $TE(z,\epsilon)$ becomes maximizing $L_P(z)$. 



In the following lemma, we will show how the difference between empirical risk and population risk affects the transferability error of $z$.
The proof is in Appendix~\ref{appendix:transfer_generalization}. 

\begin{lemma} \label{lemma:te}
The transferability error defined by \eqref{define:eq:te} is bounded by the largest absolute difference between $L_P(z)$ and $L_E(z)$, i.e.,
\begin{align}
TE(z,\epsilon) \le 2\sup_{z \in \mathcal{Z}} \left| L_P(z) - L_E(z) \right|.
\end{align}
\end{lemma}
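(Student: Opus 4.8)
The plan is to invoke the classical add-and-subtract decomposition familiar from uniform-convergence arguments in statistical learning theory. Writing $\Delta$ for the quantity $\sup_{z \in \mathcal{Z}} |L_P(z) - L_E(z)|$ appearing on the right-hand side, I would split the transferability error through the empirical maximizer $z$:
\begin{align}
TE(z,\epsilon) &= L_P(z^*) - L_P(z) \notag \\
&= \underbrace{[L_P(z^*) - L_E(z^*)]}_{(\mathrm{I})} + \underbrace{[L_E(z^*) - L_E(z)]}_{(\mathrm{II})} + \underbrace{[L_E(z) - L_P(z)]}_{(\mathrm{III})}.
\end{align}
Here both $z^*$ and $z$ denote feasible perturbations of the same clean input $\hat{x}$, i.e.\ points of the common region $\mathcal{B}_\epsilon(\hat{x})$.

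First I would control the two outer terms. Each of $(\mathrm{I})$ and $(\mathrm{III})$ is the signed gap between population and empirical risk evaluated at a single point of $\mathcal{Z}$, so by the very definition of the supremum each is bounded above by $\Delta$. This step is purely definitional and applies pointwise, independent of how $z^*$ and $z$ were produced.

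The crux of the argument is to show that the middle term $(\mathrm{II})$ is non-positive. The key observation is that $z$ is, by construction, the maximizer of $L_E$ over $\mathcal{B}_\epsilon(\hat{x})$ (this is exactly \eqref{equation:transfer_adv}), whereas $z^*$ is merely one admissible point of that same region. Optimality of $z$ therefore yields $L_E(z) \ge L_E(z^*)$, so that $(\mathrm{II}) = L_E(z^*) - L_E(z) \le 0$. Combining the three estimates gives $TE(z,\epsilon) \le \Delta + 0 + \Delta = 2\Delta$, which is the claimed bound. I expect the only genuine obstacle to be notational rather than analytic: one must confirm that the two $\arg\max$ problems defining $z^*$ and $z$ range over the \emph{identical} feasible set $\mathcal{B}_\epsilon(\hat{x})$, since it is precisely this shared constraint that lets the single optimality inequality $L_E(z) \ge L_E(z^*)$ carry the entire proof.
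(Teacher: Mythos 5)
Your proposal is correct and is essentially the paper's own argument: the paper likewise inserts the term $L_E(z)-L_E(z^*)$, justified by the optimality of $z$ as the empirical maximizer over the shared feasible region $\mathcal{B}_\epsilon(\hat{x})$, and then bounds the two remaining signed gaps $L_P(z^*)-L_E(z^*)$ and $L_E(z)-L_P(z)$ by the supremum of $\left| L_P - L_E \right|$ over $\mathcal{Z}$. Your explicit three-term decomposition and the observation that the middle term is non-positive is just a slight reorganization of the same steps.
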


The lemma strictly states that if we can bound the difference between $L_P(z)$ and $L_E(z)$, the transferability error can be constrained to a small value, thereby enhancing adversarial transferability.
This indicates that we can develop strategies to make $L_E(z)$ closely approximate $L_P(z)$, ultimately improving the transferability of adversarial examples.

\subsection{Quantifying Diversity in Model Ensemble Attack} \label{subsec::def_diversity}

Before the advent of model ensemble attacks, the formal definition of diversity in ensemble learning had remained a long-standing challenge for decades~\citep{wood2023unified}.
While diverse intuitive definitions of diversity exist in the model ensemble attack literature~\citep{li2020learning,yang2021trs,tang2024ensemble}, we bridge the gap between transferable model ensemble attacks and recent advancements in ensemble learning theory~\citep{ortega2022diversity,wood2023unified}.
Specifically, we propose measuring diversity among ensemble attack classifiers through prediction variance.
\begin{definition}[Diversity of Model Ensemble Attack] \label{definition:diversity}
The diversity of model ensemble attack across $\theta \sim \mathcal{P}_\Theta$ for a specific adversarial example $z=(x,y)$ is defined as the variance of model prediction:
\begin{align}
    \var_{\theta \sim \mathcal{P}_\Theta}\left( f(\theta;x) \right) = \mathbb{E}_{\theta \sim \mathcal{P}_\Theta} \left[ f(\theta;x) - \mathbb{E}_{\theta \sim \mathcal{P}_\Theta} f(\theta;x) \right]^2. \label{eq:diversity_var}
\end{align}
\end{definition}

It indicates the degree of dispersion in the predictions of different ensemble classifiers for the same adversarial example.
The diversity of model ensemble attack is a measure of ensemble member disagreement, independent of the label.
From an intuitive perspective,
the disagreement among the ensemble components helps prevent the adversarial example from overfitting to the classifiers in the ensemble, thereby enhancing adversarial transferability to some extent.

To calculate the diversity explicitly as a metric, we consider a dataset of adversarial examples $Z_{\text{attack}} = \{ z_i| z_i=(x_i, y_i), i=1, \cdots, M \}$ and $N$ classifiers in the ensemble.
The diversity is computed as the average sample variance of predictions for all adversarial examples in the dataset:
$$\frac{1}{M} \sum_{i=1}^{M} \Bigg[ \frac{1}{N} \sum_{j=1}^{N} \bigg( f(\theta_j;x_i) - \frac{1}{N} \sum_{j=1}^{N}f(\theta_j;x_i) \bigg)^2 \Bigg].$$


\begin{remark}
For multi-class classification problems, $f(\theta;x)$ is replaced with the logit corresponding to the correct class prediction made by the classifier. 
\end{remark}



\subsection{Empirical Model Ensemble Rademacher Complexity} \label{sec:empirical_rad_complex}

We define the empirical Rademacher complexity for model ensemble by analogy to the original empirical Rademacher complexity~\citep{koltchinskii2000rademacher,bartlett2002rademacher}.

\begin{definition}[Empirical Model Ensemble Rademacher Complexity]
Given the input space $\mathcal{Z} = \mathcal{X} \times \mathcal{Y}$ and
$N$ classifiers $f(\theta_1;\cdot), \cdots, f(\theta_N;\cdot)$.
Let $\boldsymbol{\sigma}=\{ \sigma_i \}_{i \in [N]}$ be a collection of independent Rademacher variables, which are random variables taking values uniformly in $\{ +1, -1 \}$. 
We define the empirical model ensemble Rademacher complexity $\mathcal{R}_{N}(\mathcal{Z})$ as follows:
\begin{equation} \label{general_rad}
    \mathcal{R}_{N}(\mathcal{Z}) = \mathop{\mathbb{E}}\limits_{ \boldsymbol{\sigma}} \left[ \sup_{z \in \mathcal{Z}} \frac{1}{N} \sum_{i=1}^N \sigma_i \ell(f(\theta_i;x),y) \right].
\end{equation}
\end{definition}
In conventional settings of machine learning, the empirical Rademacher complexity captures how well models from a function class can fit a dataset with random noisy labels~\citep{shalev2014understanding}. 
A sufficiently complex function class includes functions that can effectively fit arbitrary label assignments, thereby maximizing the complexity term~\citep{mohri2018foundations}.
Likewise, in model ensemble attack, \eqref{general_rad} is expected to measure the complexity of the input space $\mathcal{Z}$ relative to the $N$ classifiers.
Some extreme cases are analyzed in Appendix~\ref{appendix:analyze_rad}.

\section{Theoretically Reduce Transferability Error} \label{subsec::bound_adv_trans}

\subsection{Vulnerability-diversity Decomposition of Transferability Error} \label{subsec:vul_div_decom}

Inspired by the bias-variance decomposition~\citep{geman1992neural,domingos2000unified} in learning theory,
we provide the corresponding theoretical support for prediction variance by decomposing the transferability error into vulnerability, diversity and constants.
The proof and the empirical version of it is in Appendix~\ref{appendix:diversity}.

\begin{theorem}[Vulnerability-diversity Decomposition] \label{theorem:diversity}
For a data point $z=(x,y)$, we consider the squared error loss $l(f(\theta;x), y)=\left[ f(\theta;x) - y \right]^2$.
Let $\tilde{f}(\theta;x)=\mathbb{E}_{\theta \sim \mathcal{P}_\Theta} f(\theta;x)$ be the expectation of prediction over the distribution on the parameter space.
Then there holds
\begin{align} \label{eq::diversity}
    TE(z,\epsilon) = L_P(z^*) - \underbrace{l(\tilde{f}(\theta;x), y)}_{\text{Vulnerability}} - \underbrace{\var_{\theta \sim \mathcal{P}_\Theta} f(\theta;x)}_{\text{Diversity}}.
\end{align}
\end{theorem}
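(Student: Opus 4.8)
The plan is to recognize \eqref{eq::diversity} as nothing more than the classical bias-variance decomposition, applied to the population risk $L_P(z)$ where the randomness is taken over the parameter distribution $\mathcal{P}_\Theta$ instead of over a training sample. Since $TE(z,\epsilon) = L_P(z^*) - L_P(z)$ by definition, and the constant $L_P(z^*)$ already appears verbatim on the right-hand side of \eqref{eq::diversity}, it suffices to prove the single identity
\begin{equation*}
L_P(z) = l(\tilde{f}(\theta;x), y) + \var_{\theta \sim \mathcal{P}_\Theta} f(\theta;x),
\end{equation*}
after which the theorem follows immediately by subtraction.

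First I would substitute the squared-error loss into \eqref{equation:pr} to obtain $L_P(z) = \mathbb{E}_{\theta \sim \mathcal{P}_\Theta}[(f(\theta;x) - y)^2]$, and then insert $\pm\, \tilde{f}(\theta;x)$ inside the square. Expanding yields three terms: the expected squared deviation $\mathbb{E}_{\theta}[f(\theta;x) - \tilde{f}(\theta;x)]^2$, the squared offset $(\tilde{f}(\theta;x) - y)^2$, and a cross term. The key step is to factor the deterministic scalar $\tilde{f}(\theta;x) - y$ out of the cross-term expectation, leaving $2(\tilde{f}(\theta;x) - y)\, \mathbb{E}_{\theta}[f(\theta;x) - \tilde{f}(\theta;x)]$, which vanishes because $\tilde{f}(\theta;x) = \mathbb{E}_{\theta \sim \mathcal{P}_\Theta} f(\theta;x)$ is exactly the mean. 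The first surviving term is the diversity $\var_{\theta \sim \mathcal{P}_\Theta} f(\theta;x)$ by \eqref{eq:diversity_var}, and the second is the vulnerability $l(\tilde{f}(\theta;x), y)$ by definition of the squared-error loss.

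I expect no genuine obstacle in this proof: it is a direct transcription of the Pythagorean identity underlying bias-variance decomposition, with the \emph{bias} reinterpreted as vulnerability and the \emph{variance} as diversity. The only points demanding care are bookkeeping ones: confirming that $\tilde{f}(\theta;x)$ behaves as a constant (being an expectation over $\theta$) when pulled out of the cross term, and matching the two surviving terms precisely to the quantities named in the theorem. The conceptual payoff lies in the interpretation rather than the derivation---since both terms are subtracted from the fixed constant $L_P(z^*)$, reducing transferability error amounts to jointly increasing the vulnerability of the mean ensemble prediction and the diversity of the individual predictions, which is the trade-off the paper goes on to emphasize.
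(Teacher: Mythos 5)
Your proposal is correct and follows essentially the same route as the paper: the paper proves a slightly more general statement allowing label noise $y = g(x) + \rho$ (which produces an extra irreducible-error constant $\eta^2$) and then sets $\rho = 0$ to obtain the theorem, but its core step---inserting $\pm\,\tilde{f}(\theta;x)$ inside the square and observing that the cross term vanishes because $\tilde{f}(\theta;x)$ is the mean of $f(\theta;x)$ over $\mathcal{P}_\Theta$---is exactly the Pythagorean argument you describe. Your direct derivation is simply the paper's proof with the noise pre-set to zero, so there is no gap.
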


\begin{remark}
    A similar formulation also applies to the KL divergence loss in the multi-class classification setting, which is proved in~\cref{proof:theorem:kl_decomp}.
\end{remark}

The ``Vulnerability'' term measures the risk of a data point $z$ being compromised by the model ensemble.
If the model ensemble is sufficiently strong to fit the direction opposite to the target label, the resulting high loss theoretically reduces the transferability error. 
This insight suggests that \textbf{\textit{selecting strong attackers}} as ensemble components leads to lower transferability error. 
The ``Diversity'' term implies that \textbf{\textit{selecting diverse attackers}} in a model ensemble attack theoretically contributing to a reduction in transferability error. 
In conclusion,
Theorem~\ref{theorem:diversity} provides the following guideline for reducing transferability error in model ensemble attack: we are supposed to choose ensemble components that are both strong and diverse.
Theorem~\ref{theorem:diversity} connects the existing body of work and clarifies how each algorithm strengthens adversarial transferability.
For instance, 
some approaches tend to optimizing the attack process~\citep{xiong2022stochastic,chen2023adaptive} to improve ``Vulnerability'', while others aim to diversify surrogate models~\citep{li2020learning,li2023making,wang2024boosting} to enhance ``Diversity''.
Also, there are other definitions of diversity based on gradient in previous literature~\citep{yang2021trs,kariyappa2019improving}.
A more detailed discussion is presented in Appendix~\ref{appendix:paradox}.


However, due to the mathematical nature of \eqref{eq::diversity}, there remains a \textit{vulnerability-diversity trade-off} in model ensemble attacks, 
similar to the well-known bias-variance trade-off~\citep{geman1992neural}.
This means that, in practice, it is not feasible to maximize both ``Vulnerability'' and ``Diversity'' simultaneously.
Recognizing this limitation, 
we proceed with further theoretical analysis to propose more guidelines for practitioners in the following section.








\subsection{Upper Bound of Transferability Error} \label{subsec:upper_bound_te}
We develop an upper bound of transferability error in this section.
We begin by taking Multi-Layer Perceptron (MLP) as an example of deep neural network and derive the upper bound of $\mathcal{R}_{N}(\mathcal{Z})$. 
The proof is in Appendix \ref{proof:theorem_4}.

\begin{lemma}[Ensemble Complexity of MLP] \label{theorem:nn_3}
    Let $\mathcal{H}=\{x \mapsto W_{l}\phi_{l-1}\left(W_{l-1} \phi_{l-2}\left(\ldots \phi_1\left(W_1 x\right)\right)\right) \}$ be the class of real-valued networks of depth $l$, where $x \in \mathbb{R}^{d_1}$, $W_i \in \mathbb{R}^{d_{i+1} \times d_{i}}$.
    Given $N$ classifiers from $\mathcal{H}$, where the parameter matrix is $W_{ij}, i \in \{1, \cdots, n \}, j \in \{1, \cdots, l \}$ and $T = \prod_{j=1}^{l} \sup_{i \in [n]}\|W_{i,j}\|_F$. Let $\|x\|_F \le B$.
    With 1-Lipschitz activation functions $\phi_1, \cdots, \phi_{l-1}$ and $1$-Lipschitz loss function $\ell(yf(x))$, there holds:
    \begin{align} \label{theorem1:eq}
        \mathcal{R}_{N}(\mathcal{Z}) \leq \frac{\left(\sqrt{(2\log2)l}+1\right)BT}{\sqrt{N}}.
    \end{align}
\end{lemma}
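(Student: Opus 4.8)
The plan is to first strip away the loss and the label, reducing the statement to a pure Rademacher average of the network outputs, and then to control that average by a depth-sensitive peeling argument. Since the loss $\ell(yf(x))$ is $1$-Lipschitz in its argument, I would apply Talagrand's contraction lemma with the common contraction $\ell$ and the scalar features $\psi_i(z)=y\,f(\theta_i;x)$, which removes $\ell$ from \eqref{general_rad}. Because the label $y\in\{-1,1\}$ is shared across the $N$ summands, maximizing over $y$ turns $\sum_i\sigma_i\,y\,f(\theta_i;x)$ into $\bigl|\sum_i\sigma_i f(\theta_i;x)\bigr|$, leaving
\[ \mathcal{R}_N(\mathcal{Z})\le\frac1N\,\mathbb{E}_{\boldsymbol\sigma}\sup_{\|x\|\le B}\Bigl|\sum_{i=1}^N\sigma_i f(\theta_i;x)\Bigr| . \]
This reduction is routine; the content lies in controlling the right-hand side.

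For the main estimate I would use the exponential-moment method. For $\lambda>0$, Jensen gives $\exp(\lambda\,\mathbb{E}_{\boldsymbol\sigma}[\sup_x\cdots])\le\mathbb{E}_{\boldsymbol\sigma}\exp(\lambda\sup_x\cdots)$, and I would then peel the $l$ layers one at a time from the output toward the shared input. At layer $j$ I would invoke the $1$-Lipschitz (and positive-homogeneous) activation together with the layer-wise bound $\sup_i\|W_{i,j}\|_F$ to extract a factor $\sup_i\|W_{i,j}\|_F$, accumulating the product $T=\prod_j\sup_i\|W_{i,j}\|_F$, while each peel contributes a multiplicative $2$ inside the moment, hence an additive $\log 2$ in the exponent and $l\log 2$ in total. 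After all layers are peeled, the computation collapses onto the shared input, where a Khintchine/Jensen estimate of the residual Rademacher sum supplies the $\sqrt{N}$ in the denominator together with the leading ``$+1$''. Optimizing $\lambda$ to balance the $l\log 2$ term against the variance term then produces the $\sqrt{(2\log2)\,l}$ factor, and dividing by $N$ yields \eqref{theorem1:eq}.

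The hard part is the peeling, because our setting is the transpose of the usual one: the Rademacher signs are attached to the $N$ \emph{distinct} surrogate networks rather than to data points, and each network carries its own weight matrix at every layer. In the standard argument one peels a single weight matrix shared across the whole sum and pulls it out by positive homogeneity; here no such shared matrix exists, and naively bounding the per-network contributions by a common worst-case weight would destroy exactly the sign cancellation that produces the $1/\sqrt N$. The crux is therefore to re-derive a per-layer peeling inequality that charges only the worst-case Frobenius norm $\sup_i\|W_{i,j}\|_F$ across the ensemble yet preserves cancellation, exploiting the $\ell_2$ geometry of the shared input. The linear case $l=1$ is the sanity check: there $\sup_{\|x\|\le B}\bigl|\sum_i\sigma_i\langle w_i,x\rangle\bigr|=B\,\|\sum_i\sigma_i w_i\|$ collapses exactly to a norm, and $\mathbb{E}_{\boldsymbol\sigma}\|\sum_i\sigma_i w_i\|\le\sqrt{\sum_i\|w_i\|^2}\le T\sqrt N$ gives the bound dimension-freely. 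Carrying this cancellation through the nonlinear layers is where I expect the genuine difficulty, rather than in the contraction reduction or the final optimization over $\lambda$, both of which are standard.
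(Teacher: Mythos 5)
Your outline is, step for step, the architecture of the paper's own proof, which transposes \citet{golowich2018size} by swapping the roles of data points and models: a contraction step to strip the $1$-Lipschitz loss (the paper's preliminary reduction to $\Re_N(\mathcal{Z})$), the Jensen/exponential-moment step with parameter $\lambda$, one peel per layer costing a multiplicative $2$ inside the expectation, a split into $\mathbb{E}Z \le BT\sqrt{N}$ (which supplies the ``$+1$'') plus a bounded-difference sub-Gaussian deviation term $\tfrac{\lambda}{2}NB^2T^2$, and finally the choice $\lambda=\sqrt{2\log(2)\,l}\,/(BT\sqrt{N})$, which produces exactly the constant in \eqref{theorem1:eq}. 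So there is no strategic divergence to report; the issue is completeness.

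The genuine gap is that the one step you explicitly defer---``re-derive a per-layer peeling inequality that charges only $\sup_i\|W_{i,j}\|_F$ yet preserves cancellation''---is the entire mathematical content of the lemma, and your proposal does not supply it. The paper fills it with Lemma~\ref{supp:lemma_1}: for convex increasing $g$ and a matrix $W$ with $\|W\|_F\le R$ appearing in \emph{every} summand, positive homogeneity forces the supremum over $W$ to a rank-one matrix (one row of norm $R$, the rest zero), after which $g(|u|)\le g(u)+g(-u)$, the sign-symmetry of $\boldsymbol{\sigma}$, and Cauchy--Schwarz yield the factor $2$ and pull $R$ out in front of $\bigl\|\sum_i\sigma_i f_i(x)\bigr\|$. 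Note, however, that this lemma peels a matrix \emph{shared} by all $N$ summands, while the ensemble carries per-member matrices $W_{i,j}$; the paper's application bridges this by writing $\sum_i\sigma_i W_{i,l}u_i\le T_l\bigl\|\sum_i\sigma_i u_i\bigr\|$ with $T_l=\sup_i\|W_{i,l}\|_F$, which is precisely the move you flagged as suspect. Your suspicion is well founded: as a pointwise inequality this can fail (take $u_1=u_2=u$, $W_{1,l}=-W_{2,l}$ of norm $T_l$ aligned with $u$, and $\sigma=(+1,-1)$; the left side is $2T_l\|u\|$ while the right side vanishes), so it can only be rescued, if at all, by an averaging/symmetrization argument over $\boldsymbol{\sigma}$ of the kind Lemma~\ref{supp:lemma_1} performs for a shared matrix. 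In short, your roadmap is a faithful reconstruction of the paper's argument, but a complete proof requires actually establishing the per-member peel---the very point at which your proposal stops and at which the paper's own write-up leaves the hardest work implicit.
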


\begin{remark}
We also derive the upper bound of $\mathcal{R}_{N}(\mathcal{Z})$ for the cases of linear model (Appendix~\ref{proof:theorem_2}) and two-layer neural network (Appendix~\ref{proof:theorem_3}). These results are special cases of the above theorem.
\end{remark}

In particular, 
a larger $N$ and smaller $T$ will give $\mathcal{R}_{N}(\mathcal{Z})$ a tighter bound.
Notice that $T$ contains the norm of weight matrices, which is related to model complexity~\citep{bartlett2017spectrally,neyshabur2018towards}. 
And a smaller model complexity corresponds to a smaller $T$~\citep{loshchilov2017decoupled}.
In summary, 
Lemma~\ref{theorem:nn_3} mathematically shows that \textit{increasing the number of surrogate models} and \textit{reducing the model complexity} of them can limit $\mathcal{R}_{N}(\mathcal{Z})$.


We now provide the upper bound of transferability error, and the proof is in Appendix \ref{proof:theorem_1}.

\begin{theorem}[Upper bound of Transferability Error] \label{theorem:upper_bound_adv_transfer}

Given the transferability error defined by~\eqref{define:eq:te} and general rademacher complexity defined by~\eqref{general_rad}. 
Let 
$\mathcal{P}_{\bigotimes_{i=1}^N \Theta}$ be the joint measure induced by the product of the marginals.
If the loss function $\ell$ is bounded by $\beta \in R_+$ and $\mathcal{P}_{\Theta^N}$ is absolutely continuous with respect to $\mathcal{P}_{\bigotimes_{i=1}^N \Theta}$ for any function $f_i$, then for $\alpha>1$ and $\gamma=\frac{\alpha}{\alpha-1}$, with probability at least $1-\delta$, there holds
\begin{multline} \label{eq::bound_te}
    TE(z,\epsilon) \le 4\mathcal{R}_{N}(\mathcal{Z}) + \\ \sqrt{\frac{18 \gamma \beta^2}{N}\ln{\frac{2^{2+\frac{1}{\gamma}}H_\alpha^{\frac{1}{\alpha}}\left(\mathcal{P}_{\Theta^N} \| \mathcal{P}_{\bigotimes_{i=1}^N \Theta}\right)}{\delta}}},
\end{multline}
where $H_\alpha\left(\cdot \| \cdot\right)$ is the Hellinger integrals~\citep{hellinger1909neue} with parameter $\alpha$, which measures the divergence between two probability distributions if $\alpha>1$~\citep{liese2006divergences}. 
\end{theorem}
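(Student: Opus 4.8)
The plan is to bound the two-sided uniform deviation $\sup_{z\in\mathcal{Z}}|L_P(z)-L_E(z)|$ and feed it into \cref{lemma:te}, which already gives $TE(z,\epsilon)\le 2\sup_{z\in\mathcal{Z}}|L_P(z)-L_E(z)|$, so that the leading $4\mathcal{R}_{N}(\mathcal{Z})$ will arise as $2\times 2\mathcal{R}_{N}(\mathcal{Z})$. The only genuine difficulty is that the surrogate parameters $(\theta_1,\dots,\theta_N)\sim\mathcal{P}_{\Theta^N}$ need not be independent, so the classical symmetrization and bounded-difference machinery does not apply directly. I would therefore run the entire uniform-convergence argument under the auxiliary product measure $Q:=\mathcal{P}_{\bigotimes_{i=1}^N\Theta}$, where independence holds and the standard tools are available, and only at the very end transport the resulting tail estimate back to the true joint law $P:=\mathcal{P}_{\Theta^N}$.

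Working under $Q$, each $\ell(f(\theta_i;x),y)$ is an i.i.d.\ unbiased estimate of $L_P(z)$, so a ghost-sample symmetrization gives $\mathbb{E}_Q[\sup_z|L_P(z)-L_E(z)|]\le 2\mathbb{E}_Q[\mathcal{R}_{N}(\mathcal{Z})]$, with $\mathcal{R}_{N}(\mathcal{Z})$ the empirical quantity of \eqref{general_rad}. Since $\ell\in[0,\beta]$, replacing one coordinate $\theta_i$ changes $\sup_z|L_P-L_E|$ by at most $\beta/N$, so McDiarmid's bounded-difference inequality yields a sub-Gaussian tail of rate $\exp(-2Ns^2/\beta^2)$ around the mean; an analogous bounded-difference step controls the fluctuation of the data-dependent $\mathcal{R}_{N}(\mathcal{Z})$ around its expectation, so that the empirical complexity itself (not its mean) appears in the final statement. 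Combining these through a union bound over the two signs of the absolute value produces a good event $G=\{\sup_z|L_P(z)-L_E(z)|\le 2\mathcal{R}_{N}(\mathcal{Z})+s\}$ with $Q(G^c)\le A\exp(-cNs^2/\beta^2)$ for explicit constants $A,c$.

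The crux — and the place where the information-theoretic tool of \citet{esposito2024concentration} enters — is the change of measure lifting this tail from $Q$ to $P$. Using the assumed absolute continuity $P\ll Q$ and Hölder's inequality with conjugate exponents $\alpha$ and $\gamma=\alpha/(\alpha-1)$, I would write $P(G^c)=\mathbb{E}_Q\!\big[\tfrac{dP}{dQ}\mathbf{1}_{G^c}\big]\le\big\|\tfrac{dP}{dQ}\big\|_{L^\alpha(Q)}\,Q(G^c)^{1/\gamma}=H_\alpha(P\|Q)^{1/\alpha}\,Q(G^c)^{1/\gamma}$, where $\big\|\tfrac{dP}{dQ}\big\|_{L^\alpha(Q)}=H_\alpha(P\|Q)^{1/\alpha}$ is precisely a power of the Hellinger integral. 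This single inequality removes the independence assumption at the cost of the divergence factor $H_\alpha^{1/\alpha}$, and it is the main obstacle: a naive coupling or any independence-based concentration would simply be unavailable for the dependent draw $\mathcal{P}_{\Theta^N}$, which is exactly why the $Q$-to-$P$ transport has to be done at the level of event probabilities.

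Finally I would choose the slack $s$ so that $H_\alpha(P\|Q)^{1/\alpha}Q(G^c)^{1/\gamma}\le\delta$; solving this inequality turns the sub-Gaussian rate into $s^2\propto\frac{\gamma\beta^2}{N}\ln\frac{2^{2+1/\gamma}H_\alpha(P\|Q)^{1/\alpha}}{\delta}$, the factor $\gamma$ appearing because the $Q$-tail is raised to the power $1/\gamma$, and the explicit constants $18$ and $2^{2+1/\gamma}$ emerging from careful bookkeeping of (i) the factor $2$ of \cref{lemma:te}, (ii) the factor $2$ from symmetrization that produces $4\mathcal{R}_{N}(\mathcal{Z})$, and (iii) the two-sided union bound together with the McDiarmid prefactor carried through the exponent $1/\gamma$. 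On the event $G$, \cref{lemma:te} then gives $TE(z,\epsilon)\le 4\mathcal{R}_{N}(\mathcal{Z})+2s$, which is exactly \eqref{eq::bound_te}, holding with probability at least $1-\delta$.
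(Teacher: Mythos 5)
Your proposal is correct, and it reaches \eqref{eq::bound_te} by the same underlying mechanism as the paper, but with a genuinely different organization that is worth comparing. The paper bounds $TE(z,\epsilon)\le\Phi_1+\Phi_2$ with $\Phi_1=\sup_{z}(L_P-L_E)$ and $\Phi_2=\sup_{z}(L_E-L_P)$, and then invokes Theorem~1 of \citet{esposito2024concentration} as a black box \emph{four} times directly under the dependent joint $P=\mathcal{P}_{\Theta^N}$ (once for each $\Phi_j$, and once for each of the two steps replacing $\mathbb{E}[\mathcal{R}_N(\mathcal{Z})]$ by the empirical $\mathcal{R}_N(\mathcal{Z})$), each at confidence level $\delta/4$, with the ghost-sample symmetrization also written under the dependent measure. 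You instead run every classical step---symmetrization, McDiarmid for the uniform deviation, McDiarmid for the empirical complexity---under the product of marginals $Q=\mathcal{P}_{\bigotimes_{i=1}^N\Theta}$, where independence makes all of them rigorously valid, assemble a single good event $G$, and transport its complement by one H\"older step $P(G^c)\le H_\alpha^{1/\alpha}\left(P\|Q\right)Q(G^c)^{1/\gamma}$; that step is exactly the engine inside the cited theorem, so you are inlining its proof rather than citing it. Your bookkeeping is consistent: with slack $s=3t$ (two signs of the deviation plus the concentration of $\mathcal{R}_N(\mathcal{Z})$ around its $Q$-mean), \cref{lemma:te} gives $TE(z,\epsilon)\le 4\mathcal{R}_N(\mathcal{Z})+6t$, and $6^2/2=18$ reproduces the paper's constant, while the union-bound prefactor enters your bound as $3^{1/\gamma}$ (or $4^{1/\gamma}=2^{2/\gamma}$), which is \emph{smaller} than the paper's $2^{2+1/\gamma}$---the paper's larger factor arises because it takes its union bound after four separate changes of measure---so the stated inequality follows a fortiori. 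Beyond matching the result, your route has a technical advantage: Rademacher symmetrization with a ghost sample requires coordinate-wise exchangeability, which holds under $Q$ but not in general under a dependent joint, so performing that step under $Q$ as you do is the careful way to argue it; what the paper's packaging buys in exchange is brevity, since all dependence-handling is delegated to the cited concentration result.
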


\textit{Remark 1.} \,
Our proposed setting where both the surrogate model and the target model adopt the same parameter space aligns with many realistic scenarios, as demonstrated in~\citep{wu2024improving,tang2024ensemble,li2023making,xiong2022stochastic,lin2019nesterov}. 
However, \cref{theorem:upper_bound_adv_transfer} can be also extended to scenarios where the parameter distributions of surrogate model and target model differ. It is discussed in Appendix~\ref{extension:model_space} via a redefinition of the model space.

\textit{Remark 2.} \,
We provide further explanation of the Hellinger integral term $H_\alpha (\mathcal{P}_{\Theta^N} \| \mathcal{P}_{\bigotimes_{i=1}^N \Theta})$ in~\cref{explanation::hellinger}.

\textit{Remark 3.} \,
Theorem~\ref{theorem:upper_bound_adv_transfer} is grounded in the empirical model ensemble Rademacher complexity defined in~\eqref{general_rad}. 
However, it can be extended to information-theoretic analysis with similar conclusions, as demonstrated in Appendix~\ref{appendix:info_theory}.

The first term in~\eqref{eq::bound_te} suggests that \textit{incorporating \textbf{more surrogate models} with \textbf{less model complexity}}
in ensemble attack will constrain $\mathcal{R}_{N}(\mathcal{Z})$ and enhances adversarial transferability.
Intuitively, incorporating more models helps prevent any single model from overfitting to a specific adversarial example. 
Such theoretical heuristic is also supported by experimental results~\citep{liu2016delving,dong2018boosting,lin2019nesterov,li2020learning,gubri2022lgv,chen2023adaptive,liu2024scaling}, which also stress the advantage of more surrogate models to obtain transferable attack.
Additionally,
when there is an overfitting issue,
models with reduced complexity will mitigate it.

The second term also suggests that a large 
$N$ (using more models) can lead to a tighter bound.
Furthermore, it motivates the idea that reducing the interdependence among the parameters in ensemble components (i.e., increasing their diversity) results in a tighter upper bound for $TE(z,\epsilon)$. 
Recall that $H_\alpha (\mathcal{P}_{\Theta^N} \| \mathcal{P}_{\bigotimes_{i=1}^N \Theta})$ represents the divergence between the joint distribution $\mathcal{P}_{\Theta^N}$ and the product of marginals $\mathcal{P}_{\bigotimes_{i=1}^N \Theta}$. 
The joint distribution captures dependencies, while the product of marginals does not. 
Therefore, $H_\alpha (\mathcal{P}_{\Theta^N} \| \mathcal{P}_{\bigotimes_{i=1}^N \Theta})$ measures the degree of dependency among the parameters from $N$ classifiers.
As a result, \textit{\textbf{increasing the diversity of parameters in surrogate models}} and reducing their interdependence enhances adversarial transferability. 
This theoretical conclusion is also supported by empirical results~\citep{li2020learning,tang2024ensemble}, which also advocate for generating adversarial examples from diverse models.

\paragraph{The trade-off between complexity and diversity.}
Reducing model complexity may conflict with increasing diversity. 
We discuss this issue from two angles.
On one hand, when generating adversarial examples from simpler models to attack more complex ones, the overall model complexity is lower, but diversity may also be limited due to the simpler structure of the ensemble attackers.
On the other hand, attacking simpler models with a stronger, more diverse ensemble may increase diversity but also raise model complexity. In this scenario, reducing complexity can help prevent overfitting and lead to a tighter transferability error bound, albeit with a slight reduction in ensemble diversity.
In summary, striking a balance between model complexity and diversity is crucial in practice.

\paragraph{From generalization error to transferability error.}
The mathematical form of \eqref{eq::bound_te} is in line with the generalization error bound~\citep{bartlett2002rademacher}.
However, we note that a key distinction between transferability error and generalization error lies in the \textit{independence assumption}. 
Conventional generalization error analysis relies on an assumption: each data point from the dataset is independently sampled~\citep{zou2023generalization,hu2023generalization}. 
By contrast, the surrogate models for ensemble attack are usually trained on the datasets with similar tasks, e.g., image classification. 
In this case, \textit{we cannot assume these surrogate models behave independently for a solid theoretical analysis}.
To build the gap between generalization error and transferability error, our proof introduces the latest techniques in information theory~\citep{esposito2024concentration}.
And refer to~\cref{appendix:compare_generalization} for a detailed discussion about it.
Thus, equipped with Theorem 1 from~\citet{esposito2024concentration}, we swap the role of the model and data in learning theory literature~\citep{geman1992neural,golowich2018size,bartlett2002rademacher,ortega2022diversity} with analogical proof steps and prove the results.



\begin{figure*}[t]
\centering
\subfigure[MLP]{
\includegraphics[width=0.47\textwidth]{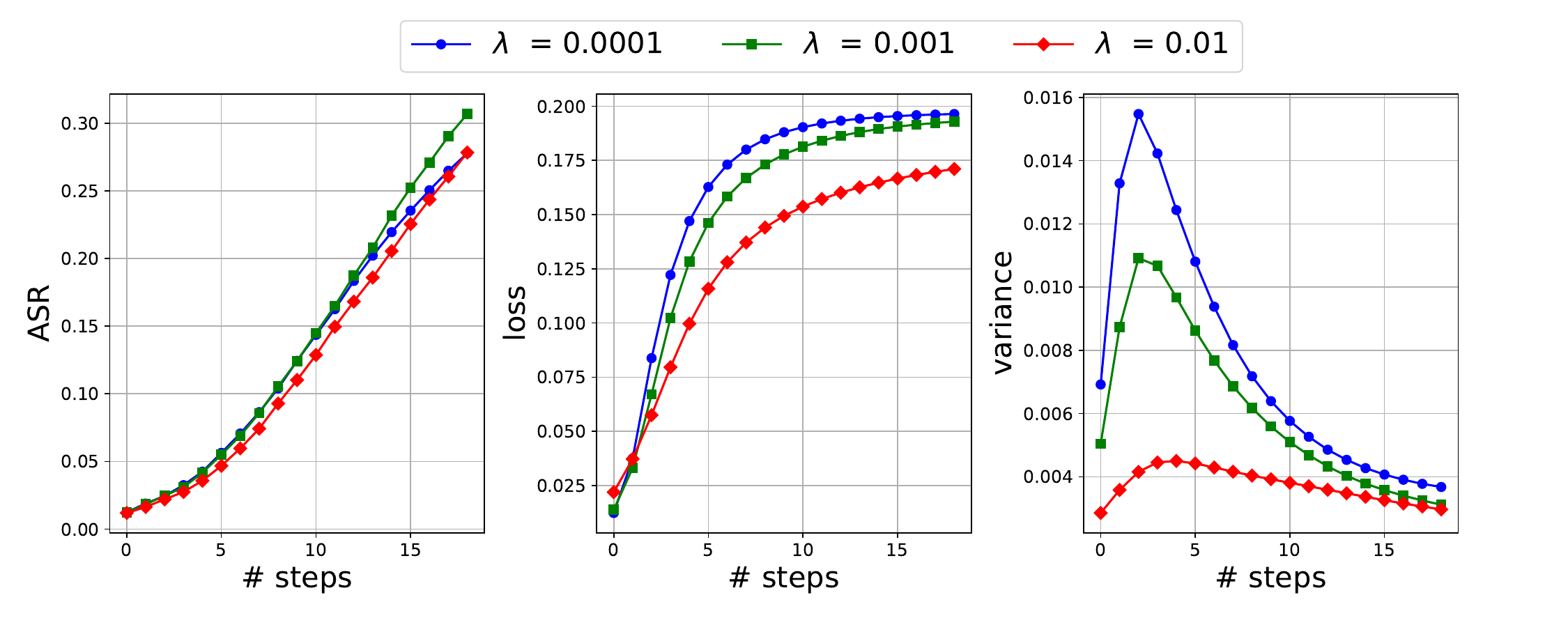}
\label{subfig:mnist_mlp_con1}
}
\subfigure[CNN]{
\includegraphics[width=0.47\textwidth]{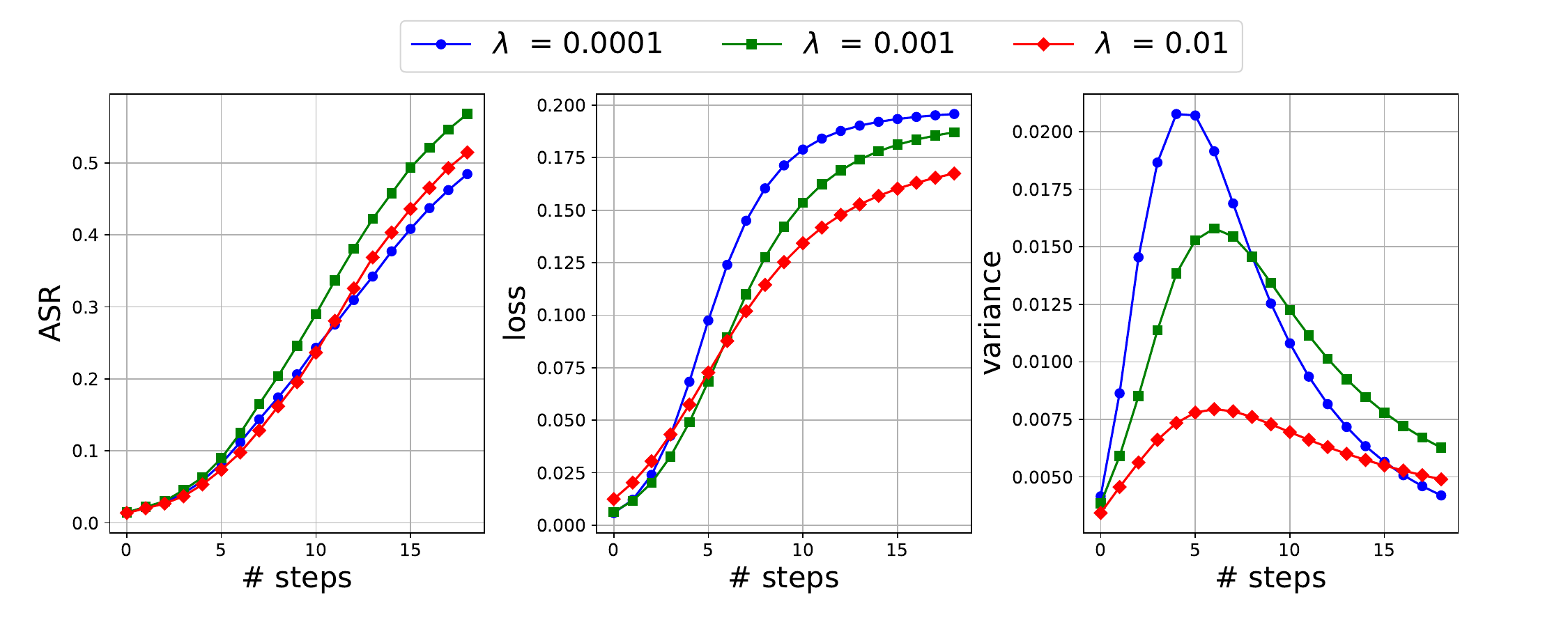}
\label{subfig:mnist_cnn_con1}
}
\vspace{-5pt}
\caption{Evaluation of ensemble attacks with increasing the number of steps using MLPs and CNNs on the MNIST dataset.}
\vspace{-5pt}
\label{fig:one_mnist} 
\end{figure*}

\begin{figure*}[t]
\centering
\subfigure[MLP]{
\includegraphics[width=0.47\textwidth]{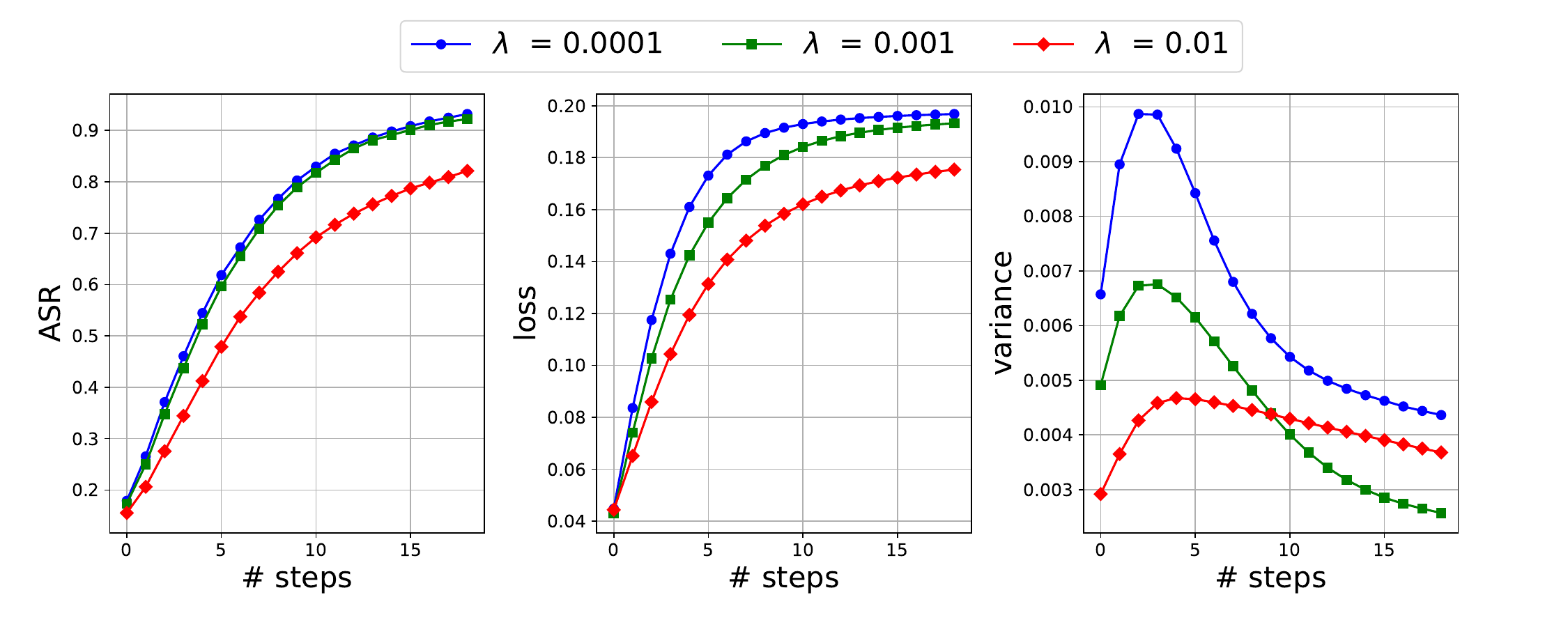}
\label{subfig:fmnist_mlp_con1}
}
\subfigure[CNN]{
\includegraphics[width=0.47\textwidth]{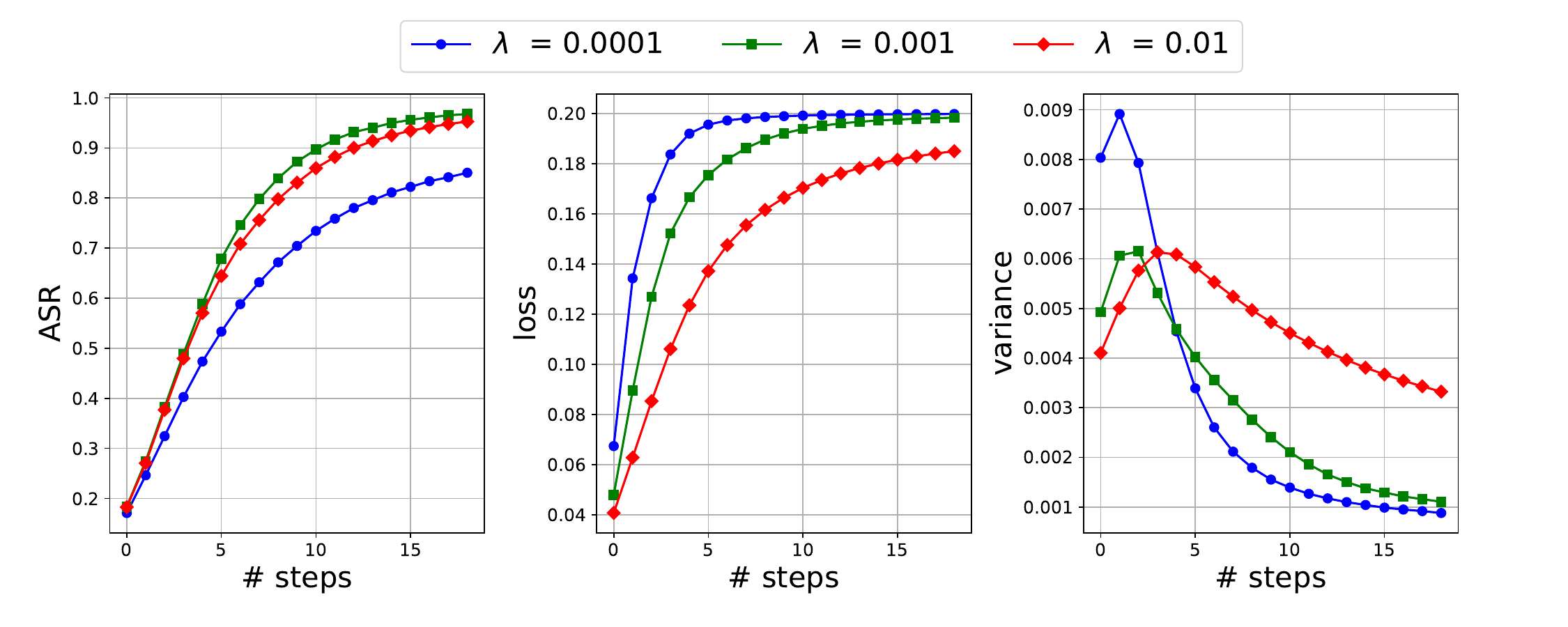}
\label{subfig:fmnist_cnn_con1}
}
\vspace{-5pt}
\caption{Evaluation of ensemble attacks with increasing the number of steps using MLPs and CNNs on the Fashion-MNIST dataset.}
\vspace{-5pt}
\label{fig:one_fashion} 
\end{figure*}

\begin{figure*}[t]
\centering
\subfigure[MLP]{
\includegraphics[width=0.47\textwidth]{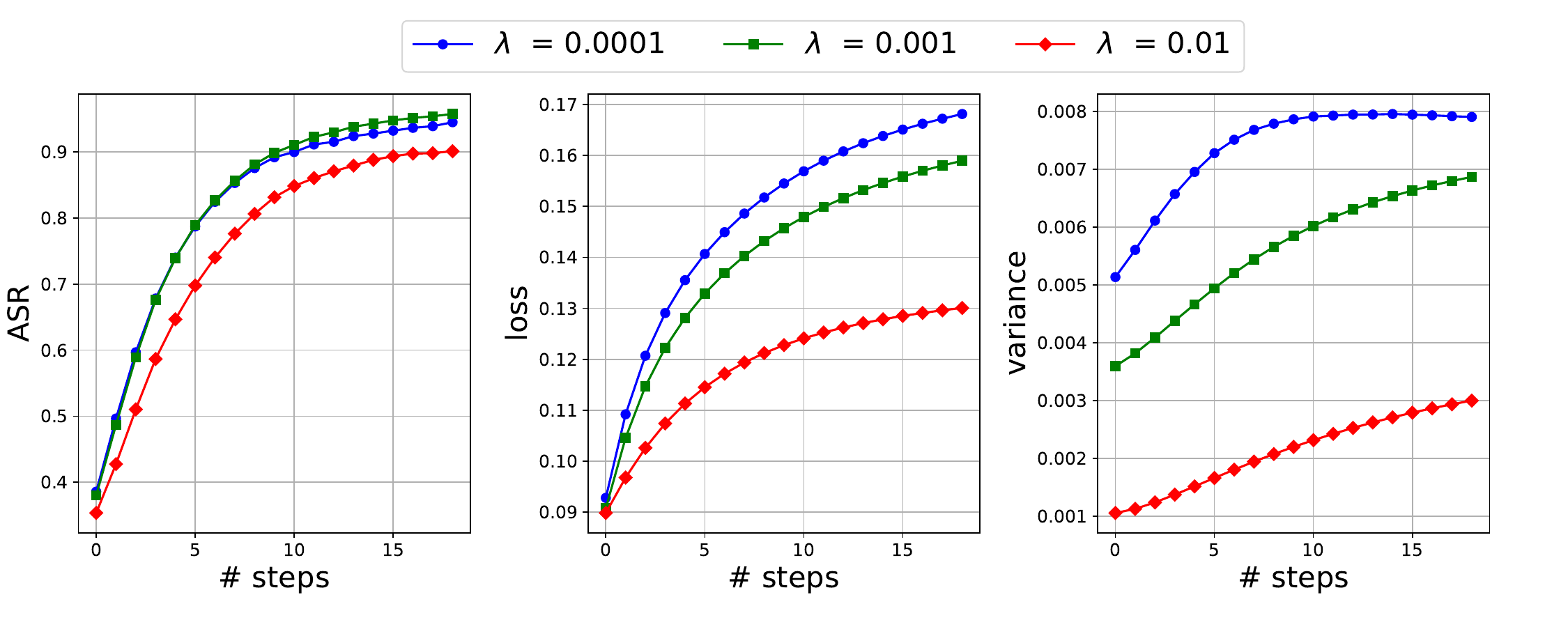}
\label{subfig:one_cifar_mlp_resnet}
}
\subfigure[CNN]{
\includegraphics[width=0.47\textwidth]{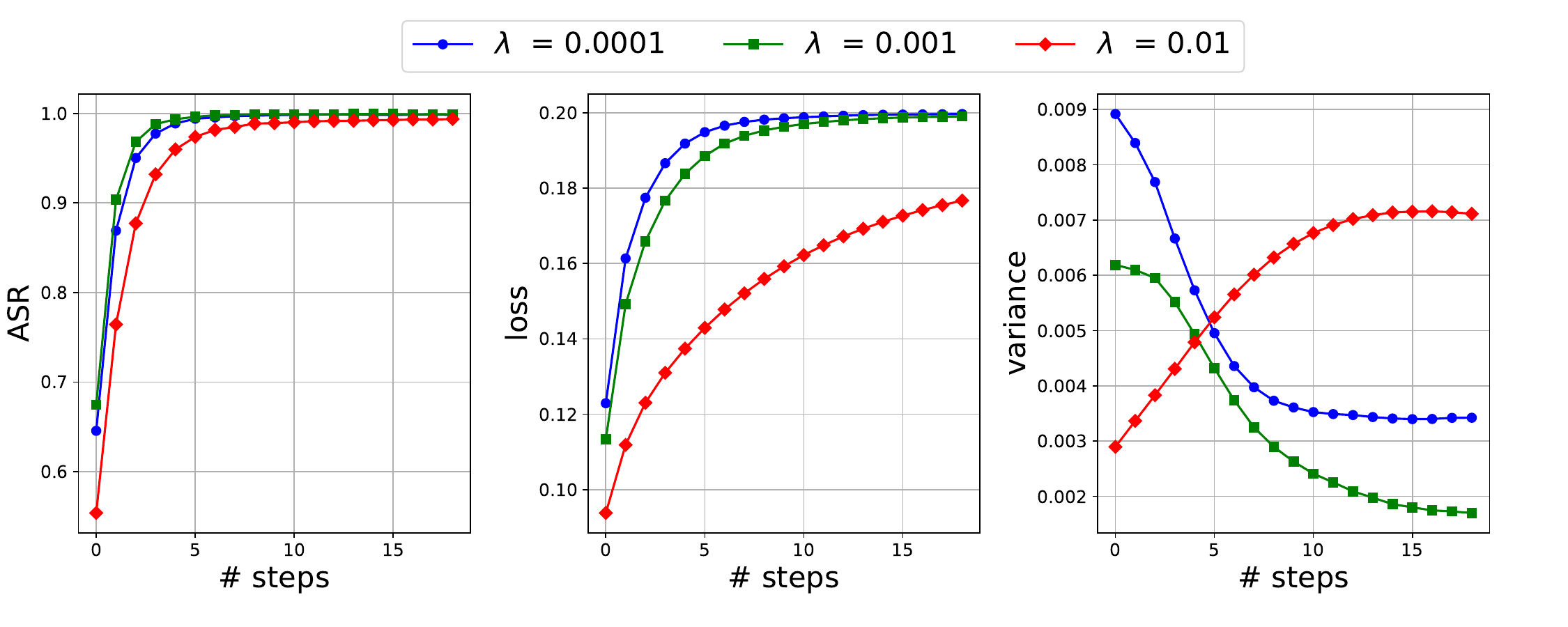}
\label{subfig:one_cifar_cnn_resnet}
}
\vspace{-5pt}
\caption{Evaluation of ensemble attacks with increasing the number of steps using MLPs and CNNs on the CIFAR-10 dataset.}
\vspace{-5pt}
\label{fig:one_cifar} 
\end{figure*}

\subsection{The Analogy between Generalization and Adversarial Transferability}

In addition to providing inspiration for model ensemble attacks, the theoretical evidence in this paper also offers new insights into another fascinating idea.
Within the extensive body of research on transferable adversarial attack algorithms accumulated over the years~\citep{gu2024survey}, 
we revisit a foundational analogy that is universally applicable in the adversarial transferability literature: 
\textit{\textbf{The transferability of an adversarial example is an analogue to the generalizability of the model}}~\citep{dong2018boosting}.
In other words, the ideas that enhance model generalization in deep learning may also improve adversarial transferability~\citep{lin2019nesterov}.
Over the past few years, this analogy has significantly inspired the development of numerous effective algorithms, which directly reference it in their papers~\citep{lin2019nesterov,wang2021admix,wang2021enhancing,xiong2022stochastic,chen2023rethinking}. 
And some recent papers are also inspired by it~\citep{chen2023adaptive,wu2024improving,wang2024boosting,tang2024ensemble}.
Thus, validating this influential analogy is indispensable for defining the future landscape of adversarial transferability.
Interestingly, our paper sheds light on this insight in several ways.

First, 
the mathematical formulations in Lemma~\ref{lemma:te} is similar to generalization error~\citep{vapnik1998statistical,bousquet2002stability} 
, which also derives an objective as a difference between the population risk and the empirical risk. 
Such similarity between transferability error and generalization error suggests the possible validity of the analogy.
Also,
Lemma~\ref{theorem:nn_3} is similar to the bound of the original Rademacher complexity~\citep{golowich2018size}, which also suggests that obtaining a larger training set as well as a less complex model contribute a tighter bound of Rademacher complexity. 
Such similarities between transferability error and generalization error suggests the possible validity of the analogy.
More importantly, if the analogy is correct, then recall that in the conventional framework of learning theory:
(1) increasing the size of training set typically leads to a better generalization of the model~\citep{bousquet2002stability};
(2) improving the diversity among ensemble classifiers makes it more advantageous for better generalization~\citep{ortega2022diversity};
and (3) reducing the model complexity~\citep{cherkassky2002model} benefits the generalization ability.
It is natural to ask: 
in model ensemble attack, do 
(1) incorporating more surrogate models,
(2) making them more diverse, 
and (3) reducing their model complexity
theoretically result in better adversarial transferability?

In this section, our theoretical framework provides consistently affirmative responses to the above question as well as the analogy.
Considering a higher perspective, the theory is also instructive in two ways.
On the one hand, from the perspective of a theoretical researcher, the extensive and advanced generalization theory may yield enlightening insights in the field of adversarial transferability.
On the other hand, from an practitioner's point of view, ideas from deep learning algorithms can also be leveraged to develop more effective transferable attack algorithms.

\section{Experiments} \label{section::exp}


\begin{figure}[ht]
\centering
\subfigure[MNIST]{
\includegraphics[width=0.47\textwidth]{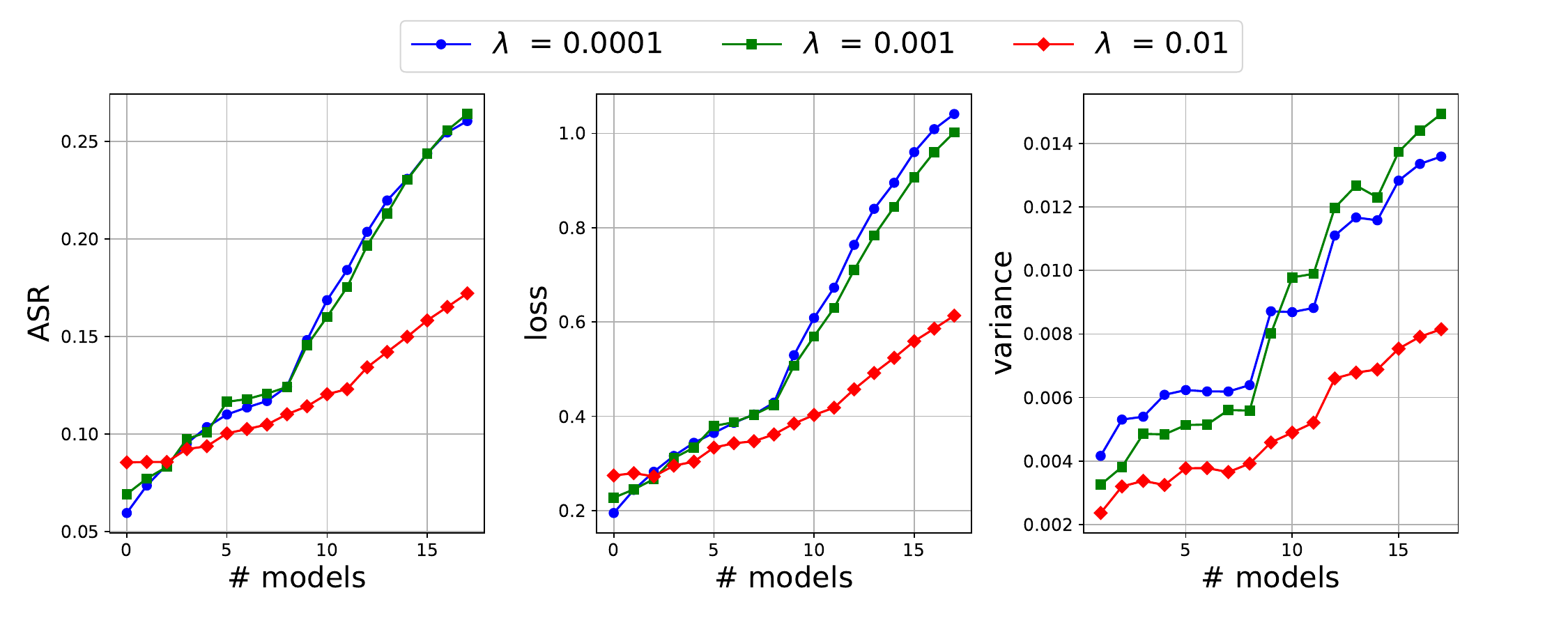}
\label{subfig:two_mnist}
}
\subfigure[Fashion-MNIST]{
\includegraphics[width=0.47\textwidth]{figures/exp_results/ens_attack_mnist_all_resnet18_con2.pdf}
\label{subfig:two_fashion}
}
\subfigure[CIFAR-10]{
\includegraphics[width=0.47\textwidth]{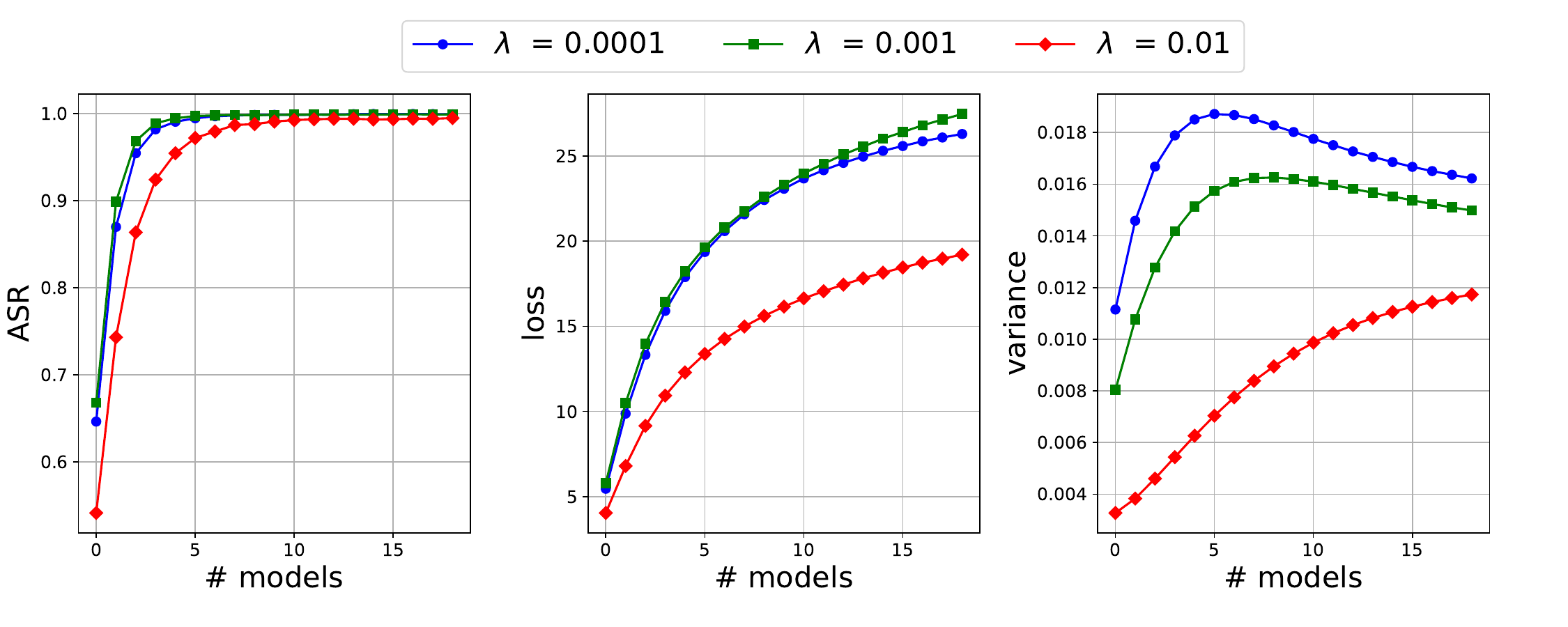}
\label{subfig:two_cifar}
}
\vspace{-5pt}
\caption{Evaluation of ensemble attacks with increasing the number of models using MLPs and CNNs on the three datasets.}
\label{fig:two}
\vspace{10pt}
\end{figure}

We conduct our experiments on three datasets, including the MNIST~\citep{lecun1998mnist}, Fashion-MNIST~\citep{xiao2017fashion}, and CIFAR-10~\citep{krizhevsky2009learning} datasets. 
We use these datasets to empirically validate our theory and build a powerful ensemble adversarial attack in practice.

We build six deep neural networks for image classification, including three MLPs with one to three hidden layers followed by a linear classification layer, and three convolutional neural networks (CNNs) with one to three convolutional layers followed by a linear classification layer. To ensure diversity among the models, we apply three different types of transformations during training. Additionally, we set the weight decay under the $L_2$ norm to $10^{-4}, 10^{-3}, 10^{-2}$, respectively. This results in a total of $6 \times 3 \times 3 = 54$ models. To establish a gold standard for adversarial transferability evaluation, we additionally train a ResNet-18~\citep{he2016deep} from scratch on three datasets (MNIST, Fashion-MNIST, and CIFAR-10), respectively. We will leverage the models at hand to attack this ResNet-18 for a reliable evaluation. For models trained on MNIST, Fashion-MNIST, we set the number of epochs as $10$. For models trained on CIFAR-10, we set the number of epochs as $30$. We use the Adam optimizer with setting the learning rate as $10^{-3}$.  We set the batch size as $64$.

\subsection{Evaluation on the Attack Dynamics}

For each dataset (MNIST \& Fashion-MNIST \& CIFAR-10), we record the attack success rate (ASR), loss value, and the variance of model predictions with increasing the number of steps for attack. We use MI-FGSM~\citep{dong2018boosting} to craft the adversarial example and use the cross-entropy as the loss function to optimize the adversarial perturbation. Generally, the number of steps for the transferable adversarial attack is set as 10~\citep{zhang2024bag}, but to study the attack dynamics more comprehensively, we perform $20$-step attack.  In our plots, we use the mean-squared-error to validate our theory, which indicates the vulnerability from the theory perspective better. 
The first metric exhibits an inverse relationship with transferability error.
And the latter two metrics correspond to the vulnerability and diversity components in the decomposition in Section~\ref{subsec:vul_div_decom}. The number of steps for attack is indicated by the $x$-axis. And we denote $\lambda$ as the weight decay.
We respectively report the results on three datasets in \cref{fig:one_mnist}, \cref{fig:one_fashion}, and \cref{fig:one_cifar}.

\paragraph{Vulnerability-diversity decomposition.}
Across all three datasets, we observe a consistent pattern: as the number of steps increases, both ASR and loss values improve steadily, meaning that transferability error decreases while vulnerability increases.
Notably, the magnitude of variance is approximately ten times smaller than that of the loss value, indicating a much smaller impact on transferability error.
Thus, ``vulnerability'' predominantly drives the vulnerability-diversity decomposition, and the upward trend in vulnerability aligns with the reduction in transferability error.

\paragraph{The trend of variance.}
On the MNIST and Fashion-MNIST datasets, diversity initially increases but later declines. In contrast, on the CIFAR dataset, the variance for MLP consistently increases, whereas for CNNs, it decreases with a small regularization term but increases with a larger one.
This intriguing phenomenon is tied not only to the trade-off between complexity and diversity discussed in Section~\ref{subsec:upper_bound_te}, but also to the complex behavior of variance.
In the bias-variance trade-off literature~\citep{yang2020rethinking,lin2021causes,derumigny2023lower,chen2024on}, different trends in variance have been observed. For example, \citet{yang2020rethinking} suggests that variance may follow a bell-shaped curve, rising initially and then falling as network width expands. 
While a full investigation of variance behavior is beyond the scope of this work, more discussion is provided in Appendix~\ref{appendix:vul_div_tradeoff}.

\textbf{The potential trade-off between diversity and complexity.}
Our experimental results (specifically the ``variance'' sub-figure), indicate the potential trade-off between diversity and complexity. Consider two distinct phases in the attack dynamics: 1) Initial phase of the attack (first few steps): During this phase, the adversarial example struggles to attack the model ensemble effectively (a low loss). Consequently, both the loss and variance increase, aligning with the vulnerability-diversity decomposition. 2) Potential ``over-fitting'' phase of the attack (subsequent steps): In this phase, the adversarial example can effectively attack the model ensemble, achieving a high loss. Here, the trade-off between diversity and complexity becomes evident, particularly at the final step of the attack. As the regularization term $\lambda$ increases (i.e., lower model complexity), the variance of the model ensemble may increase. For instance, in the variance sub-figure, the red curve may exceed one of the other curves, indicating this potential trade-off.

\paragraph{Additional experiments.}
Firstly, in~\cref{appendix:exp_cifar}, we present additional experimental results on the CIFAR-100~\citep{krizhevsky2009learning} to reinforce the validity of vulnerability-diversity decomposition.
Secondly, in~\cref{appendix:exp_maxnorm}, we introduce weight norm constraints and investigate how model complexity influences ensemble complexity to support~\cref{theorem:nn_3}.
Finally, in~\cref{appendix:exp_imagenet}, we use the ImageNet dataset~\citep{russakovsky2015imagenet} to provide a straightforward demonstration of how controlling model complexity enhances adversarial transferability.


\subsection{Evaluation on the Ensemble Framework}
We further validate the effectiveness of the vulnerability-diversity decomposition within the ensemble framework. Specifically, instead of focusing solely on the training dynamics, we progressively increase the number of models in the ensemble attack to evaluate the decomposition's impact. We begin by incorporating MLPs with different architectures and regularization terms, followed by CNNs. In total, up to 18 models are included in a single attack.
We depicted the results in \cref{fig:two}.

We can consistently observe that increasing the number of ensemble models improves the attack success rate, i.e., reduces the transferability error.
On the MNIST and Fashion-MNIST datasets, both vulnerability and diversity also increase as the number of models grows. 
Although the diversity sometimes shows a decreasing trend on the CIFAR-10 dataset, 
its magnitude is approximately 100 times smaller than vulnerability, thus having a minimal impact on ASR.

\section{Conclusion}

This paper establishes a theoretical foundation for transferable model ensemble adversarial attacks.
We introduce three key concepts: transferability error, prediction variance, and empirical model ensemble Rademacher complexity.
By decomposing transferability error into vulnerability and diversity, we reveal a fundamental trade-off between them.
Leveraging recent mathematical tools, we derive an upper bound on transferability error, validating practical insights for enhancing adversarial transferability.
Extensive experiments support our findings, advancing the understanding of transferable model ensemble adversarial attacks.

\section*{Acknowledgement}

We are deeply grateful to Bowei Zhu, Xiaolin Hu, Shaojie Li, anonymous reviewers, area chair and senior area chair for their valuable suggestions and detailed discussion.
Wei Yao, Huayi Tang and Yong Liu were supported by National Natural Science Foundation of China (No.62476277), National Key Research and Development Program of China(NO. 2024YFE0203200), CCF-ALIMAMA TECH Kangaroo Fund(No.CCF-ALIMAMA OF 2024008), and Huawei-Renmin University joint program on Information Retrieval. We also acknowledge the support provided by the fund for building worldclass universities (disciplines) of Renmin University of China and by the funds from Beijing Key Laboratory of Big Data Management and Analysis Methods, Gaoling School of Artificial Intelligence, Renmin University of China, from Engineering Research Center of Next-Generation Intelligent Search and Recommendation, Ministry of Education, from Intelligent Social Governance Interdisciplinary Platform, Major Innovation \& Planning Interdisciplinary Platform for the “DoubleFirst Class” Initiative, Renmin University of China, from Public Policy and Decision-making Research Lab of Renmin University of China, and from Public Computing Cloud, Renmin University of China.

\section*{Impact Statement}

We recognize the potential societal impact of our work on transferable adversarial attacks and emphasize its contribution to improving the robustness and security of machine learning models. 
Our study upholds high standards of scientific excellence through transparency, rigor, and reproducibility. No human subjects were involved, and no privacy or confidentiality concerns arise from the data used. We have also ensured that our work does not introduce discriminatory biases, and we are committed to the fair and inclusive participation of all individuals in the research community.
While our research focuses on theoretical advancements, we are aware of the potential risks associated with adversarial attack techniques. We encourage responsible use of these insights to build more secure AI systems and minimize any unintended harm. 


\bibliography{example_paper}
\bibliographystyle{icml2025}

\appendix

\onecolumn
\newpage




\section{More Related Work} 
\label{appendix:related}

\subsection{Transferable Adversarial Attack} \label{appendix:related_transfer_attack}


\paragraph{Input transformation.}
Input transformation-based attacks have shown great effectiveness in improving transferability and can be combined with gradient-based attacks.
Most input transformation techniques rely on the fundamental idea of applying data augmentation strategies to prevent overfitting to the surrogate model~\citep{gu2024survey}. 
Such methods adopt various input transformations to further improve the transferability of adversarial examples~\citep{wang2023diversifying,wang2023structure}.
For instance, random resizing and padding~\citep{xie2019improving}, downscaling~\citep{lin2019nesterov}, mixing~\citep{wang2021admix}, automated data augmentation~\citep{yan2022ila}, block shuffle and rotation~\citep{wang2024boosting}, and dynamical transformation~\citep{zhu2024learning}.

\paragraph{Gradient-based optimization.}
The central concept of these methods is to develop optimization techniques in the generation of adversarial examples to achieve better transferability.
\citet{dong2018boosting,lin2019nesterov,wang2021enhancing} draw an analogy between generating adversarial examples and the model training process. 
Therefore, conventional optimization methods that improve model generalization can also benefit adversarial transferability.
In gradient-based optimization methods, adversarial perturbations are directly optimized based on one or more surrogate models during inference. 
Some popular ideas include applying momentum~\citep{dong2018boosting}, Nesterov accelerated gradient~\citep{lin2019nesterov}, scheduled step size~\citep{gao2020patch} and gradient variance reduction~\citep{wang2021enhancing,xiong2022stochastic}. 
There are also other elegantly designed techniques in recent years~\citep{gubri2022lgv,wang2022triangle,xiaosen2023rethinking,li2024improving,wu2024improving,zhang2024bag}, such as collecting weights~\citep{gubri2022lgv}, modifying gradient calculation~\citep{xiaosen2023rethinking} and applying integrated gradients~\citep{ma2023transferable}.

\paragraph{Model ensemble attack.}
Motivated by the use of model ensembles in machine learning, researchers have developed diverse ensemble attack strategies to obtain transferable adversarial examples~\citep{gu2024survey}.
It is a powerful attack that employs an ensemble of models to simultaneously generate adversarial samples. 
It can not only integrate with advanced gradient-based optimization methods, but also harness the unique strengths of each individual model~\citep{tang2024ensemble}.
Some popular ensemble paradigms include loss-based ensemble~\citep{dong2018boosting}, prediction-based~\citep{liu2016delving}, logit-based ensemble~\citep{dong2018boosting}, and longitudinal strategy~\citep{li2020learning}.
There is also some deep analysis to compare these ensemble paradigms~\citep{zhang2024bag}.
Moreover, advanced ensemble algorithms have been created to ensure better adversarial transferability~\citep{zou2020improving,gubri2022efficient,xiong2022stochastic,chen2023adaptive,li2023making,wu2024improving,chen2023rethinking}.

\subsection{Statistical Learning Theory} \label{appendix:slt}

Statistical learning theory forms the theoretical backbone of modern machine learning by providing rigorous frameworks for understanding model generalization~\citep{vapnik1999overview}. 
It introduces foundational concepts such as Rademacher complexity~\citep{bartlett2002rademacher}, VC dimension~\citep{vapnik1968uniform}, structural risk minimization~\citep{vapnik1998statistical} . 
It has also been instrumental in the development of Support Vector Machines~\citep{cortes1995support} and kernel methods~\citep{shawe2004kernel}, which remain pivotal in supervised learning tasks. 
Recent advances extend statistical learning theory to deep learning, addressing challenges of high-dimensional data and model complexity~\citep{bartlett2021deep}. 
These contributions have significantly enhanced the capability to design robust learning algorithms that generalize well across diverse applications~\citep{du2013neural}.
In addition, there are also some other novel theoretical frameworks, 
such as information-theoretic analysis~\citep{xu2017information}, 
PAC-Bayes bounds~\citep{parrado2012pac},
transductive learning~\citep{vapnik2006estimation},
and stability analysis~\citep{bousquet2002stability,shalev2010learnability}.
Most of them derive a bound of the order $\mathcal{O}(\frac{1}{\sqrt{M}})$, while some others derive sharper bound of generalization~\citep{li2021towards} of the order $\mathcal{O}(\frac{1}{M})$.
Such theoretical analysis suggests that with the increase of the dataset volume, the model generalization will become better.

\section{Proof of Generalized Rademacher Complexity}

\subsection{Preliminary} \label{eq::rad_h}

For simplicity, denote $f(\theta_i;x)$ as $f_i(x)$.
For $1$-Lipschitz loss function $\ell(yf(x))$ (for example, hinge loss $\ell(f(x),y)=\max{(0,1-yf(x)}$), there holds:
\begin{align*} 
    \mathcal{R}_{N}(\mathcal{Z}) & = \mathop{\mathbb{E}}\limits_{ \boldsymbol{\sigma}} \left[ \sup_{z \in \mathcal{Z}} \frac{1}{N} \sum_{i=1}^N \sigma_i \ell(f_i(x),y) \right] \\ & \le \mathop{\mathbb{E}}\limits_{ \boldsymbol{\sigma}} \left[ \sup_{z \in \mathcal{Z}} \frac{1}{N} \sum_{i=1}^N \sigma_i y f_i(x) \right] \\ & = \mathop{\mathbb{E}}\limits_{ \boldsymbol{\sigma}} \left[ \sup_{z \in \mathcal{Z}} \frac{1}{N} \sum_{i=1}^N \sigma_i f_i(x) \right] := \Re_N(\mathcal{Z}). 
\end{align*}
So we can bound $\Re_N(\mathcal{Z})$ instead of $\mathcal{R}_{N}(\mathcal{Z})$.

\subsection{Linear Model} \label{proof:theorem_2}

Given Section~\ref{eq::rad_h}, we provide the bound below.

\begin{lemma}[Linear Model] \label{theorem:linear_model}
Let $\mathcal{H}=\left\{x \mapsto w^T x\right\}$, where $x,w \in \mathbb{R}^d$.
Given $N$ classifiers from $\mathcal{H}$,
assume that $\|x\|_2 \leq B$ and $\|w\|_2 \leq C$. Then $\Re_N(\mathcal{Z}) \leq \frac{BC}{\sqrt{N}}$.
\end{lemma}

\begin{proof}
\vspace{-5pt}
\begin{align*}
\Re_N(\mathcal{Z}) & = \mathop{\mathbb{E}}\limits_{ \boldsymbol{\sigma}} \left[ \sup_{\|x\|_2 \leq B} \frac{1}{N} \sum_{i=1}^N \sigma_i f_i(x) \right] \\
& =  \mathop{\mathbb{E}}\limits_{ \boldsymbol{\sigma}} \left[ \sup_{\|x\|_2 \leq B} \frac{1}{N} \sum_{i=1}^N \sigma_i w_i^Tx \right] \tag{$f_i(x)=w_i^Tx$} \\
& =\underset{\boldsymbol{\sigma}}{\mathbb{E}} \left[ \sup _{\|x\|_2 \leq B} x^T\left(\frac{1}{N} \sum_{i=1}^N \sigma_i w_i\right) \right] \tag{$a^Tb=b^Ta$} \\
& =\frac{B}{N} \underset{\boldsymbol{\sigma}}{\mathbb{E}}\left\|\sum_{i=1}^N \sigma_i w_i\right\|_2 \tag{$a^Tb \le \|a\|_2 \|b\|_2$} \\
& \le \frac{B}{N} \left( \underset{\boldsymbol{\sigma}}{\mathbb{E}}\left\|\sum_{i=1}^N \sigma_i w_i\right\|_2^2 \right)^{\frac{1}{2}} \tag{Jensen inequality: $\mathbb{E}x \le \sqrt{\mathbb{E}x^2}$} \\
& = \frac{B}{N} \left\{ \underset{\boldsymbol{\sigma}}{\mathbb{E}}\left[\left(\sum_{i=1}^N \sigma_i w_i^T \right) \left(\sum_{i=1}^N \sigma_i w_i \right)\right] \right\}^{\frac{1}{2}} \\
& = \frac{B}{N} \left[ \underset{\boldsymbol{\sigma}}{\mathbb{E}}\left( \sum_{i=1}^N \underbrace{\sigma_i^2}_{1} w_i^Tw_i + \underbrace{\sum_{i=1}^N \sum_{j=1, j \neq i}^N \sigma_i \sigma_j w_i^Tw_j}_{0} \right) \right]^{\frac{1}{2}} \\
& = \frac{B}{N} \left( \sum_{i=1}^N  w_i^T w_i \right)^{\frac{1}{2}} \\
& \le \frac{B}{N} \left(N \max{\left\| w\right\|_2^2} \right)^{\frac{1}{2}} \\
& \le \frac{BC}{\sqrt{N}} \tag{$\left\| w\right\|_2 \le C$}.
\end{align*}
\end{proof}

\subsection{Two-layer Neural Network} \label{proof:theorem_3}


Given Section~\ref{eq::rad_h}, we provide the bound below.
\begin{lemma}[Two-layer Neural Network] \label{theorem:nn_2}
Let $\mathcal{H}=\{x \mapsto w^T\phi(Ux) \}$, where $x \in \mathbb{R}^d$, $U \in \mathbb{R}^{m \times d}$, $w \in \mathbb{R}^{m}$, $m$ is the number of the hidden layer, and $\phi(x)=\max{(0,x)}$ is the element-wise ReLU function. 
Given $N$ classifiers from $\mathcal{H}$,
assume that $\|x\|_2 \leq B$, $\|w\|_2 \leq B'$, and $\|U_i\|_2 \leq C$, where $U_j$ is the j-th row of $U$. Then $\Re_N(\mathcal{Z}) \leq \frac{\sqrt{m}BB'C}{\sqrt{N}}$.
\end{lemma}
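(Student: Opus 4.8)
The plan is to bound $\Re_N(\mathcal{Z})$ for the two-layer network by peeling off the outer linear layer and then the activation, reducing everything to the linear-model bound of Lemma~\ref{theorem:linear_model}. Writing $f_i(x)=w_i^T\phi(U_ix)$, I would first apply Cauchy--Schwarz to the inner product $w_i^T\phi(U_ix)$, using $\|w_i\|_2\le B'$, so that $\sigma_i w_i^T\phi(U_ix)\le B'\,\sigma_i'\,\|\phi(U_ix)\|_2$ for suitable signs; but to keep the Rademacher structure intact the cleaner route is to bound $\|\phi(U_ix)\|_2$ coordinatewise. Since $\phi$ is $1$-Lipschitz with $\phi(0)=0$, each coordinate satisfies $|\phi(U_{i,j}x)|\le\|U_{i,j}\|_2\|x\|_2\le CB$, and there are $m$ coordinates, so $\|\phi(U_ix)\|_2\le\sqrt{m}\,CB$. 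Combining the supremum over $\|x\|_2\le B$ with these norm bounds collapses the term to $B'\sqrt{m}\,CB/\sqrt{N}$ after the same second-moment computation (expanding the square, using $\mathbb{E}\sigma_i^2=1$ and $\mathbb{E}\sigma_i\sigma_j=0$ for $i\ne j$) that drove the linear case.

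Concretely, the key steps in order are as follows. First, reduce $\mathcal{R}_N(\mathcal{Z})$ to $\Re_N(\mathcal{Z})$ via the $1$-Lipschitz contraction already established in Section~\ref{eq::rad_h}. Second, bound the supremum inside the expectation by separating the role of the randomness $\boldsymbol{\sigma}$ from the data: apply Jensen's inequality $\mathbb{E}X\le\sqrt{\mathbb{E}X^2}$ to move to the second moment, exactly mirroring the linear-model proof. Third, expand the square $\mathbb{E}_{\boldsymbol{\sigma}}\big(\sum_i\sigma_i w_i^T\phi(U_ix)\big)^2$, kill the cross terms by independence of the Rademacher variables, and retain only the diagonal $\sum_i (w_i^T\phi(U_ix))^2$. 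Fourth, bound each diagonal term by $\|w_i\|_2^2\|\phi(U_ix)\|_2^2\le (B')^2\cdot mC^2B^2$ using Cauchy--Schwarz together with the coordinatewise Lipschitz estimate on $\phi$. Finally, summing $N$ such terms, taking the square root, and dividing by $N$ yields the factor $\sqrt{N}\cdot\sqrt{m}\,B'CB/N=\sqrt{m}BB'C/\sqrt{N}$.

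The main obstacle is handling the activation $\phi$ correctly while preserving the clean Rademacher cancellation. In the linear case the term $w_i^Tx$ is linear in the shared variable $x$, so the supremum over $\|x\|_2\le B$ factors out neatly as a single norm $\|\sum_i\sigma_i w_i\|_2$; with the nonlinearity $\phi(U_ix)$ the argument no longer factors through a single vector, so I cannot simply pull $x$ out. The resolution I would use is to bound $\|\phi(U_ix)\|_2$ by a data-independent constant $\sqrt{m}CB$ \emph{before} invoking the second-moment argument, which decouples the nonlinearity from the Rademacher averaging at the cost of slightly loosening the bound. One should check that this coordinatewise bound $|\phi(U_{i,j}x)|\le\|U_{i,j}\|_2\|x\|_2$ is valid for ReLU (it is, since $\phi(t)\le|t|$ and $\phi(0)=0$), and that the resulting estimate is uniform in $x$ so that the supremum is genuinely controlled. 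This recovers the stated bound and, as the remark notes, exhibits the two-layer result as a special case of the depth-$l$ bound in Lemma~\ref{theorem:nn_3}.
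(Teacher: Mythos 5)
Your proposal mirrors the paper's own computation for this lemma (reduce to $\Re_N(\mathcal{Z})$, use $\|w_i\|_2\le B'$ and Cauchy--Schwarz, expand a second moment so that Rademacher cross terms cancel, and finish with the uniform estimate $\|\phi(U_ix)\|_2\le\sqrt{m}CB$), but as written it has a genuine gap at the decisive step --- precisely the point you yourself flag as ``the main obstacle.'' The cancellation $\mathbb{E}_{\boldsymbol{\sigma}}\bigl(\sum_{i}\sigma_i a_i(x)\bigr)^2=\sum_i a_i(x)^2$, where $a_i(x)=w_i^T\phi(U_ix)$, is valid only for $x$ fixed independently of $\boldsymbol{\sigma}$. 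After your Jensen step, the quantity you must control is $\mathbb{E}_{\boldsymbol{\sigma}}\bigl[\bigl(\sup_{x}\sum_i\sigma_i a_i(x)\bigr)^2\bigr]$, in which the maximizing $x$ depends on $\boldsymbol{\sigma}$; since $\mathbb{E}\sup\ge\sup\mathbb{E}$, you cannot pass the expectation inside the supremum, so ``killing the cross terms by independence'' is not justified --- at the $\boldsymbol{\sigma}$-dependent maximizer the cross terms are typically positive, not zero-mean. The linear case (Lemma~\ref{theorem:linear_model}) avoids this entirely because the supremum there is evaluated in closed form \emph{before} any averaging, $\sup_{\|x\|_2\le B}\sum_i\sigma_iw_i^Tx=B\bigl\|\sum_i\sigma_iw_i\bigr\|_2$, after which the second moment involves only the fixed vectors $w_i$.

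Your proposed resolution makes the gap worse rather than repairing it. If you bound $\|\phi(U_ix)\|_2\le\sqrt{m}CB$ \emph{before} invoking the second-moment argument and thereby ``decouple the nonlinearity from the Rademacher averaging,'' the vectors $\phi(U_ix)$ are in effect replaced by free vectors $v_i$ with $\|v_i\|_2\le\sqrt{m}CB$; then the choice $v_i=\sqrt{m}CB\,\sigma_iw_i/\|w_i\|_2$ shows the decoupled supremum is at least of order $\sqrt{m}BB'C$ for every realization of $\boldsymbol{\sigma}$, so the $1/\sqrt{N}$ factor is lost completely. The $\sqrt{N}$ gain comes exactly from the coupling (all $N$ summands share the same $x$) interacting with the sign cancellation, and no argument that discards the coupling before averaging can recover it. To be fair, the paper's own proof finesses the same exchange --- it marks $\sum_{i\ne j}\sigma_i\sigma_j V_i^TV_j$ as zero underneath $\sup_x$ --- so your proposal is a faithful scalar analogue of it; a fully rigorous route is the one used for Lemma~\ref{theorem:nn_3} (following Golowich et al.), namely the exponential-moment trick with a peeling lemma and a bounded-differences, sub-Gaussian estimate, which handles $\mathbb{E}_{\boldsymbol{\sigma}}\sup_x$ correctly and still yields the $O(1/\sqrt{N})$ rate.
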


\begin{proof}
\begin{align*}
\Re_N(\mathcal{Z}) & = \mathop{\mathbb{E}}\limits_{ \boldsymbol{\sigma}} \left[ \sup_{\|x\|_2 \leq B} \frac{1}{N} \sum_{i=1}^N \sigma_i f_i(x) \right] \\
& = \mathop{\mathbb{E}}\limits_{ \boldsymbol{\sigma}} \left[ \sup_{\|x\|_2 \leq B} \frac{1}{N} \sum_{i=1}^N \sigma_i w_i^T\phi(U_ix) \right] \tag{$f_i(x)=w_i^T\phi(U_ix)$} \\
& = \frac{B'}{N} \mathop{\mathbb{E}}\limits_{ \boldsymbol{\sigma}} \left[ \sup_{\|x\|_2 \leq B} \left\| \sum_{i=1}^N \sigma_i \phi(U_ix) \right\|_2 \right] \tag{$\|w\|_2 \leq B'$}\\
& = \frac{B'}{N} \mathop{\mathbb{E}}\limits_{ \boldsymbol{\sigma}} \left[ \sup_{\|x\|_2 \leq B} \left\| \sum_{i=1}^N \sigma_i V_i \right\|_2 \right] \tag{Denote $V_i=\begin{bmatrix} \phi(U_{1i}x) \\ \vdots  \\ \phi(U_{mi}x) \end{bmatrix} \in \mathbb{R}^m$} \\
& = \frac{B'}{N} \mathop{\mathbb{E}}\limits_{ \boldsymbol{\sigma}} \left[ \sup_{\|x\|_2 \leq B} \sqrt{\left( \sum_{i=1}^N \sigma_i V_i^T \right)\left( \sum_{i=1}^N \sigma_i V_i \right)} \right] \\
& = \frac{B'}{N} \mathop{\mathbb{E}}\limits_{ \boldsymbol{\sigma}} \left[ \sup_{\|x\|_2 \leq B} \left( \sum_{i=1}^N \underbrace{\sigma_i^2}_{1} V_i^TV_i + \underbrace{\sum_{i=1}^N \sum_{j=1, j \ne i}^N \sigma_i \sigma_j V_i^TV_j}_{0}  \right)^{\frac{1}{2}} \right] \\
& = \frac{B'}{N}  \sup_{\|x\|_2 \leq B} \left( \sum_{i=1}^N V_i^TV_i  \right)^{\frac{1}{2}}  \\
& \le \frac{B'}{N}  \sup_{\|x\|_2 \leq B} \left( N \max_i{\left\|V_i \right\|_2^2} \right)^{\frac{1}{2}}  \\
& \le \frac{B'}{\sqrt{N}}  \sup_{\|x\|_2 \leq B} \left(\max_i{\|V_i\|_2} \right) 
\end{align*}

For $V_i=\begin{bmatrix} \phi(U_{1i}x) \\ \vdots  \\ \phi(U_{mi}x) \end{bmatrix} \in \mathbb{R}^m$, we have
\begin{align*}
     \sup_{\|x\|_2 \leq B} \left(\max_i{\|V_i\|_2} \right)  & = \sup_{\|x\|_2 \leq B} \left(\max_i{\left\|\begin{bmatrix} \phi(U_{1i}x) \\ \vdots  \\ \phi(U_{mi}x) \end{bmatrix}\right\|_2} \right) \\
    & \le \sup_{\|x\|_2 \leq B} \left(\max_i{\left\|\begin{bmatrix} U_{1i}x \\ \vdots  \\ U_{mi}x \end{bmatrix}\right\|_2} \right) \tag{$|\phi(x)| \le |x|$} \\
    & \le \sqrt{m}\sup_{\|x\|_2 \leq B} \left(\max_i{\max_j{\left\|U_{ji}x\right\|_2}} \right) \\
    & \le \sqrt{m} \sup_{\|x\|_2 \leq B} \left(\max_i{\max_j{\left\|U_{ji}\right\|_2 \left\|x\right\|_2}} \right) \\
    & = \sqrt{m}BC \tag{$\|x\|_2 \leq B$ and $\|U_{ji}\|_2 \leq C$}
\end{align*}
Finally, 
$$\Re_N(\mathcal{Z}) \le \frac{B'}{\sqrt{N}}  \sup_{\|x\|_2 \leq B} \left(\max_i{\|V_i\|_2} \right) \le \frac{\sqrt{m}BB'C}{\sqrt{N}}$$

The proof is complete.

\end{proof}

\subsection{Proof of Lemma~\ref{theorem:nn_3}} \label{proof:theorem_4}

For simplicity, denote $f(\theta_i;x)$ as $f_i(x)$ and $i \in \{1, \cdots, N \}$ as $i \in [N]$.

First, we begin with a lemma, which is a similar version of Lemma 1 from~\citep{golowich2018size}.

\begin{lemma} \label{supp:lemma_1}
Let $\phi$ be a 1-Lipschitz, positive-homogeneous activation function which is applied element-wise (such as the ReLU). Then for any class of vector-valued functions $\mathcal{F}$, any convex and monotonically increasing function $g: \mathbb{R} \rightarrow [0, \infty)$ and $R \in \mathbb{R}_+$, there holds:
\begin{equation}
    \mathbb{E}_{\boldsymbol{\sigma}} \sup _{f \in \mathcal{F}, W:\|W\|_F \leq R} g\left(\left\|\sum_{i=1}^N \sigma_i \phi\left(W f_i\left(x\right)\right)\right\|\right) \leq 2 \cdot \mathbb{E}_{\boldsymbol{\sigma}} \sup _{f \in \mathcal{F}} g\left(R \cdot\left\|\sum_{i=1}^N \sigma_i f_i\left(x\right)\right\|\right)
\end{equation}
\end{lemma}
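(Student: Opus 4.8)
The plan is to follow the layer-peeling strategy of \citet{golowich2018size}, stripping the single application of $W$ and the activation $\phi$ from inside $g$ while paying only a multiplicative factor of $2$. I would carry out four reductions in order. First, I would eliminate the matrix $W$ using the positive-homogeneity of $\phi$. Writing $W$ through its rows $w_1, \dots, w_k$ and setting $\hat w_j = w_j/\|w_j\|$, positive-homogeneity gives $\phi(w_j^\top f_i(x)) = \|w_j\|\,\phi(\hat w_j^\top f_i(x))$, so that $\|\sum_i \sigma_i \phi(W f_i(x))\|^2 = \sum_j \|w_j\|^2 \bigl(\sum_i \sigma_i \phi(\hat w_j^\top f_i(x))\bigr)^2$. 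Under the budget $\sum_j \|w_j\|^2 \le R^2$ this is a nonnegative linear combination of the terms $\bigl(\sum_i \sigma_i \phi(\hat w_j^\top f_i(x))\bigr)^2$, hence maximized by placing the whole budget on the single direction achieving the largest value. Taking square roots, the supremum over $W$ collapses to $R\cdot \sup_{\|\hat w\|=1}\bigl|\sum_i \sigma_i \phi(\hat w^\top f_i(x))\bigr|$, which extracts the factor $R$ demanded by the statement and leaves only a unit vector and an absolute value.

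Next I would remove the absolute value: since $g$ is nonnegative and nondecreasing, $g(R|S|) = \max(g(RS),g(-RS)) \le g(RS) + g(-RS)$ for $S = \sum_i \sigma_i \phi(\hat w^\top f_i(x))$, and splitting the supremum together with the fact that $\boldsymbol{\sigma}$ and $-\boldsymbol{\sigma}$ are equidistributed shows the two resulting expectations coincide, producing exactly the factor $2$. With the absolute value gone, I would then apply the Ledoux--Talagrand comparison inequality for a convex nondecreasing wrapper---this is where the convexity of $g$ enters, and it is legitimate because $\phi$ is $1$-Lipschitz and positive-homogeneous, hence $\phi(0)=0$---to peel off $\phi$, replacing $\phi(\hat w^\top f_i(x))$ by $\hat w^\top f_i(x)$ inside $g$ with no additional factor. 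Finally I would resolve the supremum over the unit vector via $\sup_{\|\hat w\|=1}\hat w^\top(\sum_i \sigma_i f_i(x)) = \|\sum_i \sigma_i f_i(x)\|$ and the monotonicity of $g$, which reproduces the right-hand side.

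I expect the contraction step to be the main obstacle, as it demands the comparison principle in its factor-free, absolute-value-free form for a convex nondecreasing $g$, applied with the supremum taken jointly over $f \in \mathcal{F}$ and the unit vector $\hat w$; invoking the absolute-value version instead would inflate the constant from $2$ to $4$. The budget-concentration argument is the other delicate point, since the collapse of a full matrix $W$ to a single row is precisely where positive-homogeneity is indispensable, and one must verify that the optimal row direction can be chosen uniformly so that the entire Frobenius budget is legitimately concentrated on one row. Keeping these two reductions cleanly separated---the absolute-value removal supplying the factor $2$ and the contraction supplying none---is exactly what secures the stated constant.
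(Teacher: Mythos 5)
Your proposal is correct and follows essentially the same route as the paper's proof (itself adapted from Lemma 1 of \citet{golowich2018size}): positive homogeneity plus budget concentration to collapse $W$ to a single row, symmetry of $\boldsymbol{\sigma}$ together with $g(|z|)\le g(z)+g(-z)$ to remove the absolute value at the cost of the factor $2$, the Ledoux--Talagrand contraction in its convex, factor-free, absolute-value-free form to peel off $\phi$, and Cauchy--Schwarz with monotonicity of $g$ to recover $R\cdot\left\|\sum_{i=1}^N \sigma_i f_i(x)\right\|$. The only cosmetic difference is that you normalize to a unit vector $\hat w$ and extract $R$ at the outset, whereas the paper keeps $\|w\|=R$ and extracts it at the end; if anything, you are more explicit than the paper about the contraction step, which the paper applies silently.
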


\begin{proof}
Let $w_1, \cdots, w_h$ be the rows of $W$, we have

\begin{align*}
    \left\|\sum_{i=1}^N \sigma_i \phi\left(W f_i\left(x\right)\right)\right\|^2 = & \sum_{j=1}^h \left[  \sum_{i=1}^N \sigma_i \phi(w_jf_i(x))  \right]^2 \\ = & \sum_{j=1}^h\left\|w_j\right\|^2\left[\sum_{i=1}^N \sigma_i \phi\left(\frac{w_j^{\top}}{\left\|w_j\right\|} f_i\left(x\right)\right)\right]^2 \tag{$\phi(ax)=a\phi(x)$}
\end{align*}

Therefore, the supremum of this over all $w_1, \cdots, w_h$ such that $\|W\|_F^2=\sum_{j=1}^h\left\|w_j\right\|^2 \leq R^2$ must be attained when $\left\|w_j\right\|=R$ for some $j$ and $\left\|w_i\right\|=0$ for all $i \neq j$.
So we have
$$\mathbb{E}_{\boldsymbol{\sigma}} \sup _{f \in \mathcal{F}, W:\|W\|_F \leq R} g\left(\left\|\sum_{i=1}^N \sigma_i \phi\left(W f_i\left(x\right)\right)\right\|\right)=\mathbb{E}_{\boldsymbol{\sigma}} \sup _{f \in \mathcal{F}, w:\|w\|=R} g\left(\left|\sum_{i=1}^N \sigma_i \phi\left(w^{\top} f_i\left(x\right)\right)\right|\right) .$$
Since $g(|z|) \leq g(z)+g(-z)$, this can be upper bounded by
\begin{align*}
    \mathbb{E}_{\boldsymbol{\sigma}} \sup g\left(\sum_{i=1}^N \sigma_i \phi\left(w^{\top} f_i\left(x\right)\right)\right)+\mathbb{E}_{\boldsymbol{\sigma}} \sup g\left(-\sum_{i=1}^N \sigma_i \phi\left(w^{\top} f_i\left(x\right)\right)\right) =2 \cdot \mathbb{E}_{\boldsymbol{\sigma}} \sup g\left(\sum_{i=1}^N \sigma_i \phi\left(w^{\top} f_i\left(x\right)\right)\right), \notag
\end{align*}
where the equality follows from the symmetry in the distribution of the $\sigma_i$ random variables. The right hand side in turn can be upper bounded by
\begin{align*}
2\cdot \mathbb{E}_{\boldsymbol{\sigma}} \sup _{f \in \mathcal{F}, w:\|w\|=R} g\left(\sum_{i=1}^N \sigma_i w^{\top} f_i\left(x\right)\right) & \leq 2 \cdot \mathbb{E}_{\boldsymbol{\sigma}} \sup _{f \in \mathcal{F}, w:\|w\|=R} g\left(\|w\|\left\|\sum_{i=1}^N \sigma_i f_i\left(x\right)\right\|\right) \\
& =2 \cdot \mathbb{E}_{\boldsymbol{\sigma}} \sup _{f \in \mathcal{F}} g\left(R \cdot\left\|\sum_{i=1}^N \sigma_i f_i\left(x\right)\right\|\right) .
\end{align*}
\end{proof}
With this lemma in hand, we can prove~\cref{theorem:nn_3}:
\vspace{-5pt}
\begin{proof}
For $\lambda > 0$, the rademacher complexity can be upper bounded as
\begin{align*}
N \Re_N\left(\mathcal{Z}\right) & =\mathbb{E}_{\boldsymbol{\sigma}} \sup_{f_1, \cdots, f_n} \sum_{i=1}^N \sigma_i f_i(x) \\
& \leq \frac{1}{\lambda} \log \mathbb{E}_{\boldsymbol{\sigma}} \sup \exp \left(\lambda \sum_{i=1}^N \sigma_i f_i(x) \right) \tag{Jensen's inequality}\\
& \leq \frac{1}{\lambda} \log \mathbb{E}_{\boldsymbol{\sigma}} \sup \exp \left( \underbrace{\sup_{i \in [n]}\|W_{i,l}\|_F}_{T_l} \left\|\lambda \sum_{i=1}^N \sigma_i \phi_{l-1}\underbrace{\left(W_{i,l-1} \phi_{l-2}\left(\ldots \phi_1\left(W_{i,1} x\right)\right)\right)}_{f_{i,l-1}(x)}\right\|\right)
\end{align*}

We write this last expression as
\begin{align*}
& \frac{1}{\lambda} \log \mathbb{E}_{\boldsymbol{\sigma}} \sup \exp \left(T_l \cdot \lambda\left\|\sum_{i=1}^N \sigma_i \phi_{l-1}\left(f_{i,l-1}(x)\right)\right\|\right) \\
\leq & \frac{1}{\lambda} \log \left(2 \cdot \mathbb{E}_{\boldsymbol{\sigma}} \sup \exp \left(T_l \cdot T_{l-1} \cdot \lambda\left\|\sum_{i=1}^N \sigma_i f_{i, l-2}\left(x\right)\right\|\right)\right) \tag{Lemma~\ref{supp:lemma_1}} \\
\leq & \cdots \tag{Repeatedly apply Lemma~\ref{supp:lemma_1}} \\
\leq & \frac{1}{\lambda} \log \left(2^{l-2} \cdot \mathbb{E}_{\boldsymbol{\sigma}} \sup \exp \left( \lambda \cdot \prod_{i=1}^{l-1} T_i \cdot \left\|\sum_{i=1}^N \sigma_i \phi_1(W_{i,1}x) \right\|\right)\right) \\
\leq & \frac{1}{\lambda} \log \left(2^{l-1} \cdot \mathbb{E}_{\boldsymbol{\sigma}} \sup \exp \left( \lambda \cdot \prod_{i=1}^{l-1} T_i \cdot \left\|\sum_{i=1}^N \sigma_i W_{i,1}x \right\| \right)\right) \\
\end{align*}

Assume that $W_{i,1}^*, i \in [N]$ maximizes 
$$\sup \exp \left( \lambda \cdot \prod_{i=1}^{l-1} T_i \cdot \left\|\sum_{i=1}^N \sigma_i W_{i,1}x \right\| \right).$$
Therefore,
\begin{align*}
    & \frac{1}{\lambda} \log \left(2^{l-1} \cdot \mathbb{E}_{\boldsymbol{\sigma}} \sup \exp \left( \lambda \cdot \prod_{i=1}^{l-1} T_i \cdot \left\|\sum_{i=1}^N \sigma_i W_{i,1}x \right\| \right)\right) \\ = & 
    \frac{1}{\lambda} \log \left(2^{l-1} \cdot \mathbb{E}_{\boldsymbol{\sigma}} \exp \left( \lambda \cdot \underbrace{\prod_{i=1}^{l-1} T_i \cdot \left\|\sum_{i=1}^N \sigma_i W_{i,1}^*x \right\| }_{Z}\right)\right) \\ = &
    \frac{1}{\lambda} \log \left( 2^{l-1} \cdot \mathbb{E}_{\boldsymbol{\sigma}} \exp \left( \lambda Z \right) \right) \\ = &
    \frac{(l-1) \log (2)}{\lambda} + \frac{1}{\lambda} \log \left\{\mathbb{E}_{\boldsymbol{\sigma}} \exp \left(  \lambda Z \right) \right\} \\ = &
    \frac{(l-1) \log (2)}{\lambda} + \frac{1}{\lambda} \log \{\mathbb{E} \exp \lambda(Z-\mathbb{E} Z)\} +\mathbb{E} Z
\end{align*}

For $\mathbb{E} Z$, we have
\begin{align*}
    \mathbb{E} Z = & \prod_{i=1}^{l-1} T_i \sqrt{\mathbb{E}_{\boldsymbol{\sigma}}\left[\left\|\sum_{i=1}^N \sigma_i W_{i,1}^*x \right\|^2\right]}
    \\ = & \prod_{i=1}^{l-1} T_i \sqrt{\mathbb{E}_{\boldsymbol{\sigma}}\left[\sum_{i=j}^N \sigma_i \sigma_j \left(W_{i,1}^*x \right)^T \left( W_{j,1}^*x \right) \right]} 
    \\ \leq & \prod_{i=1}^{l-1} T_i \left( T_1 B \sqrt{N} \right)
    \\ = & B \sqrt{N}  \prod_{i=1}^{l} T_i
\end{align*}

Note that $Z$ is a deterministic function of the \textit{i.i.d}. random variables $\sigma_1, \cdots, \sigma_N$, and satisfies 
$$Z(\sigma_1, \cdots, \sigma_i, \cdots, \sigma_N) - Z(\sigma_1, \cdots, -\sigma_i, \cdots, \sigma_N) \le 2 B \underbrace{\prod_{i=1}^{l} T_i}_{T}.$$
This means that $Z$ satisfies a bounded-difference condition. According to Theorem 6.2 in~\citet{oppenheim1999signals}, $Z$ is sub-Gaussian with variance factor 
$$\frac{1}{4}\sum_{i=1}^N (2BT)^2 = NB^2T^2,$$
and satisfies 
$$\frac{1}{\lambda} \log \{\mathbb{E} \exp \lambda(Z-\mathbb{E} Z)\} \le \frac{1}{\lambda} \cdot \frac{\lambda^2}{2} NB^2T^2 = \frac{\lambda}{2} NB^2T^2.$$

Choosing $\lambda=\frac{\sqrt{2 \log (2) l}}{BT\sqrt{N}}$ and using the above, we get that
$$\frac{(l-1) \log (2)}{\lambda} + \frac{1}{\lambda} \log \{\mathbb{E} \exp \lambda(Z-\mathbb{E} Z)\} +\mathbb{E} Z \le \left(\sqrt{(2\log2)l}+1\right)BT \sqrt{N}$$
Finally, we get
$$\Re_N\left(\mathcal{Z}\right) \leq \frac{\left(\sqrt{(2\log2)l}+1\right)BT}{\sqrt{N}}$$

\end{proof}


\section{Main Proof}

\subsection{Transferability Error and Generalization Error} \label{appendix:transfer_generalization}

For $z=(x,y)$, there holds
\begin{align*}
TE(z) = L_P(z^*) - L_P(z) & \le  L_P(z^*) - L_P(z) +(L_E(z)-L_E(z^*)) \\
& =  (L_P(z^*)-L_E(z^*)) + (L_E(z)-L_P(z)) \\
& \le \sup_{x \in \mathcal{B}_\epsilon(x)} ( L_P(z) - L_E(z) ) + \sup_{x \in \mathcal{B}_\epsilon(x)} ( L_E(z) - L_P(z) ) \\
& \le \sup_{z \in \mathcal{Z}} ( L_P(z) - L_E(z) ) + \sup_{z \in \mathcal{Z}} ( L_E(z) - L_P(z) ). \\
& \le 2\sup_{z \in \mathcal{Z}} \left| L_P(z) - L_E(z) \right|.
\end{align*}

\subsection{Proof of Theorem~\ref{theorem:diversity}} \label{appendix:diversity}

We prove a general version of the theorem as follows:

\begin{theorem}
Consider the squared error loss $l(\theta, x, y)=\left[ f(\theta;x) - y \right]^2$ for a data point $z=(x,y)$.
Assume that the data is generated by a function $g(x)$ such that $y=g(x)+\rho$, where the zero-mean noise $\rho$ has a variance of  $\eta^2$ and is independent of $x$.
Then there holds
\begin{align}
    TE(z,\epsilon) = L_P(z^*) - \eta^2 - \underbrace{\var_{\theta \sim \mathcal{P}_\Theta} f(\theta;x)}_{\text{Diversity}} - \underbrace{\left[ g(x)-\mathbb{E}_{\theta \sim \mathcal{P}_\Theta} f(\theta;x) \right]^2}_{\text{Attack}}.
\end{align}
\end{theorem}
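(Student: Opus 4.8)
The plan is to observe that, by the definition $TE(z,\epsilon)=L_P(z^*)-L_P(z)$, the entire content of the statement reduces to a decomposition of the single quantity $L_P(z)=\mathbb{E}_{\theta\sim\mathcal{P}_\Theta}[(f(\theta;x)-y)^2]$, which I recognize as a bias--variance style decomposition in which the parameter draw $\theta$ plays the role usually taken by the training sample. The one subtlety to flag at the outset is that, once we impose the generative model $y=g(x)+\rho$, the label itself is random, so the expectation defining $L_P(z)$ should be understood jointly over $\theta$ and the noise $\rho$; I would make this explicit before expanding anything.

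First I would introduce the centered prediction $\tilde{f}=\tilde{f}(\theta;x)=\mathbb{E}_{\theta\sim\mathcal{P}_\Theta}f(\theta;x)$ and insert $\pm\tilde{f}$ and $\pm g(x)$ inside the squared loss, so that $f(\theta;x)-y=(f(\theta;x)-\tilde{f})+(\tilde{f}-g(x))-\rho$. Expanding the square produces three pure squares and three cross terms. Taking the joint expectation, the pure squares become $\var_{\theta\sim\mathcal{P}_\Theta}f(\theta;x)$ (the Diversity term), the deterministic quantity $[g(x)-\tilde{f}]^2$ (the Attack term), and $\mathbb{E}[\rho^2]=\eta^2$, respectively.

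The key step is then to verify that all three cross terms vanish. The term proportional to $(\tilde{f}-g(x))\,\mathbb{E}_\theta[f(\theta;x)-\tilde{f}]$ is zero by the very definition of $\tilde{f}$ as the mean prediction; the two terms involving $\rho$ vanish because $\rho$ has zero mean and is independent of both $x$ and $\theta$, so each factorizes and carries a factor $\mathbb{E}[\rho]=0$. Collecting the survivors gives $L_P(z)=\eta^2+\var_{\theta\sim\mathcal{P}_\Theta}f(\theta;x)+[g(x)-\tilde{f}]^2$, and substituting into $TE(z,\epsilon)=L_P(z^*)-L_P(z)$ yields the claimed identity.

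I do not expect a genuine analytical obstacle, since the result is an exact algebraic identity and every step is elementary; the only place requiring care is the bookkeeping around the expectation domain, namely ensuring the cross terms are discharged using the correct independence and zero-mean properties of $\rho$ and that the centering of $f$ is invoked for the remaining term. Finally, I would remark that the main-text form of \cref{theorem:diversity}, which carries a single Vulnerability term $l(\tilde{f}(\theta;x),y)$, is recovered in the noiseless special case $y=g(x)$ (that is, $\rho\equiv 0$ and $\eta=0$), where the Attack term $[g(x)-\tilde{f}]^2$ coincides with $[\tilde{f}-y]^2$.
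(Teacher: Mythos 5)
Your proposal is correct and follows essentially the same route as the paper's proof: both are bias--variance-style expansions of $L_P(z)$ in which all cross terms vanish by the centering of $\tilde f$ and the zero-mean, independent noise $\rho$; the paper merely performs the expansion in two sequential steps (inserting $\pm g(x)$ first, then $\pm\,\mathbb{E}_{\theta}f(\theta;x)$) rather than your single three-way split. Your explicit remark that the expectation must be understood jointly over $\theta$ and $\rho$ is careful bookkeeping the paper glosses over, and your recovery of the main-text theorem by setting $\rho\equiv 0$ matches the paper's own concluding remark exactly.
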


\begin{remark}
    The irreducible error $\eta^2$ is constant because it arises from inherent noise and randomness in the data~\citep{geman1992neural}.
\end{remark}

\begin{proof}
Given~\eqref{define:eq:te}, it is equivalent to prove 
\begin{equation}
    L_P(z) = \var_{\theta} f(\theta;x) + \left[ g(x)-\mathbb{E}_{\theta \sim \mathcal{P}_\Theta} f(\theta;x) \right]^2 + \eta^2.
\end{equation}
Note that
\begin{align*}
L_P(z) & = \mathbb{E}_{\theta \sim \mathcal{P}_\Theta} \left[f(\theta;x)-y \right]^2
\\ & = \mathbb{E}_{\theta \sim \mathcal{P}_\Theta} \left[f(\theta;x) - g(x) + g(x) - y \right]^2
\\ & = \mathbb{E}_{\theta \sim \mathcal{P}_\Theta} \left[ (f(\theta;x) - g(x))^2 + (g(x) - y)^2 + 2(g(x)-y)(f(\theta;x) - g(x)) \right].
\end{align*}

Recall that $y=g(x)+\rho$ with $\mathbb{E} (\rho) = 0$ and $\var (\rho) = \eta^2$, we have
$$\mathbb{E}_{\theta \sim \mathcal{P}_\Theta} (g(x) - y)^2 = \eta^2,$$
and
$$\mathbb{E}_{\theta \sim \mathcal{P}_\Theta} \left[ 2(g(x)-y)(f(\theta;x) - g(x)) \right] = -2\mathbb{E} (\rho)\mathbb{E}_{\theta \sim \mathcal{P}_\Theta} \left[ f(\theta;x) - g(x) \right] = 0.$$
Therefore,
\begin{equation}
    L_P(z)=\mathbb{E}_{\theta \sim \mathcal{P}_\Theta} \left[ f(\theta;x) - g(x) \right]^2 + \eta^2.
\end{equation}
Likewise, we decompose the first term as
\begin{align*}
& \mathbb{E}_{\theta} \left[ f(\theta;x) - g(x) \right]^2 
\\ = & \mathbb{E}_{\theta} \left[ f(\theta;x) - \mathbb{E}_{\theta}f(\theta;x) + \mathbb{E}_{\theta}f(\theta;x) - g(x) \right]^2
\\ = & \mathbb{E}_{\theta} \left[ (f(\theta;x) - \mathbb{E}_{\theta}f(\theta;x))^2 + (\mathbb{E}_{\theta}f(\theta;x) - g(x))^2 \right. \\ & - \left. 2(f(\theta;x) - \mathbb{E}_{\theta}f(\theta;x))(\mathbb{E}_{\theta}f(\theta;x) - g(x)) \right]
\\ = & \underbrace{\mathbb{E}_{\theta} (f(\theta;x) - \mathbb{E}_{\theta}f(\theta;x))^2}_{\var_\theta f(\theta;x)} + \underbrace{\mathbb{E}_{\theta} (\mathbb{E}_{\theta}f(\theta;x) - g(x))^2}_{\left( g(x)-\mathbb{E}_{\theta}(f(\theta;x) \right)^2} \\ & -2 \underbrace{\mathbb{E}_{\theta} \left[f(\theta;x) - \mathbb{E}_{\theta}f(\theta;x))(\mathbb{E}_{\theta}f(\theta;x) - g(x)\right]}_{0},
\end{align*}
with the derivations for the second and third term:
\begin{align*}
\mathbb{E}_{\theta} (f(\theta;x) - \mathbb{E}_{\theta}f(\theta;x))^2 & = \left( \mathbb{E}_{\theta} f(\theta;x) \right)^2 -2g(x)\mathbb{E}_{\theta} f(\theta;x) + g^2(x)
\\ & = \left( g(x)-\mathbb{E}_{\theta}(f(\theta;x) \right)^2,
\end{align*}
and
\begin{align*}
& \mathbb{E}_{\theta} \left[f(\theta;x) - \mathbb{E}_{\theta}f(\theta;x))(\mathbb{E}_{\theta}f(\theta;x) - g(x)\right]
\\ = & (\mathbb{E}_{\theta}f(\theta;x))^2 - g(x)\mathbb{E}_{\theta}f(\theta;x) - (\mathbb{E}_{\theta}f(\theta;x))^2 + g(x)\mathbb{E}_{\theta}f(\theta;x)
\\ = & 0.
\end{align*}

As a result, 
\begin{equation}
\mathbb{E}_{\theta} \left[ f(\theta;x) - g(x) \right]^2 = \var_{\theta} f(\theta;x) + \left[ g(x)-\mathbb{E}_{\theta \sim \mathcal{P}_\Theta} f(\theta;x) \right]^2.
\end{equation}

Combining the above results and we complete the proof.

\end{proof}

To prove Theorem~\ref{theorem:diversity}, we just set $\rho=0$ in the above general version of theorem.

Similarly, 
consider the empirical version of Theorem~\ref{theorem:diversity}, 
we decompose $L_E(z)$ as follows:

\begin{theorem}[Vulnerability-diversity Decomposition (empirical version)]
Consider the squared error loss $l(f(\theta;x), y)=\left[ f(\theta;x) - y \right]^2$ for a data point $z=(x,y)$.
Let $\hat{f}(\theta;x)=\frac{1}{N} \sum_{i=1}^N f(\theta_i;x)$ be the expectation of prediction over the distribution on the parameter space.
Then there holds
\begin{align}
    L_E(z) & = \frac{1}{N} \sum_{i=1}^N \ell(f(\theta_i;x), y) \notag \\ & = \underbrace{l(\hat{f}(\theta;x), y)}_{\text{Vulnerability}} + \underbrace{\frac{1}{N} \sum_{j=1}^{N} \bigg( f(\theta_i;x) - \frac{1}{N} \sum_{j=1}^{N}f(\theta_i;x) \bigg)^2}_{\text{Diversity}}. \notag 
\end{align}
\end{theorem}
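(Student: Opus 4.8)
The plan is to establish this as a discrete, sample-average analogue of the population decomposition proved immediately above, replacing the expectation $\mathbb{E}_{\theta \sim \mathcal{P}_\Theta}$ by the empirical average $\frac{1}{N}\sum_{i=1}^N$ over the $N$ surrogate models and specializing to the noise-free case (so that $y$ plays the role previously played by $g(x)$). Since $L_E(z) = \frac{1}{N}\sum_{i=1}^N (f(\theta_i;x) - y)^2$ directly by the definition of the empirical risk in \eqref{equation:er} together with the squared loss, it suffices to prove the purely algebraic identity
\[
\frac{1}{N}\sum_{i=1}^N (f(\theta_i;x) - y)^2 = (\hat{f}(\theta;x) - y)^2 + \frac{1}{N}\sum_{i=1}^N \big( f(\theta_i;x) - \hat{f}(\theta;x) \big)^2 ,
\]
where $\hat{f}(\theta;x) = \frac{1}{N}\sum_{i=1}^N f(\theta_i;x)$ is the sample mean of the predictions.

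First I would insert and subtract the sample mean $\hat{f}(\theta;x)$ inside each squared term, writing $f(\theta_i;x) - y = \big(f(\theta_i;x) - \hat{f}(\theta;x)\big) + \big(\hat{f}(\theta;x) - y\big)$, and then expand the square to obtain three contributions: a variance-type term $\big(f(\theta_i;x) - \hat{f}(\theta;x)\big)^2$, a bias-type term $\big(\hat{f}(\theta;x) - y\big)^2$ that is constant in $i$, and a cross term $2\big(f(\theta_i;x) - \hat{f}(\theta;x)\big)\big(\hat{f}(\theta;x) - y\big)$. Summing over $i$ and dividing by $N$, the bias-type term averages to itself since it carries no dependence on $i$, yielding the Vulnerability summand $l(\hat{f}(\theta;x), y)$, while the variance-type term yields the Diversity summand.

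The crux is that the cross term vanishes: pulling out the common factor $\big(\hat{f}(\theta;x) - y\big)$ leaves $\tfrac{2}{N}\big(\hat{f}(\theta;x) - y\big)\sum_{i=1}^N \big(f(\theta_i;x) - \hat{f}(\theta;x)\big)$, and the remaining sum is zero precisely because $\hat{f}(\theta;x)$ is the sample mean, so $\sum_{i=1}^N f(\theta_i;x) = N\hat{f}(\theta;x)$. This is the exact discrete counterpart of the orthogonality step in the population proof, where the analogous cross term was zero because $\mathbb{E}_{\theta}\big[f(\theta;x) - \mathbb{E}_{\theta} f(\theta;x)\big] = 0$. There is no genuine analytic obstacle here; the only point requiring care is this mean-centering cancellation, after which collecting the two surviving terms reproduces exactly the Vulnerability and Diversity components and completes the proof.
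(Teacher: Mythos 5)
Your proposal is correct and follows essentially the same route as the paper's own proof: add and subtract the sample mean $\hat{f}(\theta;x)$ inside each squared term, expand, and kill the cross term via the mean-centering identity $\sum_{i=1}^N \bigl(f(\theta_i;x) - \hat{f}(\theta;x)\bigr) = 0$ (the paper verifies this same cancellation by expanding the products explicitly, which is an equivalent computation). No gaps; nothing further is needed.
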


The proof is similar to the above:
\begin{align*}
L_E(z) & = \frac{1}{N} \sum_{i=1}^N \left( f(\theta_i;x) - y \right)^2
\\ & = \frac{1}{N} \frac{1}{N}\sum_{i=1}^N \left( f(\theta_i;x) - \frac{1}{N}\sum_{i=1}^N f(\theta_i;x) + \frac{1}{N}\sum_{i=1}^N f(\theta_i;x) - y \right)^2
\\ & = \frac{1}{N} \sum_{i=1}^N \Bigg[ \left( f(\theta_i;x) - \frac{1}{N}\sum_{i=1}^N f(\theta_i;x) \right)^2 + \left( \frac{1}{N}\sum_{i=1}^N f(\theta_i;x) - y \right)^2 + \\ & 2\left( f(\theta_i;x) - \frac{1}{N}\sum_{i=1}^N f(\theta_i;x) \right) \left( \frac{1}{N}\sum_{i=1}^N f(\theta_i;x) - y \right)\Bigg]
\\ & = \underbrace{l(\hat{f}(\theta;x), y)}_{\text{Vulnerability}} + \underbrace{\frac{1}{N} \sum_{j=1}^{N} \bigg( f(\theta_i;x) - \frac{1}{N} \sum_{j=1}^{N}f(\theta_i;x) \bigg)^2}_{\text{Diversity}} + \\ & \frac{2}{N} \sum_{i=1}^N \left( f(\theta_i;x) - \frac{1}{N}\sum_{i=1}^N f(\theta_i;x) \right) \left( \frac{1}{N}\sum_{i=1}^N f(\theta_i;x) - y \right).
\end{align*}

The last terms equals to $0$ because
\begin{align*}
& \sum_{i=1}^N \left( f(\theta_i;x) - \frac{1}{N}\sum_{i=1}^N f(\theta_i;x) \right) \left( \frac{1}{N} \sum_{i=1}^N f(\theta_i;x) - y \right)
\\ = & \frac{1}{N} \left( \sum_{i=1}^N f(\theta_i;x)\right)^2 - y\sum_{i=1}^N f(\theta_i;x) - \frac{1}{N} \left( \sum_{i=1}^N f(\theta_i;x)\right)^2 +  y\sum_{i=1}^N f(\theta_i;x)
\\ = & 0.
\end{align*}

The proof is complete.

\subsection{Proof of Theorem~\ref{theorem:diversity} (KL Divergence Loss)} \label{proof:theorem:kl_decomp}

In this section, we consider a different problem setting and show how to extend Theorem~\ref{theorem:diversity} to KL divergence loss.
We first define multi-class classification in the context of transferable model ensemble adversarial attack.

\paragraph{Multi-class classification.}
Consider a $k$-classification problem.
Given the input space $\mathcal{X} \subset \mathbb{R}^d$ and the output space $\mathcal{Y} \subset \mathbb{R}^k$, we have a joint distribution $\mathcal{P}_\mathcal{Z}$ over the input space $\mathcal{Z} = \mathcal{X} \times \mathcal{Y}$. 
The training set $Z_{\text{train}} = \{ \mathbf{z}_i| \mathbf{z}_i=(\mathbf{x}_i, \mathbf{y}_i) \in \mathcal{Z}, \mathbf{y}_i \in \{ 0,1 \}^k, \|\mathbf{y}_i\|_1=1, i=1, \cdots, M \}$, which consists of $M$ examples drawn independently from $\mathcal{P}_\mathcal{Z}$.
We denote the hypothesis space by $\mathcal{H}: \mathcal{X} \mapsto \mathcal{Y}$ and the parameter space by $\Theta$. Let $f(\boldsymbol{\theta};\cdot) \in \mathcal{H}$ be a classifier parameterized by $\boldsymbol{\theta} \in \Theta$, trained for a classification task using a loss function $\ell: \mathcal{Y} \times \mathcal{Y} \mapsto \mathbb{R}_0^+$.
Let $\mathcal{P}_\Theta$ represent the distribution over the parameter space $\Theta$. Define $\mathcal{P}_{\Theta^N}$ as the joint distribution over the product space $\Theta^N$, which denotes the space of $N$ such sets of parameters.
We use $Z_{\text{train}}$ to train $N$ surrogate models $f(\boldsymbol{\theta}_1;\cdot), \cdots, f(\boldsymbol{\theta}_N;\cdot)$ for model ensemble. 
The training process of these $N$ classifiers can be viewed as sampling the parameter sets $\boldsymbol{\theta}^N=(\boldsymbol{\theta}_1, \ldots, \boldsymbol{\theta}_N)$ from the distribution $\mathcal{P}_{\Theta^N}$, i.e., $\boldsymbol{\theta}^N \sim \mathcal{P}_{\Theta^N}$.
For a data point $\mathbf{z}=(\mathbf{x}, \mathbf{y}) \in \mathcal{Z}$ and $N$ classifiers for model ensemble attack, let the model output be normalized (i.e., $\|f(\boldsymbol{\theta};\mathbf{x})\|_1=1$).
Define the empirical risk $L_E(\mathbf{z})$ and the population risk $L_P(\mathbf{z})$ of the adversarial example $\mathbf{z}$ as
\begin{align}
& L_E(\mathbf{z}) = \frac{1}{N} \sum_{i=1}^N \ell(f(\boldsymbol{\theta}_i;\mathbf{x}), \mathbf{y}), \\
\text{and} \quad & L_P(\mathbf{z}) =  \mathbb{E}_{\boldsymbol{\theta}^N \sim \mathcal{P}_{\Theta^N}} L_E(\mathbf{z}).
\end{align}

Intuitively, a transferable adversarial example leads to a large $L_P(\mathbf{z})$ because it can attack many classifiers with parameter $\boldsymbol{\theta} \in \Theta$.
Therefore, the most transferable adversarial example $\mathbf{z}^* = (\mathbf{x}^*, \mathbf{y})$ around $\mathbf{z}$ is defined as
\begin{align}
    \mathbf{x}^* = \arg \max_{\mathbf{x} \in \mathcal{B}_\epsilon(\mathbf{x})} L_P(\mathbf{z}),  \label{equation:optim-1} 
\end{align}
where $\mathcal{B}_\epsilon(\mathbf{x})=\{ \hat{\mathbf{x}}: \| \hat{\mathbf{x}}-\mathbf{x} \|_2 \le \epsilon\}$ is an adversarial region centered at $\hat{\mathbf{x}}$ with radius $\epsilon>0$.
However, the expectation in $L_P(\mathbf{z})$ cannot be computed directly. Thus, when generating adversarial examples, the empirical version \eqref{equation:er} is used in practice, such as loss-based ensemble attack~\citep{dong2018boosting}.
So the adversarial example $\mathbf{z}=(\mathbf{x}, \mathbf{y})$ is obtained from
\begin{align} \label{equation:transfer_adv-1}
    \mathbf{x} = \arg \max_{\mathbf{x} \in \mathcal{B}_\epsilon(\mathbf{x})} L_E(\mathbf{z}).
\end{align}
There is a gap between the adversarial example $\mathbf{z}$ we find and the most transferable one $\mathbf{z}^*$. It is due to the fact that the ensemble classifiers cannot cover the whole parameter space of the classifier, i.e., there is a difference between $L_P(\mathbf{z})$ and $L_E(\mathbf{z})$.
Accordingly,  
the core objective of transferable model ensemble attack is to design approaches that approximate $L_E(\mathbf{z})$ to $L_P(\mathbf{z})$, thereby increasing the transferability of adversarial examples.

Note that the training process of $N$ classifiers can be viewed as sampling the parameter sets $\overline{\theta}^N = (\overline{\theta}_1, \ldots, \overline{\theta}_N)$ from the distribution $\mathcal{P}_{\Theta^N}$, i.e., $\overline{\theta}^N \sim \mathcal{P}_{\Theta^N}$. 
We generate a transferable adversarial example using these $N$ models and evaluate its performance on another $N$ models $\theta^N = (\theta_1, \ldots, \theta_N)$, which is an independent copy of $\overline{\theta}^N$.
For a data $z=(x,y) \in \mathcal{Z}$ and the parameter set $\theta^N$, our aim is to bound \textbf{the difference of attack performance between the given $N$ models $\overline{\theta}^N$ and $N$ unknown models $\theta^N$}.
In other words, if
\begin{itemize}
    \item An adversarial example $z$ can effectively attack the given model ensemble, i.e., a large $L_E(\mathbf{z})$.
    \item There is guarantee for the difference of attack performance between known and unknown models, i.e., a small $\left| \mathbb{E}_{\mathbf{z},\boldsymbol{\theta}^N \sim \mathcal{P}_{\mathcal{Z},\Theta^N}} \left[ L_P(\mathbf{z}) - L_E(\mathbf{z}) \right] \right|$.
\end{itemize}
Then there is adversarial transferability guarantee for $z$.
We perform the decomposition to analyze $L_E(\mathbf{z})$ in this section. While we provide an information-theoretic analysis to deal with $\left| \mathbb{E}_{\mathbf{z},\boldsymbol{\theta}^N \sim \mathcal{P}_{\mathcal{Z},\Theta^N}} \left[ L_P(\mathbf{z}) - L_E(\mathbf{z}) \right] \right|$ in~\cref{appendix:info_theory}.

Now we decompose $L_E(\mathbf{z})$ into vulnerability, diversity and constants.
It is a similar version of Theorem~\ref{theorem:diversity} using KL divergence loss.

\begin{proposition}[Vulnerability-diversity Decomposition]
Consider KL divergence as the loss function, i.e., $\ell(f(\boldsymbol{\theta}_i;\mathbf{x}), \mathbf{y})=\sum_{j=1}^k f(\boldsymbol{\theta}_i;\mathbf{x}) \log \frac{f(\boldsymbol{\theta}_i;\mathbf{x})}{\mathbf{y}}$.
Let $\overline{f}(\boldsymbol{\theta};\mathbf{x})$ be the normalized geometric mean of ensembles $\{ \mathbf{f}_i \}_{i=1}^N$.
Then there holds
\begin{align}
    L_E(\mathbf{z}) = \underbrace{\ell(\mathbf{y}, \overline{f}(\boldsymbol{\theta};\mathbf{x}))}_{\text{Vulnerability}} + \underbrace{\frac{1}{N}\sum_{i=1}^N \ell(\overline{f}(\boldsymbol{\theta};\mathbf{x}), f(\boldsymbol{\theta}_i;\mathbf{x}))}_{\text{Diversity}}.
\end{align}
\end{proposition}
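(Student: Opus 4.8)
The plan is to recognize this identity as the Bregman ``ambiguity'' decomposition specialized to the KL divergence. Writing $f_i := f(\boldsymbol{\theta}_i;\mathbf{x})$ and $\overline{f} := \overline{f}(\boldsymbol{\theta};\mathbf{x})$, the loss $\ell(a,b)=\sum_j a_j\log(a_j/b_j)=\kl(a\|b)$ is the Bregman divergence $B_\phi(a,b)$ generated by the negative entropy $\phi(p)=\sum_j p_j\log p_j$, whose gradient is $\nabla\phi(p)=\log p+\mathbf{1}$ (componentwise). The normalized geometric mean is the natural combiner here precisely because it is the point whose dual coordinate equals the arithmetic average of the members' dual coordinates up to a multiple of $\mathbf{1}$: indeed $\log\overline{f}_j=\frac{1}{N}\sum_i\log f_{i,j}-\log Z$, where $Z=\sum_{j'}\prod_i f_{i,j'}^{1/N}$ is the normalizer. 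First I would record this representation of $\log\overline{f}$, which is the single computational fact the whole argument rests on.

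Next I would invoke the three-point (generalized Pythagorean) identity for Bregman divergences, $B_\phi(p,q)=B_\phi(p,r)+B_\phi(r,q)-\langle\nabla\phi(q)-\nabla\phi(r),\,p-r\rangle$, instantiated at $p=\mathbf{y}$, $r=\overline{f}$, $q=f_i$. Averaging over $i=1,\dots,N$ gives
\begin{equation*}
\frac{1}{N}\sum_i \ell(\mathbf{y},f_i)=\ell(\mathbf{y},\overline{f})+\frac{1}{N}\sum_i \ell(\overline{f},f_i)-\Big\langle \frac{1}{N}\sum_i\nabla\phi(f_i)-\nabla\phi(\overline{f}),\ \mathbf{y}-\overline{f}\Big\rangle .
\end{equation*}
The first two terms on the right are exactly the claimed Vulnerability $\ell(\mathbf{y},\overline{f})$ and Diversity $\frac{1}{N}\sum_i\ell(\overline{f},f_i)$, so the entire statement reduces to showing that the trailing interaction term vanishes.

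The vanishing of that cross term is the crux, and I expect it to be the only genuine obstacle. Using $\nabla\phi(p)=\log p+\mathbf{1}$, the additive $\mathbf{1}$'s cancel and the bracketed vector reduces to $\frac{1}{N}\sum_i\log f_i-\log\overline{f}=\log Z\cdot\mathbf{1}$, a constant multiple of the all-ones vector. Hence the inner product equals $\log Z\sum_j(y_j-\overline{f}_j)=\log Z\,(1-1)=0$, since both $\mathbf{y}$ and $\overline{f}$ lie on the probability simplex. This is exactly where normalization is indispensable: the partition function $Z$ of the geometric mean is absorbed by the orthogonality of $\mathbf{1}$ to the difference of two normalized distributions, and the decomposition closes.

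For a self-contained alternative I would instead expand directly: substituting $\frac{1}{N}\sum_i\log f_{i,j}=\log\overline{f}_j+\log Z$ shows that $\frac{1}{N}\sum_i\ell(\mathbf{y},f_i)=\kl(\mathbf{y}\|\overline{f})-\log Z$ and that the Diversity term $\frac{1}{N}\sum_i\ell(\overline{f},f_i)=-\log Z$ as well (the entropy of $\overline{f}$ cancels), so the two $\log Z$ contributions appear identically on the two sides and cancel. Finally, I would make explicit the convention that $L_E(\mathbf{z})=\frac{1}{N}\sum_i\ell(\mathbf{y},f_i)$ places the label $\mathbf{y}$ in the reference slot of the KL divergence, matching the one-hot cross-entropy it generalizes and the Vulnerability term $\ell(\mathbf{y},\overline{f})$; the decomposition depends on this ordering, since $\kl$ is not symmetric.
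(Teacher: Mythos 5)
Your proof is correct, and at the top level it travels the same road as the paper: both treat the identity as the Bregman ambiguity decomposition of Wood et al., with KL as the Bregman divergence generated by negative entropy and the geometric mean as the centroid combiner. The difference is in how the centroid is handled, and your treatment is the more careful of the two. The paper defines $\overline{\mathbf{f}}=[\nabla\phi]^{-1}\bigl(\frac{1}{N}\sum_i\nabla\phi(\mathbf{f}_i)\bigr)$ and derives the decomposition from the exact identity $\nabla\phi(\overline{\mathbf{f}})=\frac{1}{N}\sum_i\nabla\phi(\mathbf{f}_i)$; but with $\phi(\mathbf{x})=\sum_j(x_j\log x_j-x_j)$ that centroid is the \emph{unnormalized} geometric mean, while the proposition (and your proof) work with the normalized one, for which the dual-averaging identity holds only up to the shift $\log Z\cdot\mathbf{1}$. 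The paper's proof silently passes over this mismatch (it ends with ``replace $\kl$ with $\ell$,'' conflating the generalized Bregman divergence of an unnormalized centroid with the standard KL of the normalized one, which actually yield numerically different diversity terms, $1-Z$ versus $-\log Z$). Your three-point-identity argument confronts the issue directly: the leftover cross term is $\langle\log Z\cdot\mathbf{1},\,\mathbf{y}-\overline{f}\rangle=0$ precisely because both $\mathbf{y}$ and $\overline{f}$ lie on the probability simplex, and your direct computation (both sides acquire the same $-\log Z$) independently confirms it. You also state explicitly the argument-order convention forced by the asymmetry of KL, which the paper leaves implicit. In short: same decomposition, but your proof closes a genuine gap between the proposition as stated (normalized geometric mean, standard KL) and the paper's own derivation (Bregman centroid, i.e., unnormalized mean with generalized KL), and either of your two routes—the three-point identity with the simplex orthogonality, or the self-contained $\log Z$ cancellation—would stand as a complete proof.
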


The ``Vulnerability'' term measures the risk of a data point $\mathbf{z}$ being compromised by the model ensemble.
If the model ensemble is sufficiently strong to fit the direction opposite to the target label, the resulting high loss theoretically improves $L_E(\mathbf{z})$. 
This insight suggests that \textbf{\textit{selecting strong attackers}} as ensemble components leads to lower $L_E(\mathbf{z})$. 
The ``Diversity'' term implies that \textbf{\textit{selecting diverse attackers}} in a model ensemble attack theoretically contributing to a increase in $L_E(\mathbf{z})$. 
In conclusion, it provides similar guideline comparing to Theorem~\ref{theorem:diversity}: we are supposed to choose ensemble components that are both strong and diverse.

\begin{proof}
We first introduce Bregman divergence.
\begin{definition}[Bregman divergence]
Let $\phi: \Omega \rightarrow \mathbb{R}$ be a function that is: a) strictly convex, b) continuously differentiable, $c$ ) defined on a closed convex set $\Omega$. Then the Bregman divergence is defined as
$$B_\phi(\mathbf{x}, \mathbf{y})=\phi(\mathbf{x})-\phi(\mathbf{y})-\langle\nabla \phi(\mathbf{y}), \mathbf{x}-\mathbf{y}\rangle, \quad \forall \mathbf{x}, \mathbf{y} \in \Omega.$$
\end{definition}

That is, the difference between the value of $\phi$ at $\mathbf{x}$ and the first order Taylor expansion of $\phi$ around $\mathbf{y}$ evaluated at point $\mathbf{x}$.
Notice that let $\Omega = \mathcal{Y}$ and KL divergence can be a special case of Bregman divergence if $\phi(\mathbf{x})= \sum_{i} (x_i \log x_i - x_i)$ or $\phi(\mathbf{x})= \sum_{i} x_i \log x_i$, where $x_i$ ($i\in 1, \cdots, k$) are the components of $\mathbf{x}$. 


Now we start the proof. It follows the Bregman ambiguity decomposition in~\citet{wood2023unified}.

Denote $\mathbf{f}_i=f(\boldsymbol{\theta}_i;\mathbf{x}) \in \mathbb{R}^k$ and
\begin{align} \label{def:centroid}
    \overline{\mathbf{f}} = [\nabla \phi]^{-1}\left(\frac{1}{N} \sum_{i=1}^N \nabla \phi\left(\mathbf{f}_i\right)\right),
\end{align}
which is the Bregman Centroid Combiner~\citep{wood2023unified} of ensembles $\{ \mathbf{f}_i \}_{i=1}^N$.
Therefore, we have
$$\nabla \phi(\overline{\mathbf{f}})=\frac{1}{N} \sum_{i=1}^N \nabla \phi\left(\mathbf{f}_i\right),$$
so that
$$\frac{1}{N} \sum_{i=1}^N \langle \overline{\mathbf{f}}-\mathbf{y}, \nabla \phi(\mathbf{f}_i) \rangle = \langle \overline{\mathbf{f}}-\mathbf{y}, \nabla \phi(\overline{\mathbf{f}}) \rangle.$$
In other words
\begin{align} \label{bregman_decomposition}
B_\phi(\mathbf{y}, \overline{\mathbf{f}}) & = \phi(\mathbf{y}) - \phi(\overline{\mathbf{f}}) - \langle \mathbf{y}-\overline{\mathbf{f}}, \nabla\phi(\overline{\mathbf{f}}) \rangle \nonumber \\ & = \phi(\mathbf{y}) - \phi(\overline{\mathbf{f}}) + \frac{1}{N}\sum_{i=1}^N \langle \overline{\mathbf{f}}-\mathbf{y}, \nabla \phi(\mathbf{f}_i) \rangle \nonumber \\ & = \left[ \phi(\mathbf{y})-\frac{1}{N}\sum_{i=1}^N\phi(\mathbf{f}_i) - \frac{1}{N}\sum_{i=1}^N \langle \mathbf{y}-\mathbf{f}_i, \nabla \phi(\mathbf{f}_i) \rangle \right] + \left[ \frac{1}{N}\sum_{i=1}^N\phi(\mathbf{f}_i) - \phi(\overline{\mathbf{f}}) + \frac{1}{N}\sum_{i=1}^N \langle \overline{\mathbf{f}}-\mathbf{f}_i, \nabla \phi(\mathbf{f}_i) \rangle \right] \nonumber \\ & = \frac{1}{N}\sum_{i=1}^N B_\phi(\mathbf{y},\mathbf{f}_i) - \frac{1}{N}\sum_{i=1}^N B_\phi(\overline{\mathbf{f}}, \mathbf{f}_i).
\end{align}

Let $\phi(\mathbf{x})= \sum_{i} (x_i \log x_i - x_i)$ in~\eqref{bregman_decomposition} and we have
$$\kl(\mathbf{y}, \overline{\mathbf{f}}) = \underbrace{\frac{1}{N}\sum_{i=1}^N \kl(\mathbf{y},\mathbf{f}_i)}_{L_E(\mathbf{z})} - \frac{1}{N}\sum_{i=1}^N \kl(\overline{\mathbf{f}}, \mathbf{f}_i).$$

Replace $\kl$ with $\ell$ and we can prove the result.
\end{proof}

\subsection{Proof of Theorem \ref{theorem:upper_bound_adv_transfer}} \label{proof:theorem_1}

We first define a divergence measure taken into account.
Given a measurable space and two measures $\mu, \nu$ which render it a measure space, 
we denote $\nu \ll \mu$ if $\nu$ is absolutely continuous with respect to $\mu$. Hellinger integrals are defined below:


\begin{definition}[Hellinger integrals~\citep{hellinger1909neue}] \label{def:hellinger}
    Let $\nu,\mu$ be two probability measures on $(\Omega, \mathcal{F})$ and satisfy $\nu \ll \mu$, and $\varphi_\alpha: \mathbb{R}^{+} \rightarrow \mathbb{R}$ be defined as $\varphi_\alpha(x)=x^\alpha$. Then the Hellinger integral of order $\alpha$ is given by
    $$H_\alpha(\nu \| \mu) = \int\left(\frac{d \nu}{d \mu}\right)^\alpha \mathrm{d} \mu .$$
\end{definition}

It can be seen as a $\phi$-Divergence with a specific parametrised choice of $\phi$~\citep{liese2006divergences}.
For $\alpha>1$, the Hellinger integral measures the divergence between two probability distributions~\citep{liese2006divergences}. 
There holds $H_\alpha(\nu \| \mu) \in [1,+\infty), \alpha>1$, and it equals to $1$ if the two measures coincide~\citep{shiryaev2016probability}. 
Given such a divergence measure, we now provide the proof.

\begin{proof}

From Section~\ref{appendix:transfer_generalization}, we know that
\begin{align*}
TE(z) = L_P(z^*) - L_P(z) & \le  L_P(z^*) - L_P(z) +(L_E(z)-L_E(z^*)) \\
& =  (L_P(z^*)-L_E(z^*)) + (L_E(z)-L_P(z)) \\
& \le \sup_{x \in \mathcal{B}_\epsilon(x)} ( L_P(z) - L_E(z) ) + \sup_{x \in \mathcal{B}_\epsilon(x)} ( L_E(z) - L_P(z) ) \\
& \le \sup_{z \in \mathcal{Z}} ( L_P(z) - L_E(z) ) + \sup_{z \in \mathcal{Z}} ( L_E(z) - L_P(z) ).
\end{align*}

Let $\theta^N=(\theta_1, \ldots, \theta_N)$, $\theta'^N=(\theta'_1, \ldots, \theta'_N)$ that satisfy $\theta^N, \theta'^N \sim \mathcal{P}_{\Theta^N}$, and the \textit{m}-th member is different, i.e., $\theta'_m \neq \theta_m$.


We define 
$$L_{E'}(z) = \frac{1}{N} \sum_{i=1}^N \ell(f(\theta'_i;x), y),$$
and
\begin{align*}
    & \Phi_1(E)=\sup_{z \in \mathcal{Z}} \left\{ L_P(z) - L_E(z)\right\}, \\
    & \Phi_1(E')=\sup_{z \in \mathcal{Z}} \left\{ L_P(z) - L_{E'}(z)\right\}.
\end{align*}

We have

\begin{align*}
\Phi_1(E) -\Phi_1(E') & =\sup _{z \in \mathcal{Z}}\left\{L_P(z)-L_{E}(z)\right\}-\sup _{z \in \mathcal{Z}}\left\{L_P(z)-L_{E'}(z)\right\} \\
& \leq \sup _{z \in \mathcal{Z}}\left\{L_P(z)-L_{E}(z)-(L_P(z)-L_{E'}(z))\right\} \\
& =\sup _{z \in \mathcal{Z}}\left\{L_{E'}(z)-L_{E}(z)\right\} \\
& = \frac{1}{N} \sup _{z \in \mathcal{Z}} \left[ \sum_{i=1}^N \ell(f(\theta'_i;x), y) - \sum_{i=1}^N \ell(f(\theta_i;x), y) \right]. 
\end{align*}

By assuming that loss function $\ell$ is bounded by $\beta$, we have 
$$\left| \Phi_1(E) -\Phi_1(E')\right| \le \frac{\beta}{N}.$$

According to Theorem 1 in~\citet{esposito2024concentration}, for all $\delta \in(0,1)$ and $\alpha>1$, with probability at least $1-\frac{1}{4}\delta$, we have
\begin{equation} \label{concent_phi_1}
    \Phi_1(E) \leq \mathbb{E}_{\theta^N}[\Phi_1(E)]+\sqrt{\frac{\alpha \beta^2}{2(\alpha-1)N}\ln{\frac{2^{\frac{\alpha-1}{\alpha}}H_\alpha^{\frac{1}{\alpha}}\left(\mathcal{P}_{\Theta^N} \| \mathcal{P}_{\bigotimes_{i=1}^N \Theta}\right)}{\frac{1}{4}\delta}}}.
\end{equation}

Denote $f(\theta_i;x)$ as $f_i(x)$ and $f(\theta'_i;x)$ as $f_i'(x)$.
Then we estimate the upper bound of $\mathbb{E}_{\theta^N \sim \mathcal{P}_{\Theta^N}}[\Phi_1(E)]$ as follows:

\begin{align}
\mathbb{E}_{\theta^N}[\Phi_1(E)] & = \mathbb{E}_{\theta^N}\left[\sup_{z \in \mathcal{Z}} (L_P(z) - L_E(z))\right] \nonumber \\
& = \mathbb{E}_{\theta^N}\left[\sup_{z \in \mathcal{Z}}  \mathbb{E}_{(\theta_1', \cdots, \theta_N') \sim \mathcal{P}'_{\Theta^N}} \left( L_{E'}(z) - L_E(z)\right)\right] \nonumber \\
& \le \mathbb{E}_{\theta^N, \theta'^N}\left[\sup_{z \in \mathcal{Z}} \left( L_{E'}(z) - L_E(z)\right)\right] \tag{Jensen inequality} \nonumber \\
& = \mathbb{E}_{\theta^N, \theta'^N}\left\{ \sup_{z \in \mathcal{Z}} \frac{1}{N} \left[ \sum_{i=1}^N \ell(f(\theta'_i;x), y) - \sum_{i=1}^N \ell(f(\theta_i;x), y) \right]\right\} \nonumber \\
& = \mathbb{E}_{\boldsymbol{\sigma}} \mathbb{E}_{\theta^N, \theta'^N}\left\{ \sup_{z \in \mathcal{Z}} \frac{1}{N} \left[ \sum_{i = 1}^N \sigma_i \left[ \ell(f'_i(x), y) - \ell(f_i(x), y) \right] \right]\right\} \nonumber \\
& \le \mathbb{E}_{\boldsymbol{\sigma}} \mathbb{E}_{\theta'^N}\left\{ \sup_{z \in \mathcal{Z}} \frac{1}{N} \left[ \sum_{i = 1}^N \sigma_i \ell(f'_i(x), y) \right]\right\} + \mathbb{E}_{\boldsymbol{\sigma}} \mathbb{E}_{\theta^N}\left\{ \sup_{z \in \mathcal{Z}} \frac{1}{N} \left[ \sum_{i = 1}^N \sigma_i \ell(f_i(x), y) \right]\right\} \nonumber \\
& = 2 \cdot \mathbb{E}_{\boldsymbol{\sigma}} \mathbb{E}_{\theta^N}\left\{ \sup_{z \in \mathcal{Z}} \frac{1}{N} \sum_{i = 1}^N \sigma_i \ell(f_i(x), y) \right\} \nonumber \\
& = 2 \mathbb{E}_{\theta^N} \left[ \mathcal{R}_N(\mathcal{F}) \right]. \label{proof:phi_exp_1}
\end{align}

Since changing one element in $\theta^N$ changes $\mathcal{R}_N(\mathcal{F})$ by at most $\frac{\beta}{N}$, we again apply Theorem 1 in~\citet{esposito2024concentration} and obtain that for all $\delta \in(0,1)$, with probability at least $1-\frac{1}{4}\delta$, we have
\begin{align} \label{rad_emprad_1}
    \mathbb{E}_{\theta^N} \left[ \mathcal{R}_N(\mathcal{F}) \right] \le \mathcal{R}_N(\mathcal{F}) + \sqrt{\frac{\alpha \beta^2}{2(\alpha-1)N}\ln{\frac{2^{\frac{\alpha-1}{\alpha}}H_\alpha^{\frac{1}{\alpha}}\left(\mathcal{P}_{\Theta^N} \| \mathcal{P}_{\bigotimes_{i=1}^N \Theta}\right)}{\frac{1}{4}\delta}}}.
\end{align}

Likewise, if we define 
\begin{align*}
    & \Phi_2(E)=\sup_{z \in \mathcal{Z}} \left\{ L_E(z) - L_P(z)\right\}, \\
    & \Phi_2(E')=\sup_{z \in \mathcal{Z}} \left\{ L_{E'}(z) - L_P(z)\right\},
\end{align*}
then we have
\begin{align*}
\Phi_2(E) -\Phi_2(E') & =\sup _{z \in \mathcal{Z}}\left\{L_E(z)-L_P(z)\right\}-\sup _{z \in \mathcal{Z}}\left\{L_{E'}(z)-L_P(z)\right\} \\
& \leq \sup _{z \in \mathcal{Z}}\left\{L_E(z)-L_P(z)-(L_{E'}(z)-L_P(z))\right\} \\
& =\sup _{z \in \mathcal{Z}}\left\{L_E(z)-L_{E^{\prime}}(z)\right\} \\
& = \frac{1}{N} \sup _{z \in \mathcal{Z}} \left[ \sum_{i=1}^N \ell(f(\theta_i;x), y) - \sum_{i=1}^N \ell(f(\theta'_i;x), y) \right]. 
\end{align*}

According to the assumption that loss function $\ell$ is bounded by $\beta$, we have 
$$\left| \Phi_2(E) -\Phi_2(E')\right| \le \frac{\beta}{N}.$$


According to Theorem 1 in~\citet{esposito2024concentration}, for all $\delta \in(0,1)$ and $\alpha>1$, with probability at least $1-\frac{1}{4}\delta$, we have
\begin{equation} \label{concent_phi_2}
    \Phi_2(E) \leq \mathbb{E}_{\theta^N}[\Phi_2(E)]+\sqrt{\frac{\alpha \beta^2}{2(\alpha-1)N}\ln{\frac{2^{\frac{\alpha-1}{\alpha}}H_\alpha^{\frac{1}{\alpha}}\left(\mathcal{P}_{\Theta^N} \| \mathcal{P}_{\bigotimes_{i=1}^N \Theta_i}\right)}{\frac{1}{4}\delta}}}.
\end{equation}

We estimate the upper bound of $\mathbb{E}_{\theta^N}[\Phi_2(E)]$ as follows:

\begin{align}
\mathbb{E}_{\theta^N}[\Phi_2(E)] & = \mathbb{E}_{\theta^N}\left[\sup_{z \in \mathcal{Z}} (L_E(z) - L_P(z))\right] \nonumber \\
& = \mathbb{E}_{\theta^N}\left[\sup_{z \in \mathcal{Z}}  \mathbb{E}_{(\theta_1', \cdots, \theta_N') \sim \mathcal{P}'_{\Theta^N}} \left( L_E(z) - L_{E'}(z)\right)\right] \nonumber \\
& \le \mathbb{E}_{\theta^N, \theta'^N}\left[\sup_{z \in \mathcal{Z}} \left( L_E(z) - L_{E'}(z)\right)\right] \tag{Jensen inequality} \nonumber \\
& = \mathbb{E}_{\theta^N, \theta'^N}\left\{ \sup_{z \in \mathcal{Z}} \frac{1}{N} \left[ \sum_{i=1}^N \ell(f(\theta_i;x), y) - \sum_{i=1}^N \ell(f(\theta'_i;x), y) \right]\right\} \nonumber \\
& = \mathbb{E}_{\boldsymbol{\sigma}} \mathbb{E}_{\theta^N, \theta'^N}\left\{ \sup_{z \in \mathcal{Z}} \frac{1}{N} \left[ \sum_{i = 1}^N \sigma_i \left[ \ell(f_i(x), y) - \ell(f'_i(x), y) \right] \right]\right\} \nonumber \\
& \le \mathbb{E}_{\boldsymbol{\sigma}} \mathbb{E}_{\theta'^N}\left\{ \sup_{z \in \mathcal{Z}} \frac{1}{N} \left[ \sum_{i = 1}^N \sigma_i \ell(f'_i(x), y) \right]\right\} + \mathbb{E}_{\boldsymbol{\sigma}} \mathbb{E}_{\theta^N}\left\{ \sup_{z \in \mathcal{Z}} \frac{1}{N} \left[ \sum_{i = 1}^N \sigma_i \ell(f_i(x), y) \right]\right\} \nonumber \\
& = 2 \cdot \mathbb{E}_{\boldsymbol{\sigma}} \mathbb{E}_{\theta^N}\left\{ \sup_{z \in \mathcal{Z}} \frac{1}{N} \sum_{i = 1}^N \sigma_i \ell(f_i(x), y) \right\} \nonumber \\
& = 2 \mathbb{E}_{\theta^N} \left[ \mathcal{R}_N(\mathcal{F}) \right]. \label{proof:phi_exp_2}
\end{align}

Likewise, we again apply Theorem 1 in~\citet{esposito2024concentration} and obtain that for all $\delta \in(0,1)$, with probability at least $1-\frac{1}{4}\delta$, we have
\begin{align} \label{rad_emprad_2}
    \mathbb{E}_{\theta^N} \left[ \mathcal{R}_N(\mathcal{F}) \right] \le \mathcal{R}_N(\mathcal{F}) + \sqrt{\frac{\alpha \beta^2}{2(\alpha-1)N}\ln{\frac{2^{\frac{\alpha-1}{\alpha}}H_\alpha^{\frac{1}{\alpha}}\left(\mathcal{P}_{\Theta^N} \| \mathcal{P}_{\bigotimes_{i=1}^N \Theta}\right)}{\frac{1}{4}\delta}}}.
\end{align}

Therefore, combining~\eqref{concent_phi_1}, \eqref{proof:phi_exp_1}, \eqref{rad_emprad_1}, \eqref{concent_phi_2}, \eqref{proof:phi_exp_2} and~\eqref{rad_emprad_2} with union bound, we obtain that, with probability at least $1-\delta$, there holds

$$TE(z,\epsilon) = \Phi_1(E)+\Phi_2(E)  \leq 4 \mathcal{R}_N(\mathcal{F}) + \sqrt{\frac{{18}\alpha \beta^2}{(\alpha-1)N}\ln{\frac{{2^{2+\frac{\alpha-1}{\alpha}}}H_\alpha^{\frac{1}{\alpha}}\left(\mathcal{P}_{X^n} \| \mathcal{P}_{\bigotimes_{i=1}^n X_i}\right)}{\delta}}}.$$

The proof is complete.

\end{proof}

\subsection{Extension of~\cref{theorem:upper_bound_adv_transfer}} \label{extension:model_space}

We consider $N$ surrogate classifiers $f_1, \cdots, f_N$ trained to generate adversarial examples. 
Let $D$ be the distribution over the surrogate models (for instance, the distribution of all the low-risk models), and $f_i \in D, i \in [N]$.
The low-risk claim is in line with Lemma 5 in~\cite{yang2021trs}, which assumes that the risk of surrogate model and target model is low (have risk at most $\epsilon$).
Therefore, the surrogate model and target model can be seen as drawing from the same distribution (such as a distribution of all the low-risk models).
For a data point $z=(x,y) \in \mathcal{Z}$ and $N$ classifiers for model ensemble attack, define the population risk $L_P(z)$ and the empirical risk $L_D(z)$ as
\begin{align*}
& L_P(z) =  \mathbb{E}_{f \sim D} [\ell(f(x), y)]. \\
& L_D(z) = \frac{1}{N} \sum_{i \in [N], f_i \in D} \ell(f_i(x), y).
\end{align*}

Now here is an extension of Theorem~\ref{theorem:upper_bound_adv_transfer} based on the above definition.

\begin{theorem}[Extension of Theorem~\ref{theorem:upper_bound_adv_transfer}] 
{Let $\mathcal{P}_{D^N}$ be the joint distribution of $f_1, \cdots, f_N$, and $\mathcal{P}_{\bigotimes_{i=1}^N D}$ be the joint measure induced by the product of the marginals.
If the loss function $\ell$ is bounded by $\beta \in R_+$ and $\mathcal{P}_{D^N} \ll \mathcal{P}_{\bigotimes_{i=1}^N D}$ for any function $f_i$, then for $\alpha>1$ and $\gamma=\frac{\alpha}{\alpha-1}$, with probability at least $1-\delta$, there holds
\begin{equation}
    TE(z,\epsilon) \le 4\mathcal{R}_{N}(\mathcal{Z}) + \sqrt{\frac{18 \gamma \beta^2}{N}\ln{\frac{2^{2+\frac{1}{\gamma}}H_\alpha^{\frac{1}{\alpha}}\left(\mathcal{P}_{D^N} \| \mathcal{P}_{\bigotimes_{i=1}^N D}\right)}{\delta}}}.
\end{equation}}
\end{theorem}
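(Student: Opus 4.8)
The plan is to reproduce the proof of~\cref{theorem:upper_bound_adv_transfer} almost verbatim, with the single change that the random objects are now whole surrogate models $f_1,\dots,f_N$ drawn jointly from $\mathcal{P}_{D^N}$ rather than parameter tuples $(\theta_1,\dots,\theta_N)$ drawn from $\mathcal{P}_{\Theta^N}$. The key observation is that nothing in the original argument exploits the structure of the parameter space $\Theta$: it uses only (i) boundedness of $\ell$ by $\beta$, (ii) a measurable space on which the joint law and the product-of-marginals law are defined with the former absolutely continuous with respect to the latter, and (iii) a symmetrization step for the Rademacher complexity. All three ingredients are preserved when $\Theta$ is replaced by $D$ and $L_E$ by $L_D$, so the proof is essentially a relabeling.

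Concretely, I first invoke the elementary decomposition of~\cref{appendix:transfer_generalization}, which with $L_E$ replaced by $L_D$ gives
\begin{equation*}
TE(z,\epsilon) \le \sup_{z \in \mathcal{Z}}\bigl(L_P(z) - L_D(z)\bigr) + \sup_{z \in \mathcal{Z}}\bigl(L_D(z) - L_P(z)\bigr) =: \Phi_1(D) + \Phi_2(D).
\end{equation*}
For a tuple $f^N=(f_1,\dots,f_N)$ and a copy $f'^N$ differing only in the $m$-th coordinate, the boundedness assumption $\ell\le\beta$ yields the bounded-difference estimate $|\Phi_j(f^N)-\Phi_j(f'^N)|\le\beta/N$ for $j\in\{1,2\}$, which is exactly the hypothesis of Theorem 1 in~\citet{esposito2024concentration}. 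Under $\mathcal{P}_{D^N}\ll\mathcal{P}_{\bigotimes_{i=1}^N D}$, that theorem gives, for each $j$ and with probability at least $1-\tfrac14\delta$,
\begin{equation*}
\Phi_j(D) \le \mathbb{E}_{f^N}[\Phi_j(D)] + \sqrt{\frac{\gamma\beta^2}{2N}\ln\frac{2^{1/\gamma} H_\alpha^{1/\alpha}\!\left(\mathcal{P}_{D^N}\,\|\,\mathcal{P}_{\bigotimes_{i=1}^N D}\right)}{\tfrac14\delta}},
\end{equation*}
where I have used $\gamma=\tfrac{\alpha}{\alpha-1}$ so that $\tfrac{\alpha}{\alpha-1}=\gamma$ and $2^{(\alpha-1)/\alpha}=2^{1/\gamma}$.

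Next I control the expectations by the ghost-sample symmetrization exactly as in~\cref{proof:theorem_1}: introducing an independent copy $f'^N\sim\mathcal{P}_{D^N}$ and Rademacher variables $\boldsymbol{\sigma}$, Jensen's inequality together with the symmetry of the $\sigma_i$ gives $\mathbb{E}_{f^N}[\Phi_j(D)]\le 2\,\mathbb{E}_{f^N}[\mathcal{R}_N(\mathcal{Z})]$ for $j\in\{1,2\}$. Since changing one model perturbs $\mathcal{R}_N(\mathcal{Z})$ by at most $\beta/N$, a further application of Theorem 1 in~\citet{esposito2024concentration} bounds $\mathbb{E}_{f^N}[\mathcal{R}_N(\mathcal{Z})]$ by $\mathcal{R}_N(\mathcal{Z})$ plus an identical Hellinger term, again with probability $1-\tfrac14\delta$. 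Chaining these inequalities yields $\Phi_j(D)\le 2\mathcal{R}_N(\mathcal{Z})+3t$ on the relevant event, where $t$ denotes the common concentration term. Summing over $j$, a union bound over the four events (each of probability $1-\tfrac14\delta$) gives, with probability at least $1-\delta$, $TE(z,\epsilon)\le 4\mathcal{R}_N(\mathcal{Z})+6t$. The factor $6=\sqrt{36}$ combined with the $\tfrac12$ inside $t$ produces $\sqrt{36/2}=\sqrt{18}$, hence the constant $18$, while the four-way split of $\delta$ contributes a factor $2^2$ that merges with $2^{1/\gamma}$ to give $2^{2+1/\gamma}$, recovering the claimed bound.

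The only point that genuinely requires verification, rather than copying, is that the measure-theoretic hypotheses of the concentration inequality survive the change of model space: one needs $\mathcal{P}_{D^N}$ to be absolutely continuous with respect to the product of its marginals so that $H_\alpha(\mathcal{P}_{D^N}\,\|\,\mathcal{P}_{\bigotimes_{i=1}^N D})$ is finite, which is precisely the stated assumption, and one must confirm that the joint law over $D^N$ is well-defined once the randomness is carried by entire models drawn from the low-risk distribution $D$ rather than by parameter vectors. I expect this to be the main (and essentially only) obstacle; once it is granted, every remaining step is identical to the proof of~\cref{theorem:upper_bound_adv_transfer}.
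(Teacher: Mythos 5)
Your proposal is correct and matches the paper's own treatment: the paper proves this extension exactly by re-running the argument of \cref{proof:theorem_1} with $\Theta$ relabeled as $D$, which is precisely what you do, and your constant bookkeeping ($\Phi_j \le 2\mathcal{R}_N(\mathcal{Z}) + 3t$, union bound over four events, $6t$ giving $\sqrt{18}$ and the factor $2^{2+1/\gamma}$) reproduces the paper's derivation of \cref{theorem:upper_bound_adv_transfer} step for step.
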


The proof is almost the same as Appendix~\ref{proof:theorem_1}, but the definition of distribution is different.
The first term answers the question that more surrogate models and smaller complexity will lead to a smaller $\mathcal{R}_{N}(\mathcal{Z})$ and contributes to a tighter bound of $TE(z,\epsilon)$.
The second term motivates us that if we reduce the interdependency among the ensemble components, then the upper bound of $TE(z,\epsilon)$ will be tighter.
Recall that $H_\alpha (\mathcal{P}_{D^N} \| \mathcal{P}_{\bigotimes_{i=1}^N D})$ quantifies the divergence between the joint distribution $\mathcal{P}_{D^N}$ and product of marginals $\mathcal{P}_{\bigotimes_{i=1}^N D}$. 
The joint distribution captures dependencies while the product of marginals does not. 
So the divergence between them measures the degree of dependency among the $N$ classifiers $f_1, \cdots, f_N$.
As a result, improving the diversity of $f_1, \cdots, f_N$ and reduce the interdependence among them is beneficial to adversarial transferability.

\subsection{Further Explanation of the Hellinger Integral Term} \label{explanation::hellinger}

We provide two examples of the Hellinger integral term in~\cref{theorem:upper_bound_adv_transfer}.

\setcounter{example}{0}

\begin{example}[Independent case]
Suppose that the $N$ surrogate models are independent. In this case, the hellinger integral achieves its minimum $1$. Therefore, let $\alpha=2$ and~\cref{theorem:upper_bound_adv_transfer} becomes
\begin{align*}
    TE(z,\epsilon) \le 4\mathcal{R}_{N}(\mathcal{Z}) + \sqrt{\frac{36 \beta^2}{N}\ln{\frac{4\sqrt{2}}{\delta}}}.
\end{align*}
This theoretical result is similar to the generalization error bound in the literature on statistical learning theory~\citep{bartlett2002rademacher} with different constant coefficients. 
The difference arises because~\cite{bartlett2002rademacher} applies the concentration inequality once, but our proof applies it several times.
\end{example}

\begin{example}[Dependent case]
For a more general case, the $N$ surrogate models are interdependent to each other. While it is hard to model the behavior of each model and the whole parameter space, we simplify the problem to make it clear to understand the hellinger integral $H_\alpha (\mathcal{P}_{\Theta^N} \| \mathcal{P}_{\bigotimes_{i=1}^N \Theta})$. 
In particular, let $P=\mathcal{P}_{\Theta^N}$ and $Q=\mathcal{P}_{\bigotimes_{i=1}^N \Theta}$. We consider the model parameters for a given precision so that $P$ and $Q$ are discrete distributions. 
Firstly, Equation (8) from~\cite{esposito2024concentration} tells us that $H_\alpha(P \| Q)=e^{(\alpha-1)D_\alpha(P,Q)}$, where $D_\alpha(P,Q)$ is the Rényi divergence. 
Secondly, let $\beta_1=\min_{a \in \mathcal{A}} \frac{Q(a)}{P(a)}$ be defined in Equation (8) from~\cite{sason2015upper}, i.e., the minimum of the ratio of the probability density function of distributions $Q$ and $P$. 
Now we approximate $\beta_1$. Consider there are $t$ parameter configurations for each model. For simplicity, we assume that part of the models ($f(N)$ models) play a key role in adversarial transferability, and the other $N-f(N)$ models are random sampled from these $f(N)$ models.
\begin{itemize}
    \item For the product of marginal distribution $Q$, the parameters from each model are random. Consider the case of uniform distribution, where every parameter in the $N$ models share the same probability, i.e., $Q(a)=\frac{1}{t^N}$.
    \item For the joint distribution $P$, we also consider the case of uniform distribution, where $f(N)$ models are fixed and $N-f(N)$ models are randomly sampled, i.e., $P(a)=\frac{1}{t^{N-f(N)}}$.
\end{itemize}
Therefore, $\beta_1 = \frac{Q(a)}{P(a)}=t^{-f(N)}$, which is less than 1.
Substitute the above into Theorem 3 from~\cite{sason2015upper}, we have
\begin{align*}
    H_\alpha(P \| Q) \le 1+\frac{\tv(P \| Q) \cdot \left(\beta_1^{-1}-1\right)}{1-\beta_1} \le 1+\frac{\left(\beta_1^{-1}-1\right)}{1-\beta_1} \le \beta_1^{-1} = t^{f(N)}.
\end{align*}
Let $\alpha=2$ and substitute the above into Theorem 4.3 in our paper, we have
\begin{align*}
    TE(z,\epsilon) \le 4\mathcal{R}_{N}(\mathcal{Z}) + \sqrt{18\beta^2 \ln t \cdot \frac{f(N)}{N} + 36\beta^2 \ln \frac{4 \sqrt{2}}{\delta} \cdot \frac{1}{N}}
\end{align*}

Here are several cases:
\begin{enumerate}
    \item $f(N)=\mathcal{O}(N^{s})$, where $s \in (0,1)$,
    \item $f(N)=\mathcal{O}(\ln N)$,
    \item $f(N)=s N$, where $s \in (0,1)$.
\end{enumerate}

For Cases 1 and 2, the above term asymptotically converges to zero as $N$ becomes large. Notably, the true Hellinger term may be smaller than our derived upper bound above. Quantifying the core subset of models $f(N)$ that dominate the performance of the ensemble attack presents a theoretically profound and practically significant research direction. This problem is particularly well-suited for future exploration, as it could fundamentally advance our understanding of transferable adversarial model ensemble attacks.
\end{example}

\subsection{Information-theoretic Analysis} \label{appendix:info_theory}

This section follows the multi-classification setting in~\cref{proof:theorem:kl_decomp}.
Note that while we use a different theoretical framework comparing to~\cref{theorem:upper_bound_adv_transfer}, the conclusion is consistent with it.

Firstly, we define the KL divergence, mutual information and TV distance.

\begin{definition}[{Kullback-Leibler Divergence}]
{Given two probability distributions $P$ and $Q$, the Kullback-Leibler (KL) divergence between $P$ and $Q$ is
$$\kl(P \| Q)=\int_{x \in \mathcal{X}} P(x)\log \frac{P(x)}{Q(x)} d x.$$}
\end{definition}
{We know that $\kl(P \| Q) \in[0,+\infty]$, and $\kl(P \| Q)=0$ if and only if $P=Q$.}


\begin{definition}[Mutual Information]
For continuous random variables $X$ and $Y$ with joint probability density function $p(x,y)$ and marginal probability density functions $p(x)$ and $p(y)$, the mutual information is defined as:
$$I(X ; Y)= \iint p(x, y) \log \frac{p(x, y)}{p(x) p(y)} d x d y.$$
\end{definition}
We know that $I(X ; Y) \in[0,+\infty]$, and $I(X ; Y)=0$ if and only if $X$ and $Y$ are independent to each other.

\begin{definition}[{Total Variation Distance}]
{Given two probability distributions $P$ and $Q$, the Total Variation (TV) distance between $P$ and $Q$ is
$$\tv(P \| Q)= \frac{1}{2} \int_{x \in \mathcal{X}} \left| P(x)-Q(x) \right| d x.$$}
\end{definition}
{We know that $\tv(P \| Q)\in[0,1]$. Also, $\tv(P \| Q)=0$ if and only if $P$ and $Q$ coincides, and $\tv(P \| Q)=1$ if and only if $P$ and $Q$ are disjoint.}


Here we provide further analysis from the perspective of information~\citep{shwartz2017opening,xu2017information}. 

\begin{theorem} \label{theorem:information}
Given $N$ surrogate models $\boldsymbol{\theta}^N \sim \mathcal{P}_{\Theta^N}$ as the ensemble components.
Let $\overline{\boldsymbol{\theta}}^N = (\overline{\boldsymbol{\theta}}_1, \ldots, \overline{\boldsymbol{\theta}}_N) \sim \mathcal{P}_{\Theta^N}$ be the target models, which is an independent copy of $\boldsymbol{\theta}^N$. 
Assume the loss function $\ell$ is bounded by $\beta \in \mathbb{R}_+$ and $\mathcal{P}_{\Theta^N}$ is absolutely continuous with respect to $\mathcal{P}_{\bigotimes_{i=1}^N \Theta}$.
For $\alpha>1$ and adversarial example $\mathbf{z}=(\mathbf{x},\mathbf{y}) \sim \mathcal{P}_{\mathcal{Z}}$, Let $\Delta_N(\boldsymbol{\theta},\mathbf{z})= L_P(\mathbf{z}) - L_E(\mathbf{z})$.
Then there holds
\begin{multline*}
    \left| \mathbb{E}_{\mathbf{z},\boldsymbol{\theta}^N \sim \mathcal{P}_{\mathcal{Z},\Theta^N}} \Delta_N(\boldsymbol{\theta},\mathbf{z}) \right| \le 2 \beta \cdot \tv \left( \mathcal{P}_{\Theta^N} \| \mathcal{P}_{\bigotimes_{i=1}^N \Theta} \right) + \\ \sqrt{\frac{\alpha \beta^2}{2 (\alpha-1) N} \left( I\left(\boldsymbol{\theta}^N;z\right) + \frac{1}{\alpha}\log H_\alpha \left(\mathcal{P}_{\Theta^N} \| \mathcal{P}_{\bigotimes_{i=1}^N \Theta}\right) \right)},
\end{multline*}
where $\tv(\cdot \| \cdot)$, $I(\cdot \| \cdot)$ and $H_\alpha(\cdot \| \cdot)$ denotes TV distance, mutual information and Hellinger integrals, respectively.
\end{theorem}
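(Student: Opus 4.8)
The plan is to recognize $\mathbb{E}[\Delta_N]$ as a decoupling (in-expectation generalization) gap and to control it by change of measure, treating the $N$ surrogate models as the ``data'' and the crafted example $\mathbf{z}$ as the ``output.'' Write $g(\boldsymbol{\theta}^N,\mathbf{z})=\frac{1}{N}\sum_{i=1}^N\ell(f(\boldsymbol{\theta}_i;\mathbf{x}),\mathbf{y})$, so that $L_E(\mathbf{z})=g(\boldsymbol{\theta}^N,\mathbf{z})$ and, introducing the independent copy $\overline{\boldsymbol{\theta}}^N\sim\mathcal{P}_{\Theta^N}$ with $\overline{\boldsymbol{\theta}}^N\perp(\mathbf{z},\boldsymbol{\theta}^N)$, $L_P(\mathbf{z})=\mathbb{E}_{\overline{\boldsymbol{\theta}}^N}[g(\overline{\boldsymbol{\theta}}^N,\mathbf{z})]$. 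Consequently
$$\mathbb{E}_{\mathbf{z},\boldsymbol{\theta}^N}[\Delta_N]=\mathbb{E}_{\mathcal{P}_{\mathbf{z}}\otimes\mathcal{P}_{\Theta^N}}[g]-\mathbb{E}_{\mathcal{P}_{\mathbf{z},\boldsymbol{\theta}^N}}[g],$$
i.e. the difference between evaluating $g$ under the product coupling of $\mathbf{z}$ and the ensemble and under their true joint law. This is precisely the object that information-theoretic generalization bounds are designed to handle.

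Next I would peel off the inter-model dependence in the population term. Since $0\le g\le\beta$, replacing the dependent test law $\mathcal{P}_{\Theta^N}$ by the product of per-model marginals $\mathcal{P}_{\bigotimes_{i=1}^N\Theta}$ costs at most $2\beta\,\tv(\mathcal{P}_{\Theta^N}\|\mathcal{P}_{\bigotimes_{i=1}^N\Theta})$ by the variational characterization of total variation distance; this produces the first term of the bound. After the substitution the reference measure is fully product, and for each fixed $\mathbf{z}$ the quantity $g$ is an average of $N$ independent terms in $[0,\beta]$, hence $\tfrac{\beta}{2\sqrt{N}}$-sub-Gaussian by Hoeffding, with cumulant generating function bounded by $\lambda^2\beta^2/(8N)$ under the reference.

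Then I would apply the R\'enyi change-of-measure inequality of \citet{esposito2024concentration} (their Theorem~1) to the remaining decoupling gap between the joint $\mathcal{P}_{\mathbf{z},\boldsymbol{\theta}^N}$ and the fully product reference $\mathcal{P}_{\mathbf{z}}\otimes\mathcal{P}_{\bigotimes_{i=1}^N\Theta}$. Using H\"older with exponents $\alpha$ and $\gamma=\alpha/(\alpha-1)$ against the sub-Gaussian tail, optimizing over $\lambda$, and invoking the chain rule $\kl(\mathcal{P}_{\mathbf{z},\boldsymbol{\theta}^N}\|\mathcal{P}_{\mathbf{z}}\otimes\mathcal{P}_{\bigotimes_{i=1}^N\Theta})=I(\boldsymbol{\theta}^N;\mathbf{z})+\kl(\mathcal{P}_{\Theta^N}\|\mathcal{P}_{\bigotimes_{i=1}^N\Theta})$—with the inter-model relative entropy relaxed to the Hellinger term $\tfrac1\alpha\log H_\alpha$—should yield $\sqrt{\tfrac{\alpha\beta^2}{2(\alpha-1)N}(I(\boldsymbol{\theta}^N;\mathbf{z})+\tfrac1\alpha\log H_\alpha(\mathcal{P}_{\Theta^N}\|\mathcal{P}_{\bigotimes_{i=1}^N\Theta}))}$, where the sub-Gaussian factor $\sigma^2=\beta^2/(4N)$ pairs with $\gamma$ to give the displayed constant. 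Finally I would repeat the argument with $-g$ to obtain the two-sided bound and add the total variation term.

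The main obstacle is this last step: invoking \citet{esposito2024concentration} so that the \emph{two distinct sources of dependence}—the coupling between the crafted example $\mathbf{z}$ and the ensemble $\boldsymbol{\theta}^N$ (charged to the mutual information $I(\boldsymbol{\theta}^N;\mathbf{z})$) and the mutual dependence \emph{among} the $N$ surrogate models (charged to $H_\alpha$)—are separated cleanly and assigned to the correct divergences, while still retaining the $1/N$ rate from the sub-Gaussianity of the average. Keeping the constants consistent (the $\alpha/(\alpha-1)$ factor, the $\tfrac1\alpha$ weight on $\log H_\alpha$, and the Hoeffding factor $\beta^2/(4N)$) through the H\"older step and the $\lambda$-optimization is the delicate bookkeeping, and is exactly where the ``swap the role of model and data'' analogy must be made precise.
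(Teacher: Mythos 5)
Your ingredients---peeling off the total-variation term, Hoeffding sub-Gaussianity of the average under the product of marginals, the H\"older step with exponents $\alpha$ and $\gamma$, and the mutual information---are all the right ones, and your constant bookkeeping would come out correctly, but the assembly step you yourself flag as the ``main obstacle'' is genuinely broken as proposed. You want to change measure between the full joint $\mathcal{P}_{\mathbf{z},\boldsymbol{\theta}^N}$ and the fully product reference $\mathcal{P}_{\mathbf{z}}\otimes\mathcal{P}_{\bigotimes_{i=1}^N\Theta}$, split the resulting divergence by the chain rule $\kl(\mathcal{P}_{\mathbf{z},\boldsymbol{\theta}^N}\|\mathcal{P}_{\mathbf{z}}\otimes\mathcal{P}_{\bigotimes_{i=1}^N\Theta})=I(\boldsymbol{\theta}^N;\mathbf{z})+\kl(\mathcal{P}_{\Theta^N}\|\mathcal{P}_{\bigotimes_{i=1}^N\Theta})$, and then ``relax'' the inter-model KL term to $\tfrac{1}{\alpha}\log H_\alpha$. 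That relaxation is false: since $\tfrac{1}{\alpha}\log H_\alpha(P\|Q)=\tfrac{\alpha-1}{\alpha}D_\alpha(P\|Q)$, where $D_\alpha$ is the R\'enyi divergence, this quantity tends to $0$ as $\alpha\downarrow 1$ while $\kl(P\|Q)$ stays fixed, so $\kl\le\tfrac{1}{\alpha}\log H_\alpha$ fails for $\alpha$ close to $1$ whenever the surrogate models are actually dependent---yet the theorem is claimed for every $\alpha>1$. The alternative reading of your plan---applying the R\'enyi change of measure of Esposito et al.\ directly to the pair $\left(\mathcal{P}_{\mathbf{z},\boldsymbol{\theta}^N},\,\mathcal{P}_{\mathbf{z}}\otimes\mathcal{P}_{\bigotimes_{i=1}^N\Theta}\right)$---also does not work, because Hellinger integrals / R\'enyi divergences do not satisfy the additive chain rule you invoke; that identity is special to KL, so the full-joint divergence cannot be split into a mutual-information part plus an inter-model $H_\alpha$ part.

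The paper resolves exactly this difficulty with a hybrid, two-stage argument. First it applies Donsker--Varadhan with reference $\mathcal{P}_{\mathcal{Z}}\otimes\mathcal{P}_{\Theta^N}$, i.e.\ it decouples $\mathbf{z}$ from the ensemble but keeps the dependent joint law of the models in the reference; the KL that appears is then \emph{exactly} $I(\boldsymbol{\theta}^N;\mathbf{z})$, with no chain rule needed. Only afterwards, inside the moment generating function and conditionally on a fixed $\mathbf{z}$, does it change measure on $\boldsymbol{\theta}^N$ alone, from $\mathcal{P}_{\Theta^N}$ to $\mathcal{P}_{\bigotimes_{i=1}^N\Theta}$, via H\"older with exponents $\alpha$ and $\gamma$: this multiplies the MGF by $H_\alpha^{1/\alpha}\left(\mathcal{P}_{\Theta^N}\|\mathcal{P}_{\bigotimes_{i=1}^N\Theta}\right)$, contributing $\tfrac{1}{\alpha}\log H_\alpha$ additively to the log-MGF, and inflates the Hoeffding quadratic term by the factor $\gamma$, which is precisely where the $\tfrac{\alpha}{\alpha-1}$ in the displayed constant comes from. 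The TV term factors out because $\Delta_1$ (the gap between the population risks under $\mathcal{P}_{\Theta^N}$ and under $\mathcal{P}_{\bigotimes_{i=1}^N\Theta}$) depends only on $\mathbf{z}$, and a discriminant-in-$\lambda$ argument then yields the stated bound. So your decomposition and constants are right; the missing idea is that the two sources of dependence must be charged by two \emph{different} change-of-measure steps applied at two different levels (KL at the level of the expectation, Hellinger inside the MGF), not by a single change of measure followed by a chain rule.
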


In Theorem~\ref{theorem:information}:
$\Delta_N(\boldsymbol{\theta}, \mathbf{z})$ quantifies how effectively the surrogate models represent all possible target models. Taking the expectation of $\Delta_N(\boldsymbol{\theta}, \mathbf{z})$ over $\mathbf{z}$ and $\boldsymbol{\theta}^N$ accounts for the inherent randomness in both adversarial examples and surrogate models.
The mutual information $I\left(\boldsymbol{\theta}^N;\mathbf{z}\right)$ quantifies how much information about the surrogate models is retained in the adversarial example. Intuitively, higher mutual information indicates that the adversarial example is overly tailored to the surrogate models, capturing specific features of these models. This overfitting reduces its ability to generalize and transfer effectively to other target models. By controlling the complexity of the surrogate models, the specific information captured by the adversarial example can be limited, encouraging it to rely on broader, more transferable patterns rather than model-specific details. This reduction in overfitting enhances the adversarial example's transferability to diverse target models.
The TV distance $\tv \left( \mathcal{P}_{\Theta^N} \| \mathcal{P}_{\bigotimes_{i=1}^N \Theta} \right)$ and the Hellinger integral $H_\alpha \left(\mathcal{P}_{\Theta^N} \| \mathcal{P}_{\bigotimes_{i=1}^N \Theta}\right)$ capture the interdependence among the surrogate models.

Theorem~\ref{theorem:information} reveals that the following strategies contribute to a tighter bound:
1) Increasing the number of surrogate models, i.e., increasing $N$;
2) Reducing the model complexity of surrogate models, i.e., reducing $I\left(\boldsymbol{\theta}^N;\mathbf{z}\right)$;
3) Making the surrogate models more diverse, i.e., reducing $\tv \left( \mathcal{P}_{\Theta^N} \| \mathcal{P}_{\bigotimes_{i=1}^N \Theta} \right)$ and $H_\alpha \left(\mathcal{P}_{\Theta^N} \| \mathcal{P}_{\bigotimes_{i=1}^N \Theta}\right)$.
A tighter bound ensures that an adversarial example maximizing the loss function on the surrogate models will also lead to a high loss on the target models, thereby enhancing transferability.

\begin{proof}
According to Donsker and Varadhan’s variational formula, for any $\lambda \in \mathbb{R}$, there holds:
\begin{equation} \label{donsker_result}
    \kl(\mathcal{P}_{\mathcal{Z},\Theta^N} \| \mathcal{P}_{\mathcal{Z}} \otimes \mathcal{P}_{\Theta^N}) \ge \lambda \mathbb{E}_{\mathbf{z},\boldsymbol{\theta}^N \sim \mathcal{P}_{\mathcal{Z},\Theta^N}} \Delta_N(\boldsymbol{\theta},\mathbf{z}) - \log \mathbb{E}_{\mathbf{z} \sim \mathcal{P}_{\mathcal{Z}}} \mathbb{E}_{\boldsymbol{\theta}^N \sim \mathcal{P}_{\Theta^N}} \left[ e^{\lambda \Delta_N(\boldsymbol{\theta},\mathbf{z})} \right].
\end{equation}

Fix $z \in \mathcal{Z}$, 
\begin{align}
\mathbb{E}_{\boldsymbol{\theta}^N \sim \mathcal{P}_{\Theta^N}} \left[ e^{\lambda \Delta_N(\boldsymbol{\theta},\mathbf{z})} \right] & = \int e^{\lambda \Delta_N(\boldsymbol{\theta},\mathbf{z})} d \mathcal{P}_{\Theta^N} \nonumber \\ & = \int e^{\lambda \Delta_N(\boldsymbol{\theta},\mathbf{z})} \frac{d \mathcal{P}_{\Theta^N}}{d \mathcal{P}_{\bigotimes_{i=1}^N \Theta}} d \mathcal{P}_{\bigotimes_{i=1}^N \Theta} \nonumber \\ & \le \left( \int e^{\frac{\alpha}{\alpha-1} \lambda \Delta_N(\boldsymbol{\theta},\mathbf{z})} d \mathcal{P}_{\bigotimes_{i=1}^N \Theta} \right)^{\frac{\alpha-1}{\alpha}} \left( \int \left( \frac{d \mathcal{P}_{\Theta^N}}{d \mathcal{P}_{\bigotimes_{i=1}^N \Theta}} \right)^{\alpha} d \mathcal{P}_{\bigotimes_{i=1}^N \Theta} \right)^{\frac{1}{\alpha}} \nonumber \\ & = \left( \int e^{\frac{\alpha}{\alpha-1} \lambda \Delta_N(\boldsymbol{\theta},\mathbf{z})} d \mathcal{P}_{\bigotimes_{i=1}^N \Theta} \right)^{\frac{\alpha-1}{\alpha}} H_\alpha^{\frac{1}{\alpha}} (\mathcal{P}_{\Theta^N} \| \mathcal{P}_{\bigotimes_{i=1}^N \Theta}). \label{ineq:pac_bayes_1}
\end{align}
The third line uses Hölder's inequality, while the last line follows Definition~\ref{def:hellinger}.
Now we deal with the first term. 
Denote
\begin{align*}
& \Delta_1 = \mathbb{E}_{\overline{\boldsymbol{\theta}}^N \sim \mathcal{P}_{\Theta^N}} \left[ \frac{1}{N}\sum_{i=1}^N \ell(f(\overline{\boldsymbol{\theta}}_i;\mathbf{x}), \mathbf{y}) \right] - \mathbb{E}_{\overline{\boldsymbol{\theta}}^N \sim \mathcal{P}_{\bigotimes_{i=1}^N \Theta}} \left[ \frac{1}{N}\sum_{i=1}^N \ell(f(\overline{\boldsymbol{\theta}}_i;\mathbf{x}), \mathbf{y}) \right], \\
& \Delta_2 = \mathbb{E}_{\overline{\boldsymbol{\theta}}^N \sim \mathcal{P}_{\bigotimes_{i=1}^N \Theta}} \left[ \frac{1}{N}\sum_{i=1}^N \ell(f(\overline{\boldsymbol{\theta}}_i;\mathbf{x}), \mathbf{y})\right] - \frac{1}{N}\sum_{i=1}^N \ell(f(\boldsymbol{\theta}_i;\mathbf{x}), \mathbf{y}).
\end{align*}
Notice that
\begin{align}
|\Delta_1| & = \left| \iint \cdots \int \left[ \frac{1}{N}\sum_{i=1}^N \ell(f(\overline{\boldsymbol{\theta}}_i;\mathbf{x}), \mathbf{y}) \right] \left[ \mathcal{P}_{\Theta^N}(\overline{\boldsymbol{\theta}}_1, \cdots, \overline{\boldsymbol{\theta}}_N) - \mathcal{P}_{\bigotimes_{i=1}^N \Theta}(\overline{\boldsymbol{\theta}}_1, \cdots, \overline{\boldsymbol{\theta}}_N) \right] d \overline{\boldsymbol{\theta}}_1 \cdots d \overline{\boldsymbol{\theta}}_N \right| \nonumber
\\ & \le \beta \iint \cdots \int \left| \mathcal{P}_{\Theta^N}(\overline{\boldsymbol{\theta}}_1, \cdots, \overline{\boldsymbol{\theta}}_N) - \mathcal{P}_{\bigotimes_{i=1}^N \Theta}(\overline{\boldsymbol{\theta}}_1, \cdots, \overline{\boldsymbol{\theta}}_N) \right| d \overline{\boldsymbol{\theta}}_1 \cdots d \overline{\boldsymbol{\theta}}_N \nonumber
\\ & = \beta \int \left| \mathcal{P}_{\Theta^N}\left(\overline{\boldsymbol{\theta}}^N \right) - \mathcal{P}_{\bigotimes_{i=1}^N \Theta}\left(\overline{\boldsymbol{\theta}}^N\right) \right| d \overline{\boldsymbol{\theta}}^N \nonumber
\\ & \le 2 \beta \cdot \tv \left( \mathcal{P}_{\Theta^N} \| \mathcal{P}_{\bigotimes_{i=1}^N \Theta} \right). \label{delta_1_bound}
\end{align}

Also, 
\begin{align}
\int \left(e^{\frac{\alpha}{\alpha-1} \lambda \Delta_2} \right) d \mathcal{P}_{\bigotimes_{i=1}^N \Theta} = & \mathbb{E}_{\boldsymbol{\theta}^N \sim \mathcal{P}_{\bigotimes_{i=1}^N \Theta}} \left[ e^{\frac{\alpha}{\alpha-1} \lambda \Delta_2} \right] \nonumber
\\ = & \prod_{i=1}^{N} \mathbb{E}_{\boldsymbol{\theta}_i \sim \mathcal{P}_{\Theta}} \left[ \exp{\left(\frac{\alpha \lambda}{\alpha-1} \left(\mathbb{E}_{\overline{\boldsymbol{\theta}}_i \sim \mathcal{P}_{\Theta}} \left[ \frac{1}{N} \ell(f(\overline{\boldsymbol{\theta}}_i;\mathbf{x}), \mathbf{y})\right] - \frac{1}{N} \ell(f(\boldsymbol{\theta}_i;\mathbf{x}), \mathbf{y})\right)\right)} \right] \nonumber
\\ \le & \prod_{i=1}^{N} \exp{\left( \frac{\alpha^2}{8(\alpha-1)^2 N^2} \lambda^2 \beta^2 \right)}. \nonumber
\\ \le & \exp{\left( \frac{\alpha^2}{8(\alpha-1)^2 N} \lambda^2 \beta^2 \right)}.\label{delta_2_bound}
\end{align}
The third line is due to Hoeffding's Lemma (using it for each $\boldsymbol{\theta}_i$).
Therefore, recall the fact that $\Delta_N(\boldsymbol{\theta},\mathbf{z}) = \Delta_1 + \Delta_2$, we have
\begin{align*}
\int e^{\frac{\alpha}{\alpha-1} \lambda \Delta_N(\boldsymbol{\theta},\mathbf{z})} d \mathcal{P}_{\bigotimes_{i=1}^N \Theta} & = \int \left(e^{\frac{\alpha}{\alpha-1} \lambda \Delta_1} \cdot e^{\frac{\alpha}{\alpha-1} \lambda \Delta_2} \right) d \mathcal{P}_{\bigotimes_{i=1}^N \Theta} \\ & \le \exp{\left(\frac{2\lambda \alpha \beta}{\alpha-1}  \tv \left( \mathcal{P}_{\Theta^N} \| \mathcal{P}_{\bigotimes_{i=1}^N \Theta} \right) \right)} \int e^{\frac{\alpha}{\alpha-1} \lambda \Delta_2} d \mathcal{P}_{\bigotimes_{i=1}^N \Theta} \tag{Using~(\ref{delta_1_bound})} \\ & \le \exp{\left(\frac{2\lambda \alpha \beta}{\alpha-1}  \tv \left( \mathcal{P}_{\Theta^N} \| \mathcal{P}_{\bigotimes_{i=1}^N \Theta} \right)  + \frac{\alpha^2}{8(\alpha-1)^2 N} \lambda^2 \beta^2\right)} \tag{Using~(\ref{delta_2_bound})}
\end{align*}

With the above results, we obtain the following:
\begin{align*}
\log \mathbb{E}_{\mathbf{z} \sim \mathcal{P}_{\mathcal{Z}}} \mathbb{E}_{\boldsymbol{\theta}^N \sim \mathcal{P}_{\Theta^N}} \left[ e^{\lambda \Delta_N(\boldsymbol{\theta},\mathbf{z})} \right] \le 2 \lambda \beta \cdot \tv \left( \mathcal{P}_{\Theta^N} \| \mathcal{P}_{\bigotimes_{i=1}^N \Theta} \right) + \frac{\alpha}{8(\alpha-1)N} \lambda^2 \beta^2 + \log H_\alpha^{\frac{1}{\alpha}} (\mathcal{P}_{\Theta^N} \| \mathcal{P}_{\bigotimes_{i=1}^N \Theta}).
\end{align*}
Substitute the above into~\eqref{donsker_result}, we have
\begin{multline*}
\frac{\alpha}{8(\alpha-1)N} \beta^2 \lambda^2 + \left( 2 \beta \cdot \tv \left( \mathcal{P}_{\Theta^N} \| \mathcal{P}_{\bigotimes_{i=1}^N \Theta} \right) - \mathbb{E}_{\mathbf{z},\boldsymbol{\theta}^N \sim \mathcal{P}_{\mathcal{Z},\Theta^N}} \Delta_N(\boldsymbol{\theta},\mathbf{z}) \right) \lambda + \\ \kl(\mathcal{P}_{\mathcal{Z},\Theta^N} \| \mathcal{P}_{\mathcal{Z}} \otimes \mathcal{P}_{\Theta^N}) + \log H_\alpha^{\frac{1}{\alpha}} (\mathcal{P}_{\Theta^N} \| \mathcal{P}_{\bigotimes_{i=1}^N \Theta}) \ge 0.
\end{multline*}
Let the discriminant of the quadratic function with respect to $\lambda$ be less than or equal to $0$, leading to:
\begin{multline} \label{abs:diff_discriminant}
\left| 2 \beta \cdot \tv \left( \mathcal{P}_{\Theta^N} \| \mathcal{P}_{\bigotimes_{i=1}^N \Theta} \right) - \mathbb{E}_{\mathbf{z},\boldsymbol{\theta}^N \sim \mathcal{P}_{\mathcal{Z},\Theta^N}} \Delta_N(\boldsymbol{\theta},\mathbf{z}) \right| \le \\ \sqrt{\frac{\alpha \beta^2}{2 (\alpha-1) N} \left( \kl \left(\mathcal{P}_{\mathcal{Z},\Theta^N} \| \mathcal{P}_{\mathcal{Z}} \otimes \mathcal{P}_{\Theta^N}\right) + \frac{1}{\alpha}\log H_\alpha \left(\mathcal{P}_{\Theta^N} \| \mathcal{P}_{\bigotimes_{i=1}^N \Theta}\right) \right)}.
\end{multline}
In other words,
\begin{multline*}
\left| \mathbb{E}_{z,\boldsymbol{\theta}^N \sim \mathcal{P}_{\mathcal{Z},\Theta^N}} \Delta_N(\boldsymbol{\theta},z) \right| \le 2 \beta \cdot \tv \left( \mathcal{P}_{\Theta^N} \| \mathcal{P}_{\bigotimes_{i=1}^N \Theta} \right) + \\ \sqrt{\frac{\alpha \beta^2}{2 (\alpha-1) N} \left( \kl \left(\mathcal{P}_{\mathcal{Z},\Theta^N} \| \mathcal{P}_{\mathcal{Z}} \otimes \mathcal{P}_{\Theta^N}\right) + \frac{1}{\alpha}\log H_\alpha \left(\mathcal{P}_{\Theta^N} \| \mathcal{P}_{\bigotimes_{i=1}^N \Theta}\right) \right)}.
\end{multline*}

Finally, substitute $I\left(\boldsymbol{\theta}^N;\mathbf{z}\right)=\kl \left(\mathcal{P}_{\mathcal{Z},\Theta^N} \| \mathcal{P}_{\mathcal{Z}} \otimes \mathcal{P}_{\Theta^N}\right)$ into above and we can get the desired result.
\end{proof}

\section{Further Experiments} \label{appendix:exp}

\subsection{Evaluation on CIFAR-100} \label{appendix:exp_cifar}

Following the same setting in our experiments, we further validate the vulnerability-diversity decomposition on the CIFAR-100~\citep{krizhevsky2009learning} dataset. The results are shown in \cref{fig:one_cifar100}.
As the model becomes stronger (i.e., a smaller $\lambda$), the three metrics (ASR, loss and variance) increases, validating the soundness of vulnerability-diversity decomposition.

\begin{figure}[t]
\centering
\subfigure[MLP]{
\includegraphics[width=0.47\textwidth]{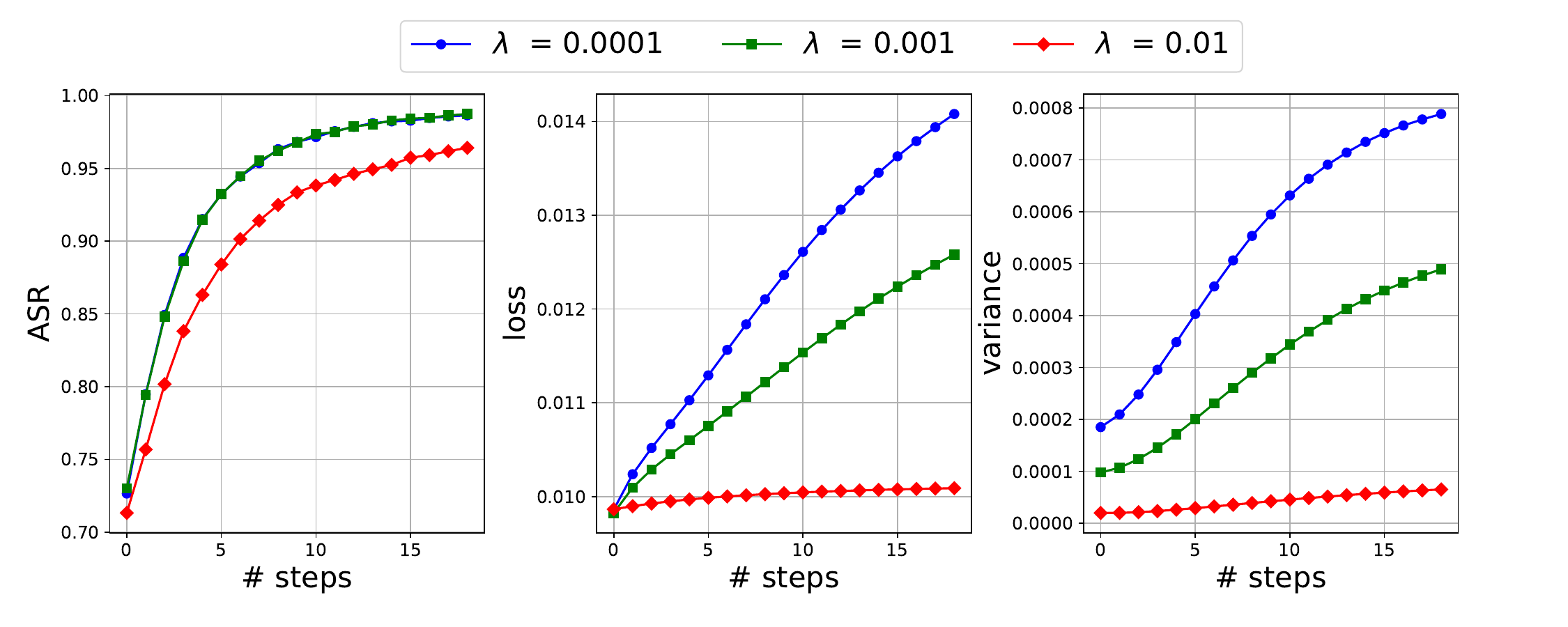}
\label{subfig:cifar100_mlp_con1}
}
\subfigure[CNN]{
\includegraphics[width=0.47\textwidth]{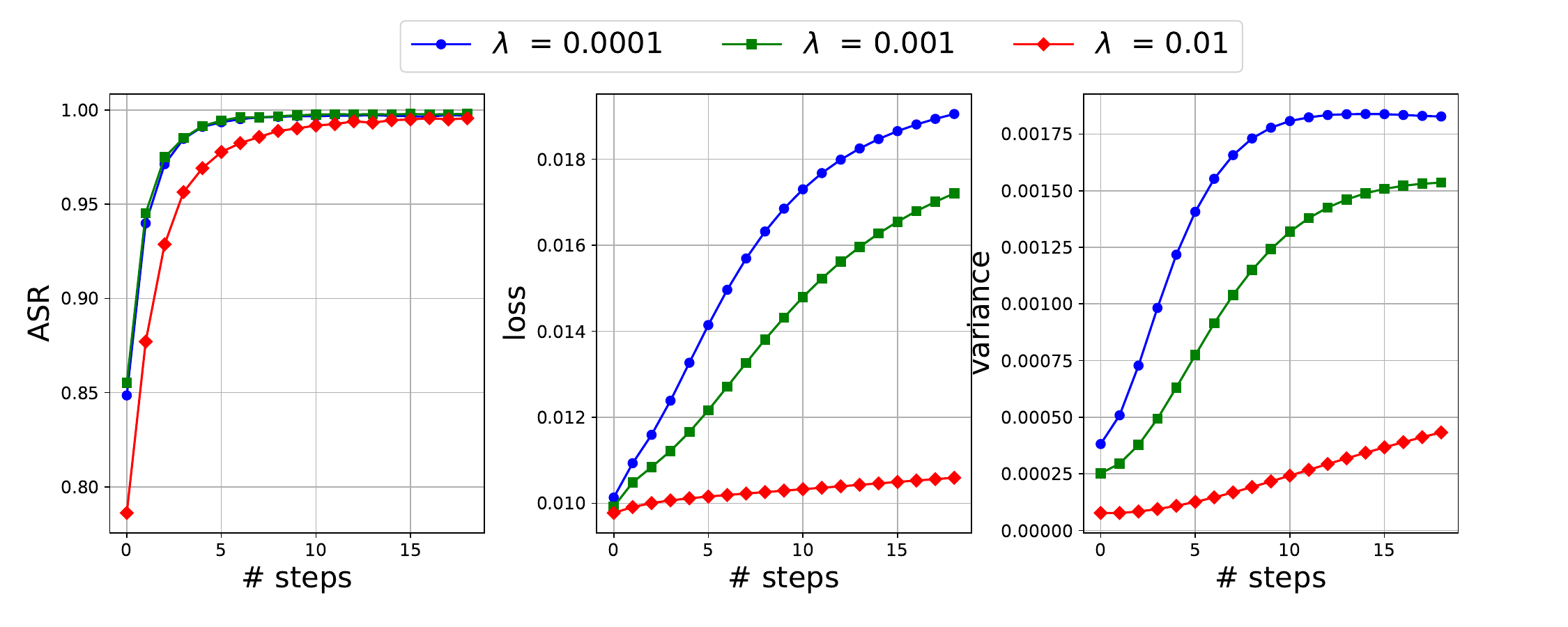}
\label{subfig:cifar100_cnn_con1}
}
\vspace{-5pt}
\caption{Evaluation of ensemble attacks with increasing the number of steps using MLPs and CNNs on the CIFAR-100 dataset.}
\vspace{10pt}
\label{fig:one_cifar100} 
\end{figure}

\subsection{Further Investigation into Model Complexity}
\label{appendix:exp_maxnorm}

We conduct a deeper investigation into the role of model complexity by applying a max norm constraint to the model parameters. Specifically, we constrain the $L_2$ norm of each weight vector to a predefined threshold, effectively limiting the model's capacity.
Empirically, larger max norm values allow for more expressive feature representations but may increase the risk of overfitting. In contrast, smaller max norms encourage simpler models and reduce overfitting but may also lead to underfitting due to restricted representational power. 
The validation of this trade-off is illustrated in \cref{tab:maxnorm_results}, which shows the classification accuracy across a range of max norm values for both MLP and CNN architectures with varying depths. Lower accuracy values indicate stronger adversarial attack performance.

\begin{table}[t]
\centering
\caption{Effect of varying max norm constraints on adversarial attack performance, measured by classification accuracy (\%, lower is better). FC and CNN denote fully connected and convolutional networks with increasing layers.}
\vspace{5pt}
\begin{tabular}{lccccccc}
\toprule
\textbf{Max Norm} & \textbf{FC1} & \textbf{FC2} & \textbf{FC3} & \textbf{CNN1} & \textbf{CNN2} & \textbf{CNN3} & \textbf{Avg} \\
\midrule
0.1 & 84.66 & 87.80 & 85.39 & 97.57 & 98.31 & 98.59 & 92.05 \\
0.5 & 59.37 & 68.31 & 74.05 & 96.50 & 97.66 & 98.34 & 82.37 \\
1.0 & 64.31 & 55.27 & 57.12 & 95.37 & 97.08 & 97.93 & 77.85 \\
2.0 & 68.00 & 57.40 & 57.86 & 95.41 & 97.04 & 97.87 & 78.93 \\
4.0 & 68.19 & 57.94 & 58.12 & 95.53 & 97.00 & 97.85 & 79.11 \\
5.0 & 69.68 & 59.40 & 59.26 & 97.48 & 98.02 & 98.87 & 80.45 \\
\bottomrule
\end{tabular}
\label{tab:maxnorm_results}
\vspace{10pt}
\end{table}

The results reveal a consistent trend: as the max norm constraint is relaxed from a highly restrictive value (\textit{e.g.}, 0.1) to a moderate level (\textit{e.g.}, 5.0), the effectiveness of adversarial attacks improves and then declines. This observation indicates that overly tight constraints can impair model expressiveness, while moderately relaxed constraints can achieve a better trade-off between simplicity and capacity. These findings empirically support our theoretical claim that weight regularization, \textit{e.g.}, via weight decay or norm bounds,  directly influences model complexity and, consequently, adversarial transferability.

\subsection{Experiments on ImageNet}
\label{appendix:exp_imagenet}

To further investigate how appropriately controlling the complexity of surrogate models contributes to effective adversarial attack algorithms---in line with our theoretical insights---we conduct additional experiments on ImageNet~\citep{russakovsky2015imagenet}. 
For model ensemble attacks, we fine-tune several surrogate models, including VGG16~\citep{simonyan2014very}, Inception-V3~\citep{Szegedy_2016_CVPR}, and Visformer~\citep{chen2021visformer}, using a sparse Softmax cross-entropy loss~\citep{martins2016softmax}. This modification encourages sparsity in the model’s output distribution. As shown in \cref{tab:sparse_softmax_results}, this approach leads to a reduction in model complexity, as indicated by the decreased $L_2$ norm of the weights.

\begin{table}[ht]
\vspace{-10pt}
\caption{Comparison of model complexity between original and sparse Softmax loss variants on different backbones. Lower values indicate reduced $L_2$ norm of weights.}
\centering
\vspace{10pt}
\begin{tabular}{lccc}
\toprule
& \textbf{VGG16} & \textbf{Visformer} & \textbf{InceptionV3} \\
\midrule
Original              & 37.37 & 25.94 & 49.24 \\
Sparse Softmax Loss   & 33.12 & 20.60 & 48.53 \\
\bottomrule
\end{tabular}
\label{tab:sparse_softmax_results}
\end{table}

We then leverage these sparsified models for ensemble attacks by applying MI-FGSM~\citep{dong2018boosting}, SVRE~\citep{xiong2022stochastic}, and SIA~\citep{wang2023structure} to both the original and sparsified versions, resulting in MI-FGSM-S, SVRE-S, and SIA-S, respectively. The transferability of these attacks is evaluated on a range of target models, and the results are presented in \cref{tab:attack_comparison}. 
As observed, these sparsified variants consistently outperform their standard counterparts in most cases, validating the advantage of model complexity control for enhancing adversarial transferability. This improvement holds across both CNN-based and transformer-based architectures. Beyond this example, our findings may inspire the design of stronger adversarial attack strategies that systematically exploit sparsity and simplicity in surrogate modeling.

\begin{table}[ht]
\centering
\vspace{-8pt}
\caption{Transferability results of different attack methods across various target models. Bold entries indicate improved or top-performing variants.}
\vspace{5pt}
\begin{tabular}{lcccccccc}
\toprule
& \textbf{ResNet50} & \textbf{VGG16} & \textbf{MobileNetV2} & \textbf{InceptionV3} & \textbf{ViT-B16} & \textbf{PiT-B} & \textbf{Visformer} & \textbf{Swin-T} \\
\midrule
MI-FGSM         & 66.0  & \textbf{99.9} & 76.8  & 97.5 & 37.3 & 53.8 & 88.9 & 66.7 \\
\textbf{MI-FGSM-S} & \textbf{68.9} & 99.7 & \textbf{79.2} & \textbf{99.1} & \textbf{39.0} & \textbf{54.5} & \textbf{90.6} & \textbf{68.1} \\
SVRE            & 65.2  & \textbf{99.9} & 79.0  & 98.6 & 32.4 & 49.2 & 90.3 & 64.3 \\
\textbf{SVRE-S}    & \textbf{66.9} & \textbf{99.9} & \textbf{81.2} & \textbf{98.9} & \textbf{34.2} & \textbf{51.3} & \textbf{93.0} & \textbf{65.9} \\
SIA             & 97.2  & \textbf{100.0} & \textbf{98.4} & \textbf{99.7} & 75.9 & 91.9 & 90.0 & 96.1 \\
\textbf{SIA-S}     & \textbf{98.1} & \textbf{100.0} & 98.2  & 99.6 & \textbf{79.2} & \textbf{93.2} & \textbf{99.5} & \textbf{97.5} \\
\bottomrule
\end{tabular}
\label{tab:attack_comparison}
\vspace{10pt}
\end{table}


\section{Further Discussion}

\subsection{Analyze Empirical Model Ensemble Rademacher Complexity} \label{appendix:analyze_rad}

In particular, we present detailed analysis for the simple and complex cases below, within the context of transferable model ensemble attack. 

\paragraph{The simple input space.}
Firstly, consider the trivial case where the input space contains too simple examples so that all classifiers correctly classify $(x,y) \in \mathcal{Z}$. Then there holds
$\mathcal{R}_{N}(\mathcal{Z}) = \ell(y,y) \mathop{\mathbb{E}}\limits_{ \boldsymbol{\sigma}} \left[ \frac{1}{N} \sum_{i=1}^N \sigma_i \right] = 0$.
In this case, $\mathcal{Z}$ is simple enough for $f_1, \cdots, f_N$. Such $\mathcal{Z}$ corresponds to a $\mathcal{R}_{N}(\mathcal{Z})$ close to $0$.
However, it is important to note that an overly simplistic space $\mathcal{Z}$ may be impractical for model ensemble attack: 
the adversarial examples in such a space may not successfully attack the models from $D$, leading to a small value of $L_P(z^*)$.
In other words, the existence of transferable adversarial examples implicitly imposes constraints on the minimum complexity of $\mathcal{Z}$.

\paragraph{The complex input space.}
Secondly, we consider the complex case.
In particular, given arbitrarily $N$ models in $\mathcal{H}$ and any assignment of $\boldsymbol{\sigma}$, 
a sufficiently complex $\mathcal{Z}$ contains all kinds of examples that make $\mathcal{R}_{N}(\mathcal{Z})$ large: (1) If $\sigma_i = +1$, there are adversarial examples that can successfully attack $f_i$ and leads to a large $\sigma_i \ell(f_i(x),y)$; (2) If $\sigma_i = -1$, there exists some examples that can be correctly classified by $f_i$, leading to $\sigma_i \ell(f_i(x),y)=0$. 
However, such a large $\mathcal{R}_{N}(\mathcal{Z})$ is also not appropriate for transferable model ensemble attack. It may include adversarial examples that perform well against $f_1, \cdots, f_N$ but are merely overfitted to the current $N$ surrogate models~\citep{rice2020overfitting,yu2022understanding}. 
In other words, these examples might not effectively attack other models in $\mathcal{H}$, thereby limiting their adversarial transferability.
The above analysis suggests that an excessively large or small $\mathcal{R}_{N}(\mathcal{Z})$ is not suitable for adversarial transferability. 
So we are curious to investigate the correlation between $\mathcal{R}_{N}(\mathcal{Z})$ and adversarial transferability, which comes to the analysis about the general case in Section~\ref{sec:empirical_rad_complex}.






\paragraph{Explain robust overfitting.} 
After a certain point in adversarial training, continued training significantly reduces the robust training loss of the classifier while increasing the robust test loss, a phenomenon known as robust overfitting~\citep{rice2020overfitting,yu2022understanding} (also linked to robust generalization~\citep{schmidt2018adversarially,yin2019rademacher}).
From the perspective in Section~\ref{sec:empirical_rad_complex}, the cause of this overfitting is the \textit{limited complexity of the input space relative to the classifier} used to generate adversarial examples during training. The adversarial examples become too simple for the model, leading to overfitting.
To mitigate this, we could consider generating more ``hard'' and ``generalizable'' adversarial examples to improve the model's generalization in adversarial training.
For a less transferable adversarial example $(x, y)$, it is associated with a small $L_P(z)$, which in turn makes $TE(z,\epsilon)$ large.

\subsection{Other Opinions on ``Diversity''} \label{appendix:paradox}

\subsubsection{Other Definitions}

There are other definitions of ``Diversity'' in transferable model ensemble adversarial attack.
For example, in~\citet{yang2021trs}, gradient diversity is defined using the cosine similarity of gradients between different models, and instance-level transferability is introduced, along with a bound for transferability. 
They use Taylor expansion to establish a theoretical connection between the success probability of attacking a single sample and the gradients of the models.
In~\citet{kariyappa2019improving}, inspired by the concept of adversarial subspace~\citep{tramer2017space}, diversity is defined based on the cosine similarity of gradients across different models. The authors aim to encourage models to become more diverse, thereby achieving “no overlap in the adversarial subspaces,” and provide intuitive insights to readers. Both papers define gradient diversity and explain its impact.

In contrast, our definition of diversity stems from the unified theoretical framework proposed in this paper. Specifically:
(1) We draw inspiration from statistical learning theory~\citep{shalev2010learnability,bartlett2002rademacher} on generalization, defining transferability error accordingly.
(2) Additionally, we are motivated by ensemble learning~\citep{abe2023pathologies,wood2023unified}, where we define diversity as the variation in outputs among different ensemble models. 
(3) Intuitively, when different models exhibit significant differences in their outputs for the same sample, their gradient differences during training are likely substantial as well. This suggests a potential connection between our output-based definition of diversity and the gradient-based definitions in previous work, which is worth exploring in future research.

\subsubsection{Conflicting Opinions}
We observe a significant and intriguing disagreement within the academic community concerning the role of ``diversity'' in transferable model ensemble attacks:
Some studies advocate for enhancing model diversity to produce more transferable adversarial examples. For instance, \citet{li2020learning} applies feature-level perturbations to an existing model to potentially create a huge set of diverse ``Ghost Networks''.
\citet{li2023making} emphasizes the importance of diversity in surrogate models and promotes attacking a Bayesian model to achieve desirable transferability. \citet{tang2024ensemble} supports the notion of improved diversity, suggesting the generation of adversarial examples independently from individual models.
In contrast, other researchers adopt a diversity-reduction strategy to enhance adversarial transferability. For example, \citet{xiong2022stochastic} focuses on minimizing gradient variance among ensemble models to improve transferability. Meanwhile, \citet{chen2023adaptive} introduces a disparity-reduced filter designed to decrease gradient variances among surrogate models in ensemble attacks.
Although all these studies reference ``diversity,'' their perspectives appear to diverge. 
In this paper, we advocate for increasing the diversity of surrogate models. 
However, we also recognize that diversity-reduction approaches have their merits. 
For instance, consider the vulnerability-diversity decomposition of transferability error presented in Theorem~\ref{theorem:diversity}. It suggests the presence of a vulnerability-diversity trade-off in transferable model ensemble attacks. 
In other words, we may need to prioritize either vulnerability or diversity to effectively reduce transferability error.
Diversity-reduction approaches aim to stabilize the training process, thereby increasing the ``bias.'' In contrast, diversity-promoting methods directly enhance ``diversity.'' This analysis, framed within our unified theoretical framework, provides insight into the differing opinions regarding adversarial transferability in the academic community.

\subsection{Compare with A Previous Bound} \label{appendix:lemma_5}

Lemma 5 in~\citet{yang2021trs} offer complementary perspectives in the analysis of transferable adversarial attack. 
We first restate Lemma 5 in~\citet{yang2021trs} and our Theorem~\ref{theorem:diversity}. 
Our theoretical results and theirs offer complementary perspectives in the analysis of transferable adversarial attack.

\textbf{Lemma 5} (\citet{yang2021trs}).
\textit{Let $f, g: \mathcal{X} \rightarrow \mathcal{Y}$ be classifiers, $\delta, \rho, \epsilon \in(0,1)$ be constants, and $\mathcal{A}(\cdot)$ be an attack strategy. Suppose that $f, g$ have risk at most $\epsilon$. Then}
$$\operatorname{Pr}(\mathcal{F}(\mathcal{A}(x)) \neq \mathcal{G}(\mathcal{A}(x))) \leq 2 \epsilon+\rho,$$
\textit{for a given random instance $x$ and $\mathcal{A}(\cdot)$ is $\rho$-conservative (TV distance between the adversarial example distribution and clean data distribution is less than $\rho$, which is defined as Definition 7 and 8 in~\citet{yang2021trs}).}

Lemma 5 states an intriguing conclusion: if two models exhibit low risk on the original data distribution and the distributional discrepancy between adversarial examples and the original data is small, the predictions of the two models on the same input will be close. In other words, for two well-performing models, if an attack strategy successfully targets one model, it is highly likely to succeed on the other. Lemma 5 thus describes the success rate of transferring an attack from one model to another.
In contrast, Theorem~\ref{theorem:diversity} demonstrates that if the ensemble models exhibit significant output differences on the same input, the resulting diverse ensemble is more effective at generating adversarial examples with reduced transferability.

To better clarify, let  A  denote the ensemble models generating adversarial examples and  B  the model being attacked. Comparing Lemma 5 and our work leads to the following reasoning:
Suppose  A  and  B  both fit the original data distribution well (i.e., the risk of  A  and  B  is bounded by $\epsilon$, as in Lemma 5).
As shown in our work, increasing ensemble diversity while keeping vulnerability constant reduces the transferability error of adversarial examples generated by the ensemble.
Many models in parameter space, such as A and B, are vulnerable to these adversarial examples. However, fitting both the original data distribution and the adversarial example distribution simultaneously becomes challenging, leading to a large distributional discrepancy.
This discrepancy enlarges $\rho$ in Lemma 5, thereby loosening its “conservative condition” and weakening its theoretical guarantee of successful transferability. Consequently, adversarial transferability decreases, which could be interpreted as a potential contradiction.

No actual contradiction exists between Lemma 5 and our work. Instead, they provide complementary analyses.
Lemma 5 provides an upper bound rather than an equality or lower bound. While an increase in $\rho$ loosens this upper bound, it does not necessarily imply that the left-hand side (i.e., transferability success) will increase. The significance of an upper bound lies in the fact that a tighter right-hand side suggests the potential for a smaller left-hand side. However, a looser upper bound does not necessarily imply that the left-hand side will increase. Therefore, while increasing ensemble diversity may loosen the upper bound in Lemma 5, it does not contradict the fundamental interpretation of it.
While Lemma 5 analyzes the trade-off between $\epsilon$ (model fit to the original data) and $\rho$ (distributional discrepancy), our work focuses on the trade-off between vulnerability and ensemble diversity. Together, they provide a comprehensive understanding of the factors influencing adversarial transferability.

We now further elucidate the relationship between our results and Lemma 5.
To minimize transferability error (as in our work), the adversarial transferability described by Lemma 5 may have stronger theoretical guarantees, requiring its upper bound to be tighter.
To tighten the bound in Lemma 5, either $\epsilon$ or $\rho$ must decrease. However, the two exhibit a trade-off:
\begin{itemize}
    \item If $\epsilon$ decreases,  A  and  B  fit the original data distribution better. However, beyond a certain point, the adversarial examples generated by  A  diverge significantly from the original data distribution, increasing $\rho$.
    \item If $\rho$ decreases, the adversarial example distribution becomes closer to the original data distribution. However, beyond a certain point,  A  exhibits similar losses on both distributions, resulting in a higher $\epsilon$.
\end{itemize}

Therefore, 
Lemma 5 indicates the potential trade-off between $\epsilon$ and $\rho$ in adversarial transferability, while our Theorem 1 emphasizes the trade-off between vulnerability and diversity.
By integrating the perspectives from both Lemma 5 and our findings, these results illuminate different facets of adversarial transferability, offering complementary theoretical insights.

\subsection{Compare with Generalization Error Bound} \label{appendix:compare_generalization}

We note that a key distinction between transferability error and generalization error lies in the \textit{independence assumption}. 
Conventional generalization error analysis relies on an assumption: each data point from the dataset is independently sampled~\citep{zou2023generalization,hu2023generalization}. 
By contrast, the surrogate models $f_1, \cdots, f_N$ for ensemble attack are usually trained on the datasets with similar tasks, e.g., image classification. 
In this case, such models tend to correctly classify easy examples while misclassify difficult examples~\citep{bengio2009curriculum}.
Consequently, such correlation indicates dependency~\citep{lancaster1963correlation}, suggesting that \textbf{\textit{we cannot assume these surrogate models behave independently for a solid theoretical analysis}}.
Additionally, there are alternative methods for analyzing concentration inequality in generalization error analysis that do not rely on the independence assumption~\citep{kontorovich2008concentration,mohri2008rademacher,lei2019data,zhang2019mcdiarmid}. 
However, such data-dependent analysis is either too loose~\citep{lampert2018dependency} (because it includes an additional additive factor that grows with the number of samples~\citep{esposito2024concentration}) or requires specific independence structure of data~\citep{zhang2024generalization} that may not align well with model ensemble attacks. 
Therefore, we uses the latest techniques of information theory~\citep{esposito2024concentration} about concentration inequality regarding dependency.

\subsection{Vulnerability-diversity Trade-off Curve} \label{appendix:vul_div_tradeoff}

The relationship between vulnerability and diversity, as discussed in Section~\ref{section::exp}, merits deeper exploration. 
Drawing on the parallels between the vulnerability-diversity trade-off and the bias-variance trade-off~\citep{geman1992neural}, 
we find that insights from the latter may prove valuable for understanding the former, and warrant further investigation.
The classical bias-variance trade-off suggests that as model complexity increases, bias decreases while variance rises, resulting in a U-shaped test error curve.
However, recent studies have revealed additional phenomena and provided deeper analysis~\citep{neal2018modern,neal2019bias,derumigny2023lower}, such as the double descent~\citep{belkin2019reconciling,nakkiran2021deep}.
Our experiments indicate that diversity does not follow the same pattern as variance in classical bias-variance trade-off. 
Nonetheless, there are indications within the bias-variance trade-off literature that suggest similar behavior might occur. 
For instance, \citet{yang2020rethinking} proposes that variance may exhibit a bell-shaped curve, initially increasing and then decreasing as network width grows. 
Additionally, \citet{lin2021causes} offers a meticulous understanding of variance through detailed decomposition, highlighting the influence of factors such as initialization, label noise, and training data.
Recent studies have even revealed that bias and variance can exhibit a concurrent relationship in deep learning models~\citep{chen2024on}.
Overall, the trend of variance in model ensemble attack remains a valuable area for future research.
We may borrow insights from machine learning literature (see the above papers and the references therein) to get a better understanding of this in future work.

\subsection{Insight for Model Ensemble Defense}

While our paper primarily focuses on analyzing model ensemble attacks, our theoretical findings can also provide valuable insights for model ensemble defenses:
(1) From a theoretical perspective, the vulnerability-diversity decomposition introduced for model ensemble attacks can likewise be extended to model ensemble defenses. Mathematically, this results in a decomposition similar to conclusions in ensemble learning (see Proposition 3 in~\citet{wood2023unified} and Theorem 1 in~\citet{ortega2022diversity}), which shows that within the adversarial perturbation region,
$\text{Expected loss} \leq \text{Empirical ensemble loss} - \text{Diversity}.$
Thus, to improve model robustness (reduce the expected loss within the perturbation region), the core strategy involves minimizing the ensemble defender’s loss or increasing diversity.
However, there is also an inherent trade-off between these two objectives: when the ensemble loss is sufficiently small, the model may overfit to the adversarial region, potentially reducing diversity; conversely, when diversity is maximized, the model may underfit the adversarial region, potentially increasing the ensemble loss. 
Therefore, from this perspective, our work provides meaningful insights for adversarial defense that warrant further analysis.
(2) From an algorithmic perspective, we can consider recently proposed diversity metrics, such as Vendi score~\citep{friedman2022vendi} and EigenScore~\citep{chen2024inside}. Following the methodology outlined in~\citet{deng2024understanding}, diversity can be incorporated into the defense optimization objective to strike a balance between diversity and ensemble loss. By finding an appropriate trade-off between these two factors, the effectiveness of ensemble defense may be enhanced.

\end{document}